\newcommand{\Mohammad}[1]{\textcolor{red}{#1}}
\def\hy#1{{\bf \color{blue}#1}}
\newtheorem{proposition}{Proposition}
\newtheorem{remark}{Remark}
\newtheorem{defn}{Definition}
\newcommand{\squishlist}{
   \begin{list}{$\bullet$}
    { \setlength{\itemsep}{0pt} \setlength{\parsep}{1pt}
      \setlength{\topsep}{1pt} \setlength{\partopsep}{1pt}
      \setlength{\leftmargin}{1.5em} \setlength{\labelwidth}{1em}
      \setlength{\labelsep}{0.5em} } }
\newcommand{\squishlisttwo}{
   \begin{list}{$\bullet$}
    { \setlength{\itemsep}{0pt} \setlength{\parsep}{0pt}
      \setlength{\topsep}{0pt} \setlength{\partopsep}{0pt}
      \setlength{\leftmargin}{1em} \setlength{\labelwidth}{1.5em}
      \setlength{\labelsep}{0.5em} } }
\newcommand{\squishend}{
    \end{list}  }
\newcommand{\argmax}{\arg\!\max}
\begin{document}
\title{Visual Polarization Measurement Using Counterfactual Image Generation}
\author{
      Mohammad Mosaffa\thanks{We would like to thank the participants of the 2024 PhD student workshop at Cornell University and the 2024 MarkTech Conference for their feedback. We also thank Sachin Gupta, Simha Mummalaneni, Mor Naaman, Hoori Rafieian, Jesse Shapiro, Zikun Ye, and Dennis Zhang for their detailed comments, which have significantly improved the paper. Please address all correspondence to: mm3322@cornell.edu, or83@cornell.edu, and hemay@uw.edu.}\\
      Cornell University \\
      \and 
      Omid Rafieian\footnotemark[1]\\
        Cornell University\\
        \and
        Hema Yoganarasimhan\footnotemark[1]\\
        University of Washington\\
 }
\date{}
\maketitle

\begin{abstract}
Political polarization is a significant issue in American politics, influencing public discourse, policy, and consumer behavior. While studies on polarization in news media have extensively focused on verbal content, non-verbal elements, particularly visual content, have received less attention due to the complexity and high dimensionality of image data. Traditional descriptive approaches often rely on feature extraction from images, leading to biased polarization estimates due to information loss. In this paper, we introduce the Polarization Measurement using Counterfactual Image Generation (\text{PMCIG}) method, which combines economic theory with generative models and multi-modal deep learning to fully utilize the richness of image data and provide a theoretically grounded measure of polarization in visual content. Applying this framework to a decade-long dataset featuring 30 prominent politicians across 20 major news outlets, we identify significant polarization in visual content, with notable variations across outlets and politicians. At the news outlet level, we observe significant heterogeneity in visual slant. Outlets such as \textit{Daily Mail}, \textit{Fox News}, and \textit{Newsmax} tend to favor Republican politicians in their visual content, while \textit{The Washington Post}, \textit{USA Today}, and \textit{The New York Times} exhibit a slant in favor of Democratic politicians. At the politician level, our results reveal substantial variation in polarized coverage, with Donald Trump and Barack Obama among the most polarizing figures, while Joe Manchin and Susan Collins are among the least. Finally, we conduct a series of validation tests demonstrating the consistency of our proposed measures with external measures of media slant that rely on non-image-based sources.


\noindent{\bf Keywords:} Polarization, News Media, Politics, Generative Models, Computer Vision, Counterfactual Reasoning

\end{abstract}

\thispagestyle{empty}

\newpage

\section{Introduction}

Political polarization has emerged as a central issue in American politics over the past few decades. Three out of ten Americans now consider polarization one of the most significant challenges facing the country \citep{skelley_fuong_2022}. When asked to describe the current political climate, the term ``divisive'' was the most frequent response, and similar terms like ``polarized'' and ``partisan'' were among the most common responses by Americans \citep{pew2023}. As individuals become more deeply rooted in their political identities, the potential for cross-party dialogue, compromise, and effective governance diminishes. Indeed, the impact of polarization extends beyond politics; it affects public policy, social cohesion, and the overall functioning of democratic institutions. Thus, it is important to have precise and systematic measures to study polarization. 

Several studies have focused on the verbal content in news articles as a rich source of information to propose measures for media bias and polarization and examine the demand-driven motives for news outlets to create content that matches the partisan preferences of their readers \citep{gentzkow2006media, gentzkow2010drives}. The research in this domain suggests that the choice of words and framing of issues can reveal accurate information about the speaker or the writer \citep{gentzkow2019measuring}. The existence of ideological slanting in the choice of words and framing poses important questions about the non-verbal aspect of news content. The non-verbal content conveys meaning through channels other than language, such as facial expression and body language. In news articles, the non-verbal content is often communicated through visual content such as images. Editors often pay great attention to the choice of visual content, as visuals are more memorable, processed more rapidly, and elicit stronger emotional responses than text \citep{sullivan1988happy, TownsendKahn2014, blanchard2023extraction}.\footnote{A large body of work in visual marketing and eye-tracking data demonstrates the value of visual information \citep{wedel_pieters_2007, chandon2009does, wedel2023modeling}.} Visuals not only shape engagement but also play a key role in spreading misinformation \citep{matatov2022stop} and influencing voter perceptions of competence and trustworthiness in politics \citep{hoegg2011impact}. Moreover, younger generations of consumers are showing a growing preference for news content that relies less on verbal and more on visual elements. Nevertheless, only a few studies have focused on the visual content to study polarization \citep{peng2018same, boxell2021slanted, ash2021visual, caprini2023visual}, and no prior work has utilized the richness and dimensionality of visual information. 

In this paper, we bridge this gap, and develop a framework to measure polarization and slant in visual content. Figure \ref{fig:motivation} shows a small sample of images used by {\it CNN} and {\it Fox News} to portray the 2024 presidential nominees, Donald Trump and Kamala Harris. Visually, we can see a more positive portrayal of Donald Trump (Kamala Harris) by {\it Fox News} ({\it CNN}). Our goal in this paper is to propose a method to systematically quantify this form of visual slanting. In particular, we seek to answer the following questions:
\begin{enumerate}
    \item How can we quantify political polarization in visual content, particularly in the images used in news articles?
    \item What is the extent of visual slant and polarization in mainstream media outlets? 
    \item How does the polarization in visual content vary across politicians and news outlets? 
\end{enumerate}

\begin{figure}[t]
    \centering
    \includegraphics[width=0.7\linewidth]{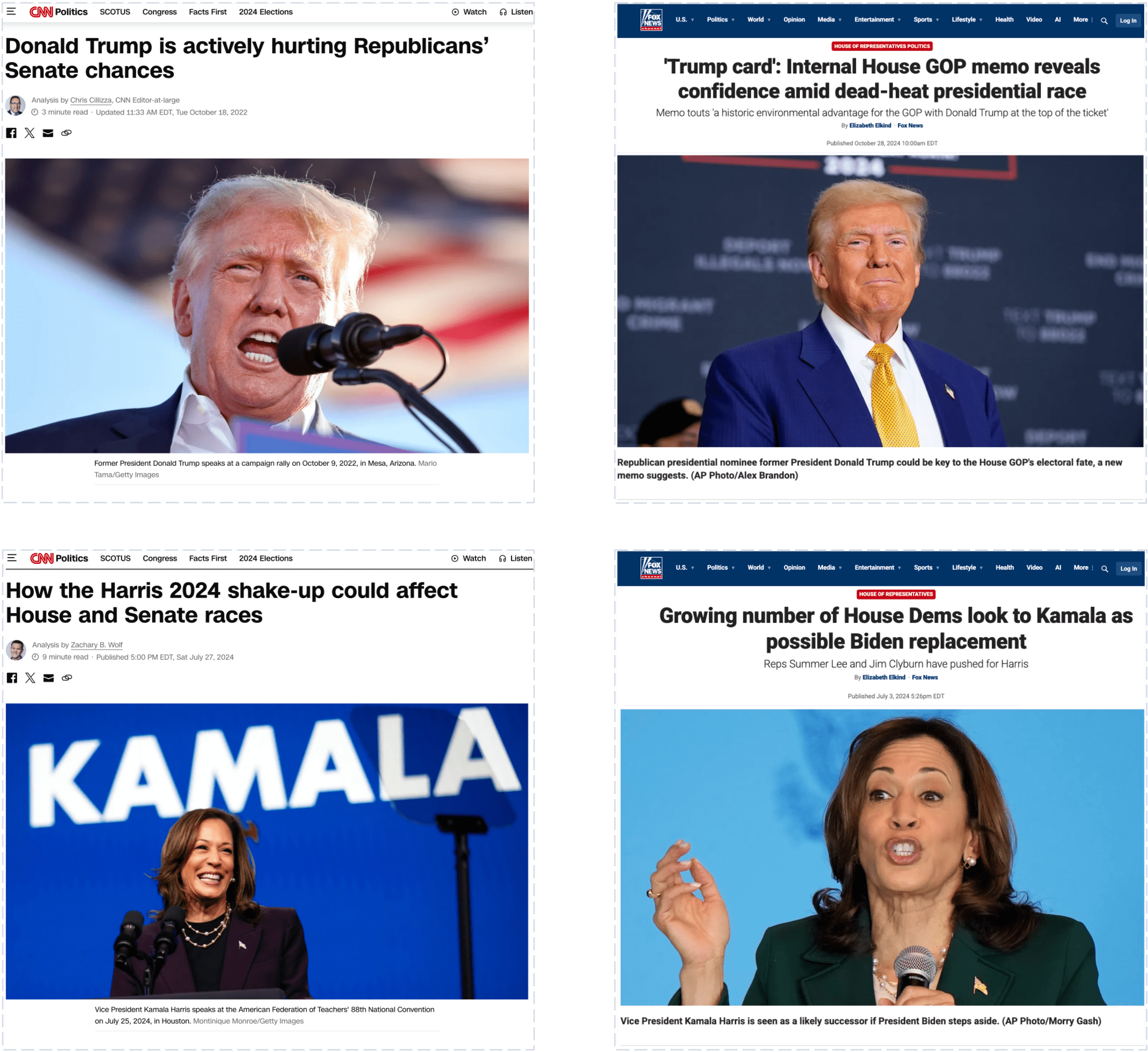}
    \caption{ \small Examples of 2024 presidential candidates' portrayal by {\it CNN} and {\it Fox News}.}
    \label{fig:motivation}
\end{figure}

There are three key challenges that we need to address to answer these questions satisfactorily. First, we need a formal definition of a metric or parameter of interest that captures polarization in visual content. To address this challenge, we turn to the very structure of the problem: the editor's choice of visual content. An ideologically slanted choice of image intuitively means that a news outlet prefers a more positive (negative) portrayal of a politician from the same (opposite) ideological side, compared to a neutral news outlet. We construct a flexible utility framework that models the editor's choice of images from a set of available options and focus on the smile as a focal feature we use to measure visual polarization given the consistent finding that a smile leads to a more positive portrayal of a politician \citep{sulflow2019power}. Using a utility framework allows us to compare the utility derived from two images that are on all aspects except the focal feature (e.g., smile) on which the visual polarization is measured. This comparison enables us to build a polarization measure centered around the focal feature of interest. For instance, consider two images of Donald Trump that differ only in whether he is smiling. For each outlet, we can define the difference in the utility from using the image with and without a smile. Intuitively, the greater the variability in this utility difference across outlets, the higher the level of political polarization in visual content. To capture this notion of variability for a given pair of news outlets, we define the difference in this utility difference as the \textit{visual polarization} measure between any pair of outlets. When one news outlet in the pair is a neutral outlet, the visual polarization measure reflects how slanted the image choice is by the other outlet, allowing us to characterize a \textit{visual slant} measure. Together, we develop a utility framework that allows us characterize both visual slant and visual polarization. 



Our second challenge stems from the richness and high dimensionality of the visual content. The common approach in the literature is to use an off-the-shelf machine learning model that extracts a certain feature (e.g., smile in an image) from the image \citep{peng2018same, boxell2021slanted}. However, this feature extraction approach relies solely on the selected feature and disregards other information in the image, which can introduce both \textit{omitted variable bias} and \textit{extraction bias} in the analysis \citep{wei2022unstructured}. We address this challenge by employing a generative approach that creates counterfactual versions of the same image with and without the feature of interest. This method enables us to retain all the information in the image and minimizes the degree to which other factors can confound our polarization measure. 

The third challenge lies in identifying our polarization parameter from the observed data, which is defined based on the utility functions of news outlets. This challenge arises because these utility functions are not directly identified from the data as we only observe the image selected by each outlet, but not the full choice set available to them. To address this issue, we leverage the variation in image choices across outlets for similar events (e.g., a specific press hearing), under the assumption that the choice set is nearly identical for all outlets covering the same event. For instance, for a similar event, both {\it Fox News} and {\it CNN} likely have access to almost identical sets of images sourced from common providers such as Getty Images or Associated Press, which allows us to assess whether one outlet derives greater utility from a positive portrayal of a politician compared to the other. We then theoretically link the identification of the polarization parameter to a news outlet prediction problem and develop a multi-modal deep learning model for this task. The model is designed to capture both the clustering structure in similar events and the subtle facial features that may reflect outlets' ideological preferences (if any). Finally, the estimates derived from this prediction task allow us to measure polarization at any desired level of granularity.

Together, we build a unified framework that combines the economic structure of the problem with generative models to generate comparable counterfactual images and measure polarization in visual content. The key advantage of our framework in comparison to the traditional regression-based approaches lies in its ability to overcome the potential confounding and mis-estimation of polarization due to information loss from feature extraction. Additionally, our generative strategy extends beyond the study of polarization and can be applied to other contexts involving visual data. Another key benefit of our framework is its ability to provide individual-level measures, enabling us to quantify the heterogeneity in visual polarization across politicians and news outlets.

We apply our framework to a comprehensive dataset comprising over 60,000 images of 30 prominent politicians from both the Republican and Democratic parties across 20 major news outlets over a 10-year span from 2011 to 2021. First, we employ a multi-modal deep learning model to predict the outlet of an image based on its visual information, the textual content of the article, and contextual data about the politician, year, and other relevant factors. We then employ Generative Adversarial Networks (GANs) to create sets of counterfactual images for all politicians with and without a smile. Using the model estimated in the initial step, we measure how the predicted probability of the image belonging to a particular outlet changes between these counterfactual images. Finally, we connect these differences to our visual slant measure and quantify the extent of polarization at both the aggregate and individual levels.

Our results suggest that both Democratic- and Republican-leaning outlets ideologically slant their visual content. Using Reuters as the base neutral news outlet, we find that compared to a neutral image of a Republican politician, a smiling image increases utility for a Republican-leaning news outlet and decreases utility for a Democratic-leaning outlet. Conversely, for Democratic politicians, a smiling image generates lower utility for Republican-leaning outlets and higher utility for Democratic-leaning outlets compared to the neutral image. These results suggest that news outlets exhibit a positive visual slant when covering politicians who share their ideological leanings and a negative visual slant when covering those with opposing views. We perform formal statistical tests and show that the distributions of visual slant are significantly different, highlighting the divide in their portrayal of politicians from either side. 

We then examine the heterogeneity of visual slant across news outlets, documenting substantial variation in how politicians are portrayed. For Democratic politicians, \textit{USA Today}, \textit{The New York Times}, and \textit{The Washington Post} exhibit the highest positive visual slant in their favor, while \textit{Daily Mail} and \textit{Fox News} display the strongest negative slant. For Republican politicians, the most positive visual slant scores appear in \textit{Daily Mail} and \textit{Newsmax}, whereas \textit{The Washington Post} and \textit{CNN} show the most negative slant. To quantify overall outlet-specific visual slant, we introduce a new measure, \textit{Conservative Visual Slant (CVS)}, which captures the degree to which an outlet's visual content favors Republican politicians while disadvantaging Democratic ones. {\it CVS} measure calculates the difference between an outlet's visual slant for Republican and Democratic politicians. Based on this measure, we identify \textit{Daily Mail}, \textit{Fox News}, and \textit{Newsmax} as outlets with the highest \textit{CVS} values, while \textit{The Washington Post}, \textit{USA Today}, and \textit{The New York Times} as the outlets the most negative \textit{CVS} values.

We also document significant variation in the degree of polarization across individual politicians. To capture this, we introduce a metric called {\it Overall Visual Polarization (OVP)}, which quantifies the standard deviation of a politician’s visual slant measures across media outlets. A higher {\it OVP} indicates greater dispersion in how different outlets visually present a politician, suggesting a more polarizing figure. On the Republican side, Donald Trump emerges as the most polarizing figure. That is, the gap in the extent of ideological slanting is remarkably large, with the Republican-leaning outlets receiving higher utility from using a positive and smiley portrayal of him compared to Democratic-leaning outlets. On the Democratic side, we find Barack Obama, Bernie Sanders, and Kamala Harris to be among the most polarizing figures who are portrayed very differently in Democratic-leaning and Republican-leaning outlets. Interestingly, Joe Manchin (D) and Susan Collins (R) rank among the least polarizing politicians, reflecting their reputations as moderates within their respective parties. We also observe low polarization levels for Liz Cheney, whose fallout with Donald Trump resulted in increased favorability among liberal outlets and decreased favorability among conservative outlets, ultimately contributing to reduced levels of visual polarization.

Lastly, we conduct a series of tests to validate the measures derived from our algorithm. First, we compare our {\it Conservative Visual Slant (CVS)} measure with existing measures of media slant from the prior literature \citep{flaxman2016filter, faris2017partisanship} using a series of correlation tests. Second, we compare the performance of our {\it CVS} measure against the visual slant metric proposed by \cite{boxell2021slanted} and demonstrate that our measure is more effective at capturing variations in external media slant indicators. Finally, we validate our {\it Overall Visual Polarization (OVP)} measure by showing a strong correlation between {\it OVP} scores and the ideological alignment of a politician’s constituency. Together, these validation tests underscore our algorithm’s ability to capture media polarization using images used by news outlets. 

In summary, our paper makes several contributions to the literature. Methodologically, we propose a framework that combines economic theory with generative models to provide robust measures of media bias and polarization in visual content. A key innovation of our framework is the use of Generative Adversarial Networks (GANs) in a way consistent with experimentation to generate counterfactual images based on a feature of interest, addressing the bias due to information loss present in social science studies that utilize image data. As such, our framework is general and applicable to all settings where researchers seek to quantify polarization on a given feature across a given set of images and outlets. Substantively, our work demonstrates the existence of ideological slanting in the visual content used by media outlets, with substantial heterogeneity observed across news outlets and politicians. 

\section{Related Literature}
\label{sec:related_lit}
Our paper relates to the study of political polarization in news media, a phenomenon well documented using text data. \citet{groseclose2005measure} quantifies media bias by examining the frequency with which different media outlets cite various think tanks, uncovering a persistent liberal bias. Similarly, \citet{gentzkow2010drives} investigate the factors driving media slant, highlighting the significant roles of consumer preferences and political affiliations by analyzing the alignment of newspaper language with political parties. \citet{jensen2012political} study the polarization of political discourse by analyzing records of Congressional speech and the Google Ngrams corpus, discovering a notable increase in discourse polarization since the late 1990s. Furthermore, \citet{gentzkow2019measuring} focus on the linguistic divide in Congressional speeches, showing how Democrats and Republicans increasingly use distinct vocabularies. While these studies predominantly focus on textual data, our work shifts the focus to polarization in visual content by examining how images in news articles contribute to this phenomenon.

More recently, research in this area has expanded from textual to visual content due to advances in computer vision techniques. \cite{peng2018same} use a two-stage approach with Azure Microsoft to extract facial expressions from 13,000 images of the 2016 election in the first stage and then analyze them through a regression model in the second stage, revealing bias toward politicians aligned with a news outlet's stance. \cite{boxell2021slanted} expand this by analyzing 70,000 images across more politicians and outlets. \cite{caprini2023visual} further extend this by generating and analyzing textual descriptions of images alongside news articles using Azure, applying \cite{gentzkow2019measuring} to show how the alignment of visual and textual bias amplifies polarization. Our study advances previous research in three significant ways. First, we enhance the traditional two-stage approach by addressing its inherent biases and issues with omitted variables, introducing a more robust methodology called Polarization Measurement Using Counterfactual Image Generation (PMCIG) that uses the rich information in the image content to measure the polarization in visual content. Second, our method delivers results that are both precise and available at a lower level of granularity, enabling us to quantify polarization at both the news outlet and politician levels and to track their evolution over the past decade. Third, we leverage a long term dataset spanning from 2011 to 2021, which allows us to capture long-term trends in political coverage that earlier studies may have missed.

From a methodological perspective, our work aligns with the growing trend in social science research that leverages computer vision techniques to analyze unstructured image data. One stream of work uses image data to generate descriptive insights on consumers and firms using novel measurement techniques \citep{dew2022letting, liu2020visual}. Another stream of work employs a two-stage approach and tries to connect image features to economic/marketing outcomes of interest by first, extracting features from images to create structured data and then applying statistical analysis to these features \citep{davenport2017analytics}. However, as discussed earlier, this approach has limitations, including potential information loss, endogeneity issues, and oversimplification due to parametric assumptions. Recently, researchers have started paying attention to this problem and proposing potential solutions. \citet{wei2022unstructured} identify biases in econometric models using machine-learned variables from unstructured data and propose solutions to improve accuracy. \citet{singh2023causal} introduce the RieszIV estimator, which incorporates high-dimensional unstructured data directly into causal analysis to manage endogeneity. \citet{xu2024unstructured} propose a debiased embedding framework that integrates representation learning with causal inference, addressing biases inherent in traditional embedding-then-inference frameworks. Additionally, \citet{luo2024using} employ GANs for Controllable Stimuli Generation (CSG), enabling precise manipulation of image attributes to isolate causal effects. Similarly, \citet{li2024product} advances AI-driven product design by integrating consumer preferences from internal data and external user-generated content into a generative framework, addressing the limitations of traditional GAN-based approaches. Building on these advancements, our PMCIG method combines GAN-based image manipulation with non-parametric deep learning models to more accurately quantify polarization in an image feature, holding all other features constant. 

\section{Setting and Data}
\label{sec:Data}
We collect publicly available data on images of politicians from media websites spanning a 10-year period for the study. Below, we describe our data collection, cleaning, and labeling strategy.

\subsection{Data Collection Strategy}
\label{ssec:data_collection}
We collect data on 30 politicians and 20 news outlets over a span of ten years, from 2011 to 2021. The set of politicians consists of those who ran for important public offices and/or held important national roles during the ten-year span of 2011--2021 and were commonly searched on Google (based on their popularity on Google searches using Google Trends data). The news outlets used in our study consist of the list of popular outlets that have been used in earlier studies on media polarization; see \citet{flaxman2016filter} for details. We refer readers to Web Appendix $\S$\ref{appssec:polnews} for a full list of politicians and outlets.


The data collection is done using the SerpAPI application \citep{serpapi}. SerpAPI facilitates efficient large-scale image scraping by leveraging Google's image search to extract relevant images and metadata. The process involves generating search queries based on specific criteria, such as targeted news outlets and date ranges, to extract images, article links, titles, and dates.\footnote{For each politician-outlet combination, we use a two-year rolling window that advances by one year at a time, resulting in 10 queries spanning a total of 10 years.} This approach was chosen because it provides a structured, efficient, and reliable interface for large-scale data collection, automating the process while ensuring compliance with Google's data access policies.

\begin{figure}[htp!]
    \centering
    \includegraphics[width=0.8\linewidth]{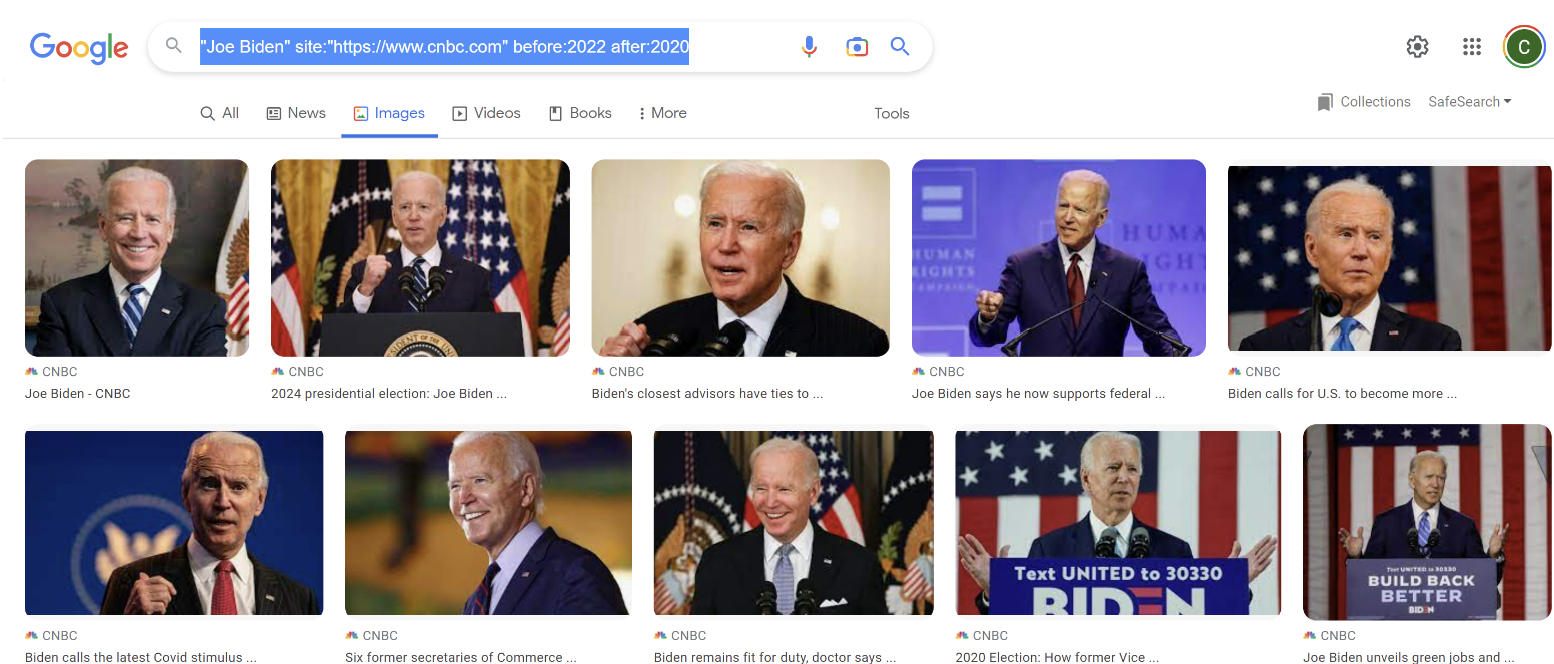}
    \caption{ \small Search results for ``Joe Biden" site: ``https://www.cnbc.com" before:2022 after:2020.}
    \label{fig:data_gathering}
\end{figure}

Figure \ref{fig:data_gathering} presents an example of a query we use for data collection, where we use ``site:" to restrict searches to specific news websites, ``before:" and ``after:" to filter results by publication date, and exact phrase searches to capture relevant content precisely. For each query, we aim to collect 80 news articles for each politician from each news outlet for the specified period.\footnote{Some queries retrieve fewer articles/images for certain politicians-outlet combinations because the outlet may not have published 80 images for that specific politician.} This information for each image in the query is compiled into a structured data frame with the following fields:
\squishlist
    \item Image: A URL pointing to the image.
    \item Alt: A short description or alternate text corresponding to the image.
    \item Href: A hyperlink where the related news article can be found.
    \item Title: The title of the news article or caption associated with the image.
    \item Query Parameter: The search query string utilized to retrieve the image and associated news, emphasizing the political figure and the source website along with a defined temporal span. 
\squishend
Overall, our data collection strategy gives us a comprehensive data set of 287,275 images for 30 politicians from 20 news outlets.

\subsection{Data Cleaning and Identifying Politicians}
\label{ssec:cleaning}
A crucial step is cleaning the data to ensure that each image contains the face of the politician referred to in the query. We face the following challenges in the data-cleaning step:
\squishlist
    \item Some images do not feature the intended politician but instead capture relevant scenes or contexts of the news without the individual's presence.
    \item Certain images, although retrieved under a specific individual's search query (e.g., Joe Biden), may inadvertently include another politician (e.g., Donald Trump), introducing cross-representation.
    \item Several images include multiple politicians, complicating the analysis of each politician's records.\footnote{We focus on single-face images to ensure that each news outlet’s selection reflects its portrayal of the intended politician, avoiding confounding effects from multiple individuals in the same image.}
\squishend
To address these challenges, we design a two-phase computer vision framework. We provide a brief overview of this framework here and refer readers to Web Appendix $\S$\ref{appssec:cleaning} for the technical details. In the first phase, we use a series of computer vision models to keep only images with one face presented in them. In the second phase, we build and deploy a face-verification tool to ensure that the face in an image belongs to the intended politician. First, we manually select 20 high-quality, single-face images for each of the 30 politicians. These selected images serve as true labels identifying the correct politician, ensuring a reliable foundation for training the face verification model. Next, we use the trained verification model for each instance in our one-face sample obtained from the first phase to verify that the predicted label for the face is the same as the intended politician. After applying the above data cleaning procedure, we are left with a set of 63,188 images, where each image shows a single face that belongs to the intended politician. Web Appendix $\S$\ref{appssec:cleaning} provides further details on our two-step model and includes a comprehensive table (Figure \ref{fig:2}) listing the number of images for each politician-outlet combination after the cleaning.

\section{Problem Definition}
\label{sec:prob_defn}

 \begin{figure}[htp!]
    \centering
    \includegraphics[width=0.75\linewidth]{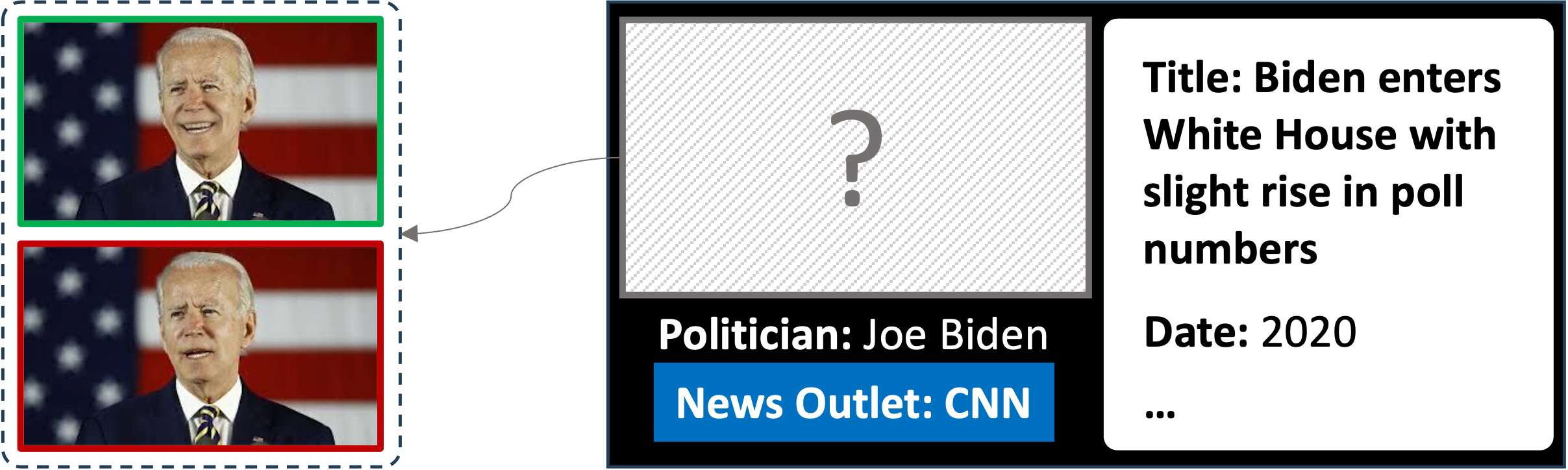}
    \caption{\small Example of an editor ({\it CNN}) selecting an image of Joe Biden from a set of two images, for a given news article.}
    \label{fig:News_Sample}
\end{figure}

Recall that our objective is to see if and how images of politicians in media outlets can be used to measure political polarization. As such, our goal here is to develop measures of {\it visual slant} and {\it visual polarization} that can be estimated from data on news articles (with images of politicians). 

Consider a dataset of news articles, $\mathcal{D}$. Each news article $i$ in this dataset is characterized by a tuple $(X_i, P_i, Y_i, Z_i)$, where $X_i$ represents the features associated with the key aspects of the article, such as its title text, topic, and publication date. $P_i$ refers to the politician who is the focal subject of the article, $Y_i$ is the news outlet that produces the article, and $Z_i$ is the image. $Z_i$ can be interpreted as detailed pixel-level information capturing all the relevant aspects of the image, such as its background, brightness, other objects present, and the facial expression of the politician $P_i$. 

Next, we define the image choice problem for a given article $i$ from the perspective of a news editor. Prior research has shown that images can have a significant impact on the extent to which readers engage with and click on news content \citep{MatiasEtAl2021}. As such, editors must carefully select images for each article, which involves choosing an image that appeals to the target audience and aligns with the editorial stance while ensuring that the visual elements complement the textual content \citep{jakesch2022belief}. Figure \ref{fig:News_Sample} shows an example news article, highlighting the decision stage where an editor picks an accompanying image for the given article.

Formally, let the editor or news outlet receive utility $U_i$ from producing article $i$ with characteristics $(X_i,P_i,Y_i,Z_i)$, which is defined as follows:
\begin{equation}\label{eq:util}
    U_i = u(Z_i,X_i,P_i,Y_i) + \xi_i,
\end{equation}
where $u(Z_i, X_i, P_i, Y_i)$ is the deterministic component and represents the editor's expected utility from producing the article, and $\xi_i$ denotes the idiosyncratic error term, which follows an i.i.d. Type 1 Extreme Value distribution.\footnote{We model the news outlet's decision-making using a utility function rather than a conventional profit function, as ideological preferences may drive them to deviate from profit-maximizing choices. For other theoretical models that micro-found agents' decision-making to examine polarization in equilibrium, see \cite{gentzkow2010drives}, \cite{iyer_yoganarasimhan_2021}, \cite{amaldoss2021media} and \cite{bondi2023privacy}.} This utility function captures all the main considerations of the editor when making image choices, such as how well the image supports the outlet's ideological stance (alignment with editorial policy, i.e., alignment of features of $Z_i$ with $Y_i$), the image's ability to attract and retain readers (reader-engagement), and the aesthetic quality and/or emotional impact of the image (visual appeal).

Intuitively, ideological slanting in images means that the outlet receives a higher (lower) utility from a more positive visual portrayal of a politician from the same (opposite) ideological side compared to a neutral outlet. The key challenge in developing a measure of ideological slant in visual content stems from the ambiguity around the definition of a ``positive'' portrayal: an image is an unstructured and high-dimensional object, and there are presumably numerous ways for the outlet to choose a more or less positive image. As such, we need to define ideological slanting for a certain image feature. In our analysis, we focus on the presence of \textit{smile} as the feature of interest because there is consensus that smile is a feature that contributes to a more positive portrayal \citep{sulflow2019power}. However, our framework is not restricted to this particular feature and can easily extend to other features or sets of features.

Let $T_i$ denote whether the subject in image $Z_i$ is smiling or not. To define the ideological slant measure for feature $T$, we start with a utility difference measure that compares the utility from two articles that are identical in all respects, except for the smile feature. Similar to the causal inference literature, we therefore consider two versions of an image $Z$: $Z(T=1)$ and $Z(T=0)$, where $Z(T=1)$ is the image with a smile and $Z(T=0)$ is the same image without a smile (with a neutral expression). For ease of exposition, let $Z^{(-T)}$ denote all the information in the image $Z$, except the smile feature $T$. This implies that $Z(T=1) \equiv (T=1, Z^{(-T)})$ and $Z(T=0) \equiv (T=0, Z^{(-T)})$, i.e., the two versions of the image are identical in all features other than smile \footnote{The notation assumes that $Z^{(-T)}$ is independent of $T$, meaning all non-smile features remain identical across conditions. This holds if the images are constructed such that only the smile varies, ensuring $Z^{(-T)} \mid T=1 \sim Z^{(-T)} \mid T=0$.}. Using this set of two images, we can define a measure of utility difference for a given news outlet $y$ and politician $p$ with respect to visual feature $T$ as follows:
\begin{equation}
\label{eq:deltau}
    \Delta^T u (p,y) = \mathbb{E}_{X}\left[u \left( Z(T=1),X,P,Y \right) - u \left(Z(T=0),X,P,Y \right) \mid P = p, Y = y\right].
\end{equation}
This utility difference helps isolate the utility increase or decrease for a news outlet that solely comes from the presence of a smile in the image of any given politician. However, this is still not a measure of ideological slant in visual content because we cannot fully attribute this utility difference to ideological preferences. For instance, there can be vertical preferences for the smile feature where all news outlets prefer a smiling image of a politician. As such, the difference in the utility difference $\Delta^T u (p,y)$ across two news outlets helps cancel out the vertical preference for the feature and the remaining difference can be linked to ideological preferences. For example,  we expect a news outlet with a higher conservative audience share (e.g., {\it Fox News}) to have a higher $\Delta^T u (p,y)$ for a conservative politician (e.g., Donald Trump) than a news outlet with lower conservative audience share (e.g., {\it CNN}) even if they both have a vertical preference for smiling images. Since this difference is defined for any two news outlets, it measures \textit{visual polarization} of the two outlets with respect to the feature of interest. However, it is important to note that this is not a measure of \textit{visual slant}, because both outlets may have slanted preferences. In what follows, we present formal definitions for both \textit{visual polarization} and \textit{visual slant}. We first present the formal definition for \textit{visual polarization} as follows:

\begin{defn}
\label{defn:visual_pol_def}
    For a given politician $p$ and any pair of news outlets $y_1$ and $y_2$, {\bf visual polarization} with respect to feature $T$ is defined as follows:
    \begin{equation}
        \rho^T (p,y_1,y_2) = \Delta^T u (p,y_1) - \Delta^T u (p,y_2).
        \label{eq:ParameterOfInterest}
    \end{equation}
\end{defn}
This definition of {\it visual polarization}, $\rho^T (p,y_1,y_2)$, is flexible and allows for detailed analysis at the level of individual politicians and news outlets. For example, by examining the above measure for a specific politician like Donald Trump across specific outlets such as {\it CNN} and {\it Fox News}, we can measure the extent to which these outlets are polarized or differentiated in their visual portrayal of Trump. Naturally, for the measure of \textit{visual polarization} to capture \textit{visual slant}, we need the news outlet $y_2$ to be neutral. Let $y_n$ denote the neutral news outlet. We can formally define \textit{visual slant} as follows:
\begin{defn}
\label{defn:visual_slant_def}
    For a given politician $p$ and any news outlet $y_1$, {\bf visual slant} with respect to feature $T$ is denoted by $\rho_s^T$ and defined as follows:
    \begin{equation}
        \rho^T_s (p,y_1) = \Delta^T u (p,y_1) - \Delta^T u (p,y_n),
        \label{eq:ParameterOfInterest_slant}
    \end{equation}
    where $y_n$ is the neutral news outlet.
\end{defn}
To further clarify the difference between \textit{visual polarization} and \textit{visual slant}, we return to the example with the portrayal of Donald Trump in {\it CNN} and {\it Fox News}, but add a neutral news outlet like Reuters. Suppose that the \textit{visual polarization} between {\it Fox News} and {\it CNN} for Donald Trump is equal to one, i.e., $\rho^T (\text{Donald Trump},\text{Fox News},\text{CNN})=1$. This measure is a composite of both Fox News' preference for a positive portrayal of Trump and {\it CNN}'s preference for a negative portrayal of him. Our \textit{visual slant} measure helps decompose the \textit{visual polarization} measure. For example, we may find that there is a positive \textit{visual slant} for Donald Trump at {\it Fox News} such that $\rho^T_s (\text{Donald Trump},\text{Fox News})=0.4$, but a negative \textit{visual slant} for him at {\it CNN} such that $\rho^T_s (\text{Donald Trump},\text{CNN})=-0.6$. It is easy to verify the following relationship between the two definitions:
\begin{equation}
    \rho^T (p,y_1,y_2) = \rho^T_s (p,y_1) - \rho^T_s (p,y_2).
\end{equation}
Lastly, a notable feature of our measures is that they are defined at a high degree of granularity, which allows us to capture the varying degrees of ideological slant in visual content specific to each outlet or each politician. This specification allows us to consider different types of aggregation:
\squishlist
\item First, by aggregating the {\it visual slant} measure over Democratic (Republican) politicians for a given outlet, we can obtain insights into how a given outlet portrays liberal (conservative) politicians. This analysis can tell us how conservative outlets like {\it Fox News} and liberal outlets like {\it CNN} portray the two groups of politicians differently. 
\item Second, by aggregating a given politician's {\it visual slant} measure across all the outlets and then comparing these measures across politicians, we can derive insights into how different politicians are portrayed in the media. For instance, this can help us understand questions such as whether the portrayal of Donald Trump in the media is more (or less) polarizing than that of Joe Biden.
\squishend


\section{Standard Reduced-Form Approach and its Limitations}
\label{sec:reduced_form}
In this section, we discuss the standard reduced-form approach used to measure polarization in visual content. In $\S$\ref{ssec:Two_Step_Model}, we describe the two-step reduced-form approach and connect the estimated parameter under this approach to the measure of polarization in visual content as defined in Equation \eqref{eq:ParameterOfInterest}). Next, in $\S$\ref{ssec:drawbacks}, we provide a theoretical and conceptual discussion of the drawbacks of this approach. 

\subsection{Two-step Model}
\label{ssec:Two_Step_Model}

Recall that our goal is to measure {\it visual polarization}, \( \rho^T (p,y_1,y_2) \), with respect to a focal feature $T$, such as whether the politician in the image is smiling or not. However, this is inherently challenging when dealing with unstructured image data, because unlike the standard causal inference literature, where treatment is observed \citep{imbens_rubin_2015}, here we do not directly observe $T_i$, i.e., the presence or absence of the treatment (smile in this case) in a given article $i$. As a result, a common approach in social science settings is to use a {\it Two-step Approach}. This approach addresses the high dimensionality and complexity of unstructured data by first extracting meaningful features and then using these features in a structured econometric model. See \citet{davenport2017analytics} for a general discussion of this approach in the broader social sciences literature, and \citet{boxell2021slanted} and \citet{peng2018same} for applications of this approach to the context of image polarization in media.\footnote{Beyond the political polarization context, this two-step approach is now commonly used in marketing research involving images in other settings as well. For instance, unstructured data such as video and audio streams (\( Z \)) are analyzed to extract features like facial expressions (\( T \)), which are then used to study their impact on user engagement (\( Y \)) \citep{lu2021larger}. In the online labor market, profile pictures (\( Z \)) are examined to identify features such as perceived race or attire (\( T \)) to assess job offer likelihood (\( Y \)), highlighting visual biases in employment opportunities \citep{troncoso2022look}. Additionally, research on social media posts about e-cigarettes (\( Z \)) extracts demographic features (\( T \)) to study tax policy compliance (\( Y \)), revealing demographic responses to legislative changes \citep{anand2024frontiers}.} Methodologically, the two-step approach can be outlined as follows:
\begin{enumerate}
    \item \textit{Feature extraction using a machine learning model:} The first step involves using an off-the-shelf machine learning model, denoted as \( f_1 \), to extract relevant feature(s) \( T \) from the unstructured data \( Z \). This model \( f_{1}: Z \rightarrow T \) is used to derive the feature(s) \( T \):
    \begin{equation}
        \hat{T} = f_{1}(Z)
        \label{eq:step1_twostep}
    \end{equation}
    \item \textit{Econometric analysis:} After extracting features \( \hat{T} \), the second step involves analyzing these features within a structured econometric model. There are two potential ways to relate \( \hat{T} \) and \( Y \):
    \begin{subequations}
    \begin{align}
        Y &= f_{2}^{\text{T-independent}}(\hat{T}, \boldsymbol{X}) + \epsilon, \;\;\; \textrm{or,} \\
        \hat{T} &= f_{2}^{\text{T-dependent}}(Y, \boldsymbol{X}) + \epsilon,
    \end{align}
    \end{subequations} 
    where $f_{2}^{\text{T-independent}}$ is the second-stage econometric model where extracted feature $\hat{T}$ is used as an independent variable, and $f_{2}^{\text{T-dependent}}$ is the second-stage model where $\hat{T}$ is used as dependent variable. Social scientists typically prefer the second form because \( \hat{T} \) is an estimated variable that could include measurement error, and therefore using it as the independent variable is not appropriate \citep{bollen2009causal}. 
\end{enumerate}
We now describe how this approach can be used to quantify polarization in visual content with respect to feature \(T\), defined as \(\rho^T (p,y_1,y_2)\) in Equation \eqref{eq:ParameterOfInterest}. Consider a simple example where news outlets from Democratic-leaning outlets like the {\it The New York Times}, {\it CNN}, and {\it BBC} (\(y_1\)) and Republican-leaning outlets like {\it Fox News}, {\it Newsmax}, and {\it Daily Mail} (\(y_2\)) choose images for a politician, say Hillary Clinton (\(p\)). In this context, the first step involves estimating the binary indicator \(\hat{T}\), which denotes whether the focal politician (Hillary Clinton) is smiling in the image (\(\hat{T} = 1\)) or not (\(\hat{T} = 0\)). This can be obtained from standard off-the-shelf emotion recognition models such as Face++ \citep{faceplusplus_emotion}, Google Vision \citep{google_ml_kit}\footnote{\cite{microsoft_azure_vision}’s Face API, a popular model previously used in studies such as \cite{boxell2021slanted, caprini2023visual}, has been unavailable for use since June 2022 due to updated responsible AI policies \citep{microsoft2023responsible}.}, or through in-house model training where we first label a sample of the images using human subjects and then train a model to predict the label given the image.

For the second step, we can use a logistic regression where we regress \(\hat{T}_i\) on characteristics \(X_i\) and \(Y_i\). For notational simplicity, consider the case where \(Y\) is a binary variable (e.g., \(Y=1\) for Democratic-leaning outlets, and \(Y=0\) for Republican-leaning outlets). We can write the log odds ratio for the resulting logistic regression as follows:

\begin{equation}\label{eq:reducedform}
\log \left( \frac{\Pr(\hat{T} = 1 \mid Y, \boldsymbol{X})}{\Pr(\hat{T} = 0 \mid Y,  \boldsymbol{X})} \right) = \alpha_{0} + \alpha_{1} \boldsymbol{X} + \beta Y
\end{equation}

If the model in the equation above is well-specified, the log odds ratio characterizes the utility difference \(\Delta^T u(p, y)\) between choosing an image with a smile versus one without a smile, as defined in Equation \eqref{eq:deltau}. In that event, we can connect the parameter \(\beta\) to the polarization measure defined in Equation \eqref{eq:ParameterOfInterest} as follows:
\begin{equation}
\begin{split}
    \rho^T (p,Y=1,Y=0) & = \Delta^T u (p,Y=1) - \Delta^T u (p,Y=0) \\
    & = (\alpha_{0} + \alpha_{1} \boldsymbol{X}  + \beta) - (\alpha_{0} + \alpha_{1} \boldsymbol{X} ) \\
    & = \beta
\end{split}
\end{equation}

Thus, the coefficient \( \beta \) in the logistic regression model serves as an empirical estimate of the polarization measurement \( \rho^T(p, Y=1, Y=0) \). If \( \beta > 0 \), it suggests that Democratic-leaning outlets (\(Y = 1\)) derive a higher utility from using a smiling image of Hillary Clinton compared to Republican-leaning outlets, implying a greater \(\Delta^T u\) for the Democratic-leaning outlets. Conversely, if \( \beta < 0 \), Republican-leaning outlets (\(Y = 0\)) have a higher utility difference compared to Democratic-leaning outlets, indicating a greater \(\Delta^T u\). Therefore, \( \rho^T(p, Y=1, Y=0) \) can be approximated by \( \beta \), and provides a quantitative measure of the ideological polarization across news outlets. 


\subsection{Drawbacks of the Two-Step Approach}
\label{ssec:drawbacks}

While the two-step model described in $\S$\ref{ssec:Two_Step_Model} is easy to interpret and apply, it relies on the assumption that the regression model is well-specified. However, this assumption can fail due to two natural reasons and lead to incorrect inference: (1) \textit{extraction bias}, and (2) \textit{omitted variable bias}. In this section, we discuss these biases and explain how they can manifest in our setting. 


{\it Extraction bias} occurs due to imperfections in the feature extraction process, or the first step of the two-step process \citep{wei2022unstructured}. That is, the true label $T$ may differ from the extracted label \(\hat{T} = f_1(Z)\) as follows:
\begin{equation}\label{eq:ext}
    T = f_1(Z) + \epsilon_1 = \hat{T}+ \epsilon_1
\end{equation}
The error in Equation \eqref{eq:ext} can be viewed as a measurement error. It is well-known that systematic dependence of this error on other relevant factors in the model can introduce biases or inconsistencies in the estimates; see Chapter 4 of \citet{wooldridge2010econometrics}. As such, we decompose this error into two parts, such that $\epsilon_1 = \epsilon_r + \epsilon_e$, where $\epsilon_r$ is the random i.i.d. noise in the measurement process, and $\epsilon_e$ is the extraction error that can be correlated with the other relevant image features contained in \(Z^{(-T)}\). For instance, a machine learning model designed to detect the facial expressions (e.g., whether a person is smiling) may also inadvertently capture brightness levels in the image, as brighter images are often associated with happier expressions. We can rewrite the relationship between $T$ and $\hat{T}$ as:
\begin{equation}\label{eq:ext_dec}
    T = \hat{T} + \epsilon_r + \epsilon_e.
\end{equation}
Then, the predicted feature $\hat{T}$ is a combination of the true feature of interest $T$ (e.g., smile) and a correlated nuisance feature $\epsilon_e$ (e.g., brightness). Thus, when we perform the second stage estimation, the estimated effect ($\beta$) captures the effect of the outlet on the biased measurement $\hat{T}$ rather than the true feature $T$. 

Next, \textit{omitted variable bias} arises from relevant factors omitted from the second step. This issue can arise even if we have the true labels $T$. To illustrate this point, consider the second-step relationship between the true label and covariates:
\begin{equation}\label{eq:ovb}
    T = f_2(Y,X) + \epsilon_2,
\end{equation}
where $\epsilon_2$ consists of unobserved variables that are not captured by the observed covariates $Y$ and $X$ through the semi-parametric function $f_2$. If this error term is independent of the covariates, we can identify function $f_2$ correctly. However, this is often a very strong assumption given the amount of information in \(Z^{(-T)}\) that is omitted from the model. For example, consider a collection of images by Donald Trump in articles by {\it CNN} and {\it Fox News}, where the smile labels are accurate, i.e., $\hat{T} = T$. Now, consider an image feature, such as the presence of the US flag. It is likely that the US flag is present more often in images from {\it Fox News}, given their higher share of nationalist viewers. At the same time, it is more likely for a politician to smile in formal settings with flags present in the background. In this scenario, estimating the parameters of Equation \eqref{eq:ovb} will lead to function $f_2$ picking up the association between the presence of the flag and the smile in the image because the presence of the flag is omitted from the model. Given the high-dimensional and unstructured nature of the images, numerous other image features can result in omitted variable bias in our estimates. To characterize the endogenous part of the error term $\epsilon_2$, we rewrite Equation \eqref{eq:ovb} as follows:
\begin{equation}\label{eq:ovb_dec}
    T = f_2(Y,X) + \epsilon_0 + \epsilon_{ov},
\end{equation}
where $\epsilon_0$ is the part that is independent of covariates, and $\epsilon_{ov}$ is the part that is correlated with $Y$ or $X$, which can lead to omitted variable bias. It is worth noting that image features that lead to extraction bias can also lead to omitted variable bias. For example, omitting image features like brightness can lead to omitted variable bias if brightness is correlated with both the actual presence of a smile in the image (not the extracted one) and the news outlet.

Together, if we want to estimate the second step of the two-step model in $\S$\ref{ssec:Two_Step_Model} by using $\hat{T}$ as the outcome, the errors in Equation \eqref{eq:ext_dec} also appear in the second step equation as follows:
\begin{equation}\label{eq:ovb_ext}
    \hat{T} = f_2(Y,X)+ \epsilon_0 + \epsilon_{ov} + \epsilon_r  + \epsilon_e,
\end{equation}
where $\epsilon_{ov}$ leads to omitted variable bias, $\epsilon_e$ leads to extraction bias due to the use of a machine learning model in the first step, and $\epsilon_r$ contributes to higher uncertainty in model estimates. In summary, the information loss resulting from extracting a single feature from an image can introduce both extraction bias and omitted variable bias. 

We present a more formal characterization of these biases and their derivations in Web Appendix $\S$\ref{appsec:drawback}. We also provide empirical evidence of how the two-step approach can be biased in estimating visual polarization in our setting in Web Appendix $\S$\ref{appsec:empericalevidence}.

\section{Our Approach: Polarization Measurement Using Counterfactual Image Generation}
\label{sec:PMCIG}
As discussed in $\S$\ref{sec:reduced_form}, the two-step approach is subject to information loss since it ignores the high-dimensional information in the image, which in turn can bias the estimated measure of polarization. Therefore, we develop a novel algorithm -- Polarization Measurement Using Counterfactual Image Generation (PMCIG) -- that recovers the polarization measure defined in $\S$\ref{sec:prob_defn} without incurring information loss. 

Recall that the {\it visual polarization} measure is defined as (see Definition \ref{defn:visual_pol_def}):
\[\rho^T(p, y_1, y_2) = \Delta^T u(p,y_1) - \Delta^T u(p,y_2),\] 
where  $\Delta^T u (p,y) = \mathbb{E}_{X}\left[u \left( Z(T=1), X,P,Y \right) - u \left(Z(T=0),X,P,Y \right) \mid P = p, Y = y\right]$. Before outlining our approach to measuring visual polarization, we first present a thought experiment of what an ideal experiment to measure $\rho^T(p, y_1, y_2)$ would look like. In order to measure $\Delta^T u (p,y)$ for a given politician-outlet pair $(p,y)$ for a feature/treatment $T$, we need to observe the difference in the editors' utilities (or choice probabilities) for two images that are exactly the same on all dimensions except $T$. Thus, an ideal experiment to measure $\rho^T(p, y_1, y_2)$ would involve presenting the editors of outlets $y_1$ and $y_2$ two images of politician $p$, that are exactly the same on all dimensions except the feature of interest, i.e., one where the politician is smiling (or the feature of interest $T$ is turned on, i.e., $Z(T=1) \equiv (T=1, Z^{(-T)})$) and one where s/he is not smiling (i.e.,  $Z(T=0) \equiv (T=0, Z^{(-T)})$). Then, the observed difference in relative choice probabilities of the two images ($Z(T=1)$ and $Z(T=0$)) across the two outlets can be linked to polarization measure $\rho^T(p, y_1, y_2)$.
 
However, our data comes from an observational setting rather than this ideal experiment. Therefore, we only observe realized combinations of $Z$ and $y$. For example, for an image $Z$ with $T=1$, we may see that it was chosen by outlet $y_1$, as in the case where Biden's smiling image was chosen by {\it CNN} in Figure \ref{fig:News_Sample}. However, we do not see what would be the probability of this image being chosen by $y_1$ if everything else about it was held the same, with only the feature/treatment of interest turned off (i.e., a non-smiling version of the same image of Biden with $T=0$). Similarly, we also do not see what would have been the likelihood of outlet $y_2$ choosing this image for the two cases: when $T=1$ and when $T=0$. Thus, we only observe one realized combination of $y$-$T$ for an image $Z$, as shown in Table \ref{tab:fact_counter}. However, to quantify/measure polarization, we need to be able to reliably estimate/model the three other counterfactual outcomes for each image $Z$ in the data. This limitation is similar in spirit to the well-established unobservability challenge in the potential outcomes framework \citep{angrist2009mostly}.

\begin{table}[htp!]
\centering
\small
\begin{tabular}{||c|c|c||}
\hline\hline
 & Treatment On: $Z(T=1)$ & Treatment off: $Z(T=0)$ \\
\hline\hline
Outlet $y_1$= {\it CNN} & $\checkmark$ & $X$ \\
Outlet $y_2$ = {\it Fox News} & $X$ & $X$ \\
\hline\hline
\end{tabular}
\caption{Factual and counterfactual outcomes for the example from Figure \ref{fig:News_Sample}, where {\it CNN} chose a smiling image of Biden.}
\label{tab:fact_counter}
\end{table}

As we can see from Table \ref{tab:fact_counter}, there are two key challenges that we need to overcome to measure {\it visual polarization}. 

\squishlist
\item \textbf{Challenge 1: Counterfactual Image Generation} \\ Our polarization measure is defined for two versions of an image that only differ in one feature ($T$). Using two versions of an image where the only difference is the presence of a smile ($T=1$ vs. $T=0$) addresses the issue of information loss in reduced-form approaches by using the rich information contained in images as opposed to mapping the image to a single feature. However, for any given image, we only have one version, where the image either has a smile or does not. Thus, for any focal image $Z$, the first challenge consists of obtaining two versions of the image that are exactly the same in all aspects except the smile.

\item \textbf{Challenge 2: Identification of the Polarization Parameter from Observed Data} \\ Second, even after we have two versions of each image ($Z(T=1)$ and $Z(T=0)$), we need to identify the polarization parameter $\rho^T(p, y_1, y_2)$ for the pair of news outlets ($y_1$, $y_2$), which is defined based on the news outlets' utility. However, identifying the utility function is not feasible because we do not observe the news outlets' choice set. Note that, in standard discrete choice models, identification of the utility function comes from observing which option an agent chooses from a set of alternatives \citep{train_2009}. However, in our setting, we only observe the chosen image. Therefore, we need a systematic approach to map the observed data to the polarization parameter without directly observing/identifying the utility functions. 

\squishend
The rest of this section is organized as follows. First, in $\S$\ref{ssec:solution}, we present the overview of our solution to these two challenges. Next, in $\S$\ref{ssec:Algorithm}, we present our unified algorithm for quantifying visual polarization.

\subsection{Overview of the Solution}
\label{ssec:solution}
In this section, we present an overview of our solution to the two challenges described earlier. We first present the generative component in $\S$\ref{sssec:cig} (to address Challenge 1). We then discuss our identification strategy in $\S$\ref{sssec:identification} (to address Challenge 2).

\subsubsection{Counterfactual Image Generation}
\label{sssec:cig}
To address the first challenge, we leverage the recent developments in generative image models that allow us to manipulate images such that we can modify one specific aspect of an image (e.g., feature $T$) while keeping everything else constant. Generative models have been employed in several recent studies on image analysis: \cite{athey2022smiles} use generative models to adjust features such as smiles in profile images to examine their effects on user preferences and economic transactions in a micro-lending platform. \cite{ludwig2024machine} utilize these models to manipulate facial features and explore how these alterations affect judicial decisions. Finally, in \cite{luo2024using}, these models create facial images to assess the impact of perceived gender traits on discrimination within online marketplaces. 

We define two operators, \(\pi^1\) and \(\pi^0\), which transform an image \(Z\) into either an image where the treatment/feature of interest is turned on \(Z(T=1)\) or one where it is turned off \(Z(T=0)\). The purpose of these operators is to create a set of control and treatment images that only differ in the feature of interest (e.g., the facial expression or smile).\footnote{We can easily extend this operation to a continuous case, where we manipulate the extent to which the feature is activated, e.g., the extent to which the politician smiles.}
\squishlist
    \item \(\pi^1\): This operator ensures that the feature or emotion \(T\) is turned on. For example, when the treatment of interest is smile, then \(\pi^1\) transforms an image of a politician into one where the politician is smiling, regardless of the initial state of \(T\) in the original image. Formally, applying \(\pi^1\) to an image \(Z\) results in:
    \begin{equation}
    \pi^1(Z) = (T = 1, Z^{(-T)}) = Z^1.
    \end{equation}

    \item \(\pi^0\): This operator ensures that the feature or emotion \(T\) is turned off. For instance, \(\pi^0\) transforms an image of a politician into one where the politician has a neutral expression, regardless of the initial state of \(T\). Formally, applying \(\pi^0\) to an image \(Z\) results in:
    \begin{equation}
    \pi^0(Z) = (T = 0, Z^{(-T)}) = Z^0.
    \end{equation}
\squishend

For a set of original images, together, these two operators give us pairs of treated and control images.

\begin{figure}[t]
    \centering
    \includegraphics[width=0.8\linewidth]{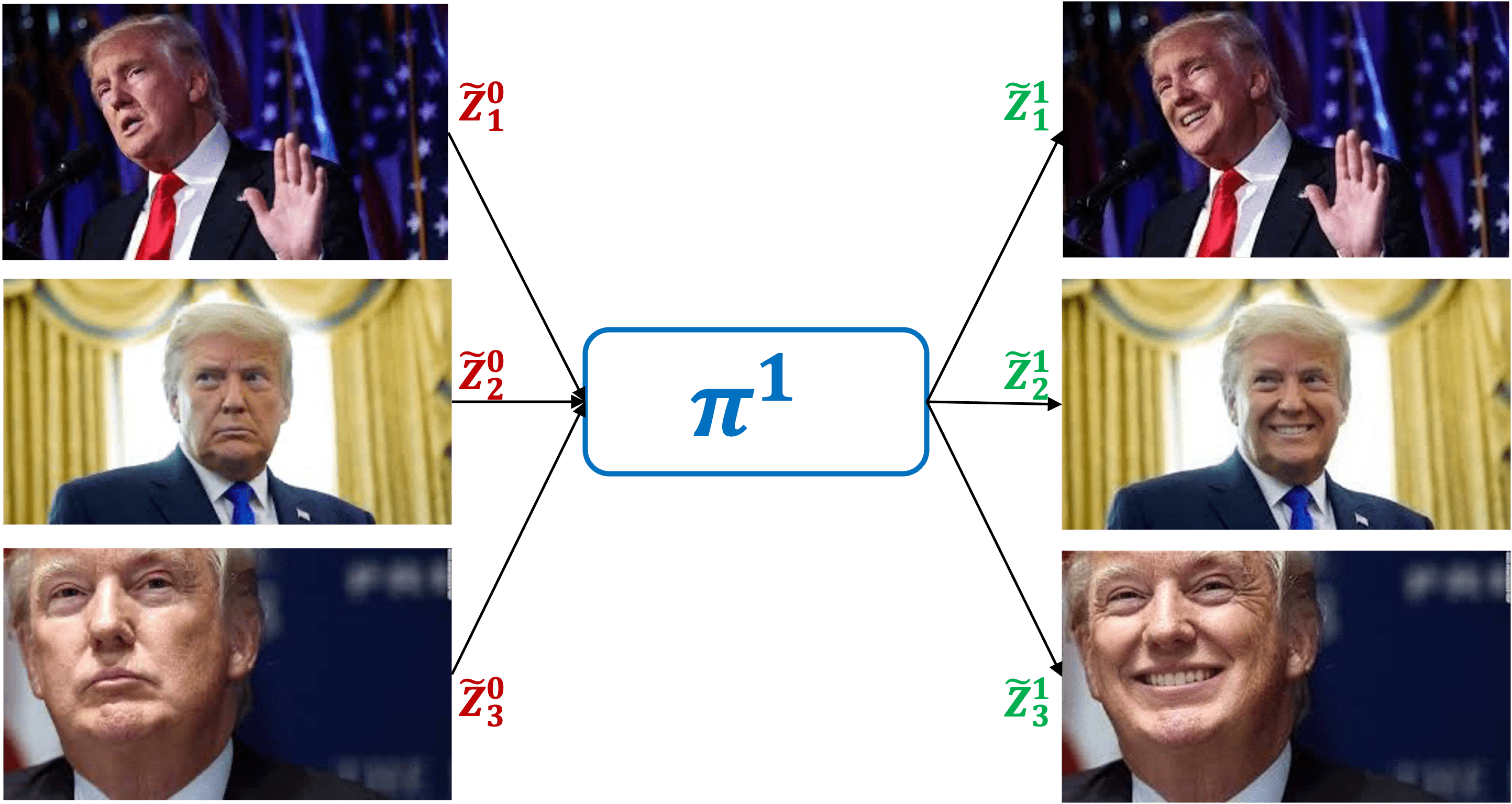}
    \caption{\small Counterfactual image generation for three neutral images of Trump.}
    \label{AddingSmile}
\end{figure}

\noindent {\bf Implementation Details:} We now discuss how we generate counterfactual images in our study. For each politician $p$, we first select three neutral images \( \tilde{Z}^{0}_{p1}, \tilde{Z}^{0}_{p2}, \tilde{Z}^{0}_{p3} \) (where $T=0$) from our dataset to ensure a balanced and unbiased representation of the politician.\footnote{Using three instead of one image ensures that the findings are not driven by the peculiarities of a specific image. In principle, it is possible to use a larger/fewer number of images for this task. We choose three because it balances noise reduction with the cost of generating counterfactual images. Focusing on three images also allows us to manually assess the quality of the generated images and avoid cases where the smile looks unnatural or unappealing.} As such, the function $\pi^0(\cdot)$ to generate the neutral image without the smile feature is an identity function. To generate the counterfactual version of each image with a smile, we employ \href{https://www.ailabtools.com/image-editor/}{AILabTools} as our \(\pi^1\) operator; this tool utilizes conditional Generative Adversarial Networks (cGANs) \citep{mirza2014conditional} and produces high quality and realistic outputs.\footnote{While many such tools are available, we use AILabTools because it allows us to precisely modify the focal images (by adding a smile) while maintaining the authenticity of the other features in the original image.} Thus, for each neutral image, we generate corresponding smiling versions \( \tilde{Z}^{1}_{p1}, \tilde{Z}^{1}_{p2}, \tilde{Z}^{1}_{p3} \), where the only difference is the added attribute (smile). Figure \ref{AddingSmile} illustrates how we apply $\pi^1$ to three neutral images of Donald Trump, which in turn gives us three corresponding neutral images of Donald Trump with a smile. We repeat this process for all the 30 politicians in our study to construct the set $\{ (\tilde{Z}^{0}_{p1}, \tilde{Z}^{0}_{p2}, \tilde{Z}^{0}_{p3} ,\tilde{Z}^{1}_{p1}, \tilde{Z}^{1}_{p2}, \tilde{Z}^{1}_{p3}) \}_p$. 

\subsubsection{Identification: Linking Polarization to the News Outlet Prediction Problem}
\label{sssec:identification}
We begin with a high-level overview of our identification strategy before formalizing our approach. The core challenge in this task lies in the fact that while we observe the image selected by a news outlet, we do not observe the full set of available choices. Our identification strategy leverages variation from similar events covered by multiple outlets (e.g., press conferences). We argue that in these cases, outlets have access to the same choice set, making the identification task analogous to a news outlet prediction problem: the selected image is the one that provides the highest utility to the outlet compared to all other similar images chosen by other outlets.

Recall that our measure is defined for a politician $p$ and any pair of news outlets $y_1$ and $y_2$. Consider a pair of images $\{ Z^{0}_{p}, Z^{1}_{p}\}$ for politician $p$, where $Z^{0}_{p}$ is the version of the image $Z$ with the feature $T$ turned off and $Z^{1}_{p}$ is the version with the feature $T$ turned on. Then, we can write the sample analogue estimator for the polarization parameter for this pair of images as follows:
\begin{equation}\label{eq:analog}
    \hat{\rho}(p,y_1,y_2) = \frac{1}{|\mathcal{X}_p|} \sum_{x \in \mathcal{X}_p} \left( \left[u(Z_p^1, x, p,y_1) - u(Z_p^0, x, p,y_1)\right] - \left[u(Z_p^1, x, p,y_2) - u(Z_p^0, x, p,y_2)\right] \right),
\end{equation}
where $\mathcal{X}_p$ is the set of all articles featuring politician $p$, so the sample analog estimator takes the average over all these articles featuring politician $p$. Clearly, if the utility function is known, we can directly estimate $\hat{\rho}(p,y_1,y_2)$ by integrating the above equation over all the articles. However, as highlighted in Challenge 2, the underlying utility function is not identified with our data because we do not observe the editors' choice sets of images. 

We propose a solution to this identification challenge that directly links the {\it visual polarization} parameter to an outlet-prediction model that can be identified from the data given without observing/identifying utility functions. We now introduce some additional notation to help with this task. In a setting with two outlets $y_1$ and $y_2$, we define $\tilde{Y}$ as a pseudo-variable that denotes the news outlet with he highest utility for producing an article with the content $x$, politician $p$, and image $z$. Formally, we can define $\tilde{Y}$ as follows:
\begin{equation}
    \tilde{Y} (z,x,p) = \argmax_{y_1,y_2} \{u(z, x, p,y_1)+\xi_1, \;\; u(z, x, p,y_2)+\xi_2 \},
\end{equation}
where $\xi_1$ and $\xi_2$ are independent and identically distributed terms that come from the Type 1 Extreme Value distribution. We can link the elements of the right-hand side (RHS) of Equation \eqref{eq:analog} to pieces identifiable from the data at hand by re-writing the sample analog estimator for {\it visual polarization} as:
\begin{equation}\label{eq:analog2}
\begin{split}
    \hat{\rho}(p,y_1,y_2) & = \frac{1}{|\mathcal{X}_p|} \sum_{x \in \mathcal{X}_p} \left( \left[u(Z_p^1, x, p,y_1) - u(Z_p^0, x, p,y_1)\right] - \left[u(Z_p^1, x, p,y_2) - u(Z_p^0, x, p,y_2)\right] \right)\\
    & = \frac{1}{|\mathcal{X}_p|} \sum_{x \in \mathcal{X}_p} \left( \left[u(Z_p^1, x, p,y_1) - u(Z_p^1, x, p,y_2) \right] - \left[u(Z_p^0, x, p,y_1) - u(Z_p^0, x, p,y_2)\right] \right)\\
    & =  \frac{1}{|\mathcal{X}_p|} \sum_{x \in \mathcal{X}_p} \left[ \log \left( \frac{e^{u(Z_p^1, x, p,y_1)}}{e^{u(Z_p^1, x, p,y_2)}}\right) -\log \left(\frac{e^{u(Z_p^0, x, p,y_1)}}{e^{u(Z_p^0, x, p,y_2)}}  \right) \right]\\
    & =  \frac{1}{|\mathcal{X}_p|} \sum_{x \in \mathcal{X}_p} \left[ \log \left( \frac{\Pr \left(\tilde{Y} = y_1 \mid  Z_p^1, x, p\right)}{\Pr \left(\tilde{Y} = y_2 \mid  Z_p^1, x, p\right)}\right) -\log \left( \frac{\Pr \left(\tilde{Y} = y_1 \mid  Z_p^0, x, p\right)}{\Pr \left(\tilde{Y} = y_2 \mid  Z_p^0, x, p\right)}\right) \right],
\end{split}
\end{equation}
In deriving Equation \eqref{eq:analog2}, we switch two elements (second line), apply $\log(\exp(\cdot))$ transformation (third line), and use a log-odds interpretation that connects the polarization parameter to a news outlet choice problem. As such, Equation \eqref{eq:analog2} allows us to define the polarization parameter as the difference between two log-odds ratios related to the pseudo-variable $\tilde{Y}$. 

We now link the identification of the polarization parameter to an outlet prediction problem (given article and image features). This link depends on the link between the pseudo-variable $\tilde{Y}$ and the actual $Y$ variable that determines the news outlet for an article. The following proposition characterizes the assumption needed for our identification:
\begin{proposition}
\label{prop:identification}
Suppose that we have data $\mathcal{D} = \{(X_i, P_i, Y_i, Z_i) \}_i$, where if an article $i$ is produced by news outlet $Y_i$, then $Y_i$ has higher utility from this article than other outlets. Then, the identification of the predicted probabilities for news outlet prediction task results in the identification of the polarization parameter. 
\end{proposition}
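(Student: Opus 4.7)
The plan is to reduce the identification of the visual polarization parameter to the identification of the outlet prediction model $\Pr(Y = y \mid Z, X, P)$, by showing that the stated assumption makes the observed outlet $Y$ coincide in conditional distribution with the pseudo-variable $\tilde{Y}$ introduced just before Equation \eqref{eq:analog2}. Since that equation already expresses $\hat{\rho}(p,y_1,y_2)$ as a simple functional of conditional log-odds of $\tilde{Y}$, establishing this distributional identity is all that remains.

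First, I would invoke the assumption explicitly: if article $i$ is produced by outlet $Y_i$, then $Y_i$ attains the highest realized utility for $(Z_i, X_i, P_i)$ among all outlets. Combined with the Type 1 Extreme Value specification of $\xi$ in Equation \eqref{eq:util}, this yields the standard softmax representation
\begin{equation*}
\Pr(Y = y \mid Z, X, P) \;=\; \frac{e^{u(Z, X, P, y)}}{\sum_{y'} e^{u(Z, X, P, y')}} \;=\; \Pr(\tilde{Y} = y \mid Z, X, P),
\end{equation*}
so any consistent estimator of the prediction task identifies $\Pr(\tilde{Y} = y \mid Z, X, P)$ nonparametrically, without ever having to recover the utility levels themselves.

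Second, I would substitute these identified probabilities into the final line of Equation \eqref{eq:analog2} evaluated at the counterfactual pairs $(Z_p^0, Z_p^1)$ produced by the operators $\pi^0$ and $\pi^1$ from $\S$\ref{sssec:cig}. The normalizing denominators in the softmax cancel inside each log, so each log-odds term collapses to a utility difference $u(\cdot, y_1) - u(\cdot, y_2)$, and averaging over $x \in \mathcal{X}_p$ returns $\hat{\rho}(p,y_1,y_2)$ as a known, continuous transformation of identified conditional probabilities. Passing to the population limit yields identification of $\rho^T(p, y_1, y_2)$, which completes the argument.

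The main obstacle is conceptual rather than algebraic: the assumption that the producing outlet attains the highest utility presupposes that outlets covering a given event effectively draw from a common choice set of images. This is precisely the substantive content sketched earlier in $\S$\ref{sssec:identification}, which appeals to shared providers such as Getty Images or the Associated Press for similar events. Without this common-choice-set premise, the observed $Y_i$ would be the argmax over a restricted rather than the full alternative set, and the equality between $\Pr(Y \mid \cdot)$ and the softmax of $u(\cdot, y)$ would fail. Operationalizing the assumption in the estimation sample — for instance, by conditioning on rich enough features in $X$ (topic, date, politician, event context) to proxy for event-level variation in the choice set — is therefore the delicate step on which the identification result ultimately rests.
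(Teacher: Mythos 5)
Your proposal is correct and follows essentially the same route as the paper: the assumption forces the pseudo-variable $\tilde{Y}$ to coincide with the observed outlet $Y$, so the Type 1 Extreme Value/softmax structure lets the log-odds of the identified prediction probabilities recover the utility differences in Equation \eqref{eq:analog2}, and hence $\hat{\rho}(p,y_1,y_2)$. Your closing discussion of the common-choice-set premise matches the paper's own remarks on where in the data the assumption is plausible (e.g., widely covered events), so nothing is missing.
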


The proof of this proposition is simple: under the assumption that the outlet producing an article $i$ has the highest utility from it (compared to other outlets), we have $\tilde{Y} = Y$. Therefore, the identification of log-odds in Equation \eqref{eq:analog2} becomes equivalent to log-odds for variable $Y$. Let $\hat{g}(y \mid z,x,p)$ denote the probability of an article with image $z$, characteristics $x$, and politician $p$ is produced by news outlet $y$. If the condition in Proposition \ref{prop:identification} is satisfied, we can write {\it visual polarization} as follows:
\begin{equation}
    \hat{\rho}(p,y_1,y_2) =  \frac{1}{|\mathcal{X}_p|} \sum_{x \in \mathcal{X}_p} \left[ \log \left( \frac{\hat{g} \left(y_1 \mid  {Z}_p^1, x, p\right)}{\hat{g} \left(y_2 \mid  {Z}_p^1, x, p\right)}\right) -\log \left( \frac{\hat{g} \left( y_1 \mid  {Z}_p^0, x, p\right)}{\hat{g} \left( y_2 \mid  {Z}_p^0, x, p\right)}\right) \right]
\end{equation}
As such, the critical question is where in our data the condition in Proposition \ref{prop:identification} is satisfied, that is, the production of an article by an outlet implies having the highest utility from producing it compared to other outlets. Such areas in the data satisfy $\tilde{Y} = Y$ and characterize our identifying variation. For that purpose, we focus on important events that are covered by all news outlets, such as important events like press conferences by the politician. For instance, consider different images of Donald Trump during the ``Operation Warp Speed'' speech in our dataset, as shown in Figure \ref{fig:Different_Images}. Despite being captured from the same event, these images present Trump in varying ways, ranging from serious to expressive. We argue that in such events, our assumption that the producing outlet has the highest utility is more reasonable because all outlets have access to the same set of images. Therefore, the news outlet prediction from the set \( \mathcal{Y} \) can potentially mimic the true choice model over the set of images \(\mathcal{Z}\). Later in $\S$\ref{sssec:smile_change}, we present results highlighting how our prediction model utilizes this identifying variation in the data.


\begin{figure}[t]
    \centering
    \includegraphics[width=0.95\linewidth]{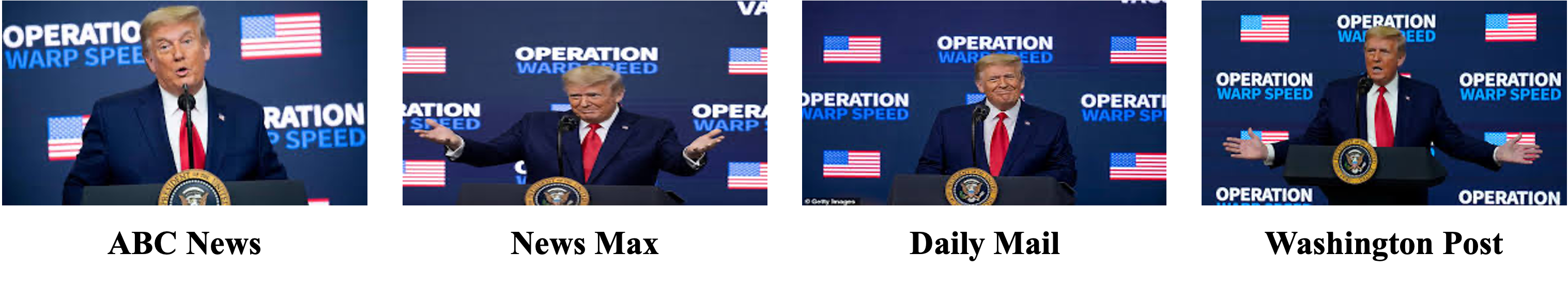}
    \caption{\small Visual coverage of Donald Trump's speech at ``Operation Warp Speed" by different news outlets.}
    \label{fig:Different_Images}
\end{figure}

A notable feature of our data is the presence of such similar patterns that form a clustering structure, where we observe extensive choice variation by outlets within a cluster of very similar events. In what follows, we discuss how we design the structure of our news outlet prediction model such that it will be able to fully exploit the variation in similar events.  

\noindent {\bf Model Architecture for the Multi-Modal News Outlet Prediction Problem:} We now discuss the estimation of our news outlet prediction model, \( g(y_i \mid Z_i, \boldsymbol{X}_i, P_i; \theta) \). Given the multi-modal nature of inputs (e.g., text, image, categorical variables), we can use any flexible semi-parametric model that can capture complex patterns in the data. However, there are important challenges that we need to address. First, we need to ensure that the model utilizes the variation in similar events to satisfy the condition in Proposition \ref{prop:identification}. Second, we need to ensure that our predictive model correctly identifies the link between the use of smile in an image and the outlet. For example, a purely loss-minimizing objective may end up estimating a model that underestimates the strength of the link between smile and the outlet by misattributing its link to features correlated with a smile. Therefore, instead of simply using a naive off-the-shelf prediction model, we impose some structure on the architecture of machine learning models we consider, as illustrated in Figure \ref{fig:ML_Model}. 

\begin{figure}[t]
    \centering
    \includegraphics[width=0.8\linewidth]{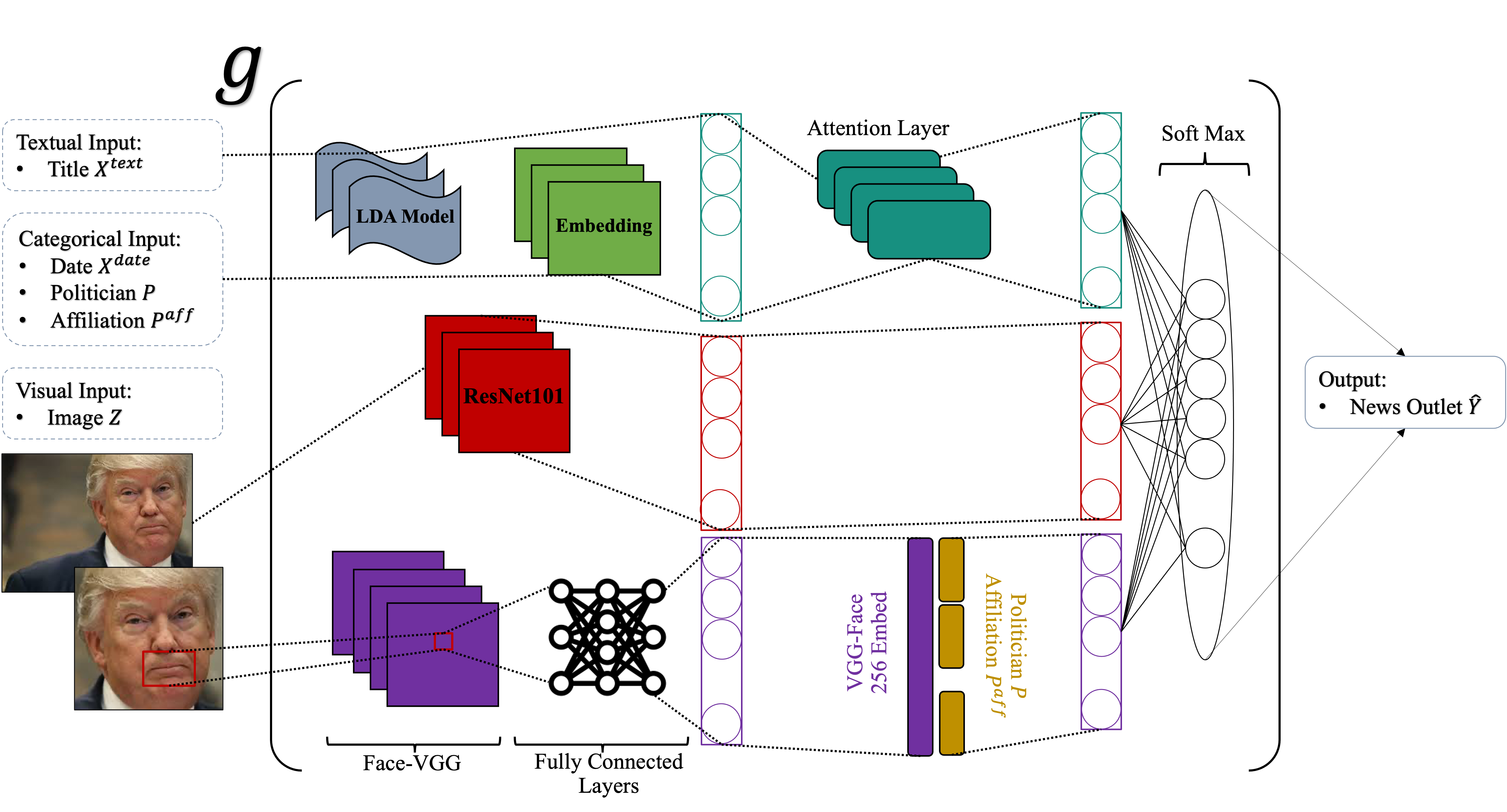}
    \caption{\small The multi-modal deep learning model for the news outlet prediction task}
    \label{fig:ML_Model}
\end{figure}
The multi-modal deep learning model has the following architecture:
\squishlist

\item First, to leverage the variation in similar events, we use the contextual information in news articles. This includes textual data (\( X^{\text{text}} \)), publication dates (\( X^{\text{date}} \)), politicians' names (\( P \)), political affiliations (\( P^\text{aff} \)), and image data (\( Z \)). Textual data is processed using Latent Dirichlet Allocation (LDA), which encodes each news title as a 40-dimensional topic vector. Second, we capture metadata such as dates, names, and affiliations as categorical variables by embedding them into dense vectors. Then, a \textit{Attention Mechanism} processes the structured data (\( X^{\text{text}} \), \( X^{\text{date}} \), \( P \), \( P^\text{aff} \)), and learns to prioritize the most relevant text and metadata features for the classification task \citep{vaswani2017attention}. 

\item Second, \textit{ResNet-101} processes the entire image \(Z\) by leveraging its capabilities from general image recognition tasks and extracts hierarchical and context-rich features from the broader visual content and scene information \citep{he2016deep}. Intuitively, given similarities in the event/news covered, these two parts of the model architecture effectively capture a given outlet's stylistic preference for certain styles of text and image features.

\item Finally, to accurately capture smiles and link them to news outlets, we use \textit{MTCNN} performs face detection and crops the face alone from the rest of the image \citep{zhang2016joint}. Then, the detected face is passed through the \textit{VGG-Face} network, which is pre-trained on facial expression data. This makes it highly effective at capturing facial attributes such as smiles \citep{parkhi2015deep}. Finally, we apply \textit{Chunk attention} to VGG-Face embeddings, and combine them with categorical data (politicians' names \( P \) and affiliations \( P^\text{aff} \)) to capture correlations between facial features and structured metadata related to the image. This step of separately capturing facial features and emotions ensures that the model accurately captures the relationship between smiles and news outlets, mitigating the risk of misattribution to correlated features.
\squishend

The final classification layer combines all modalities. The model is trained using the AdamW optimizer \citep{loshchilov2017decoupled}, which ensures efficient optimization and robust regularization. To train the model, we maximize the following entropy function over the training data:
\begin{equation}
\textbf{Entropy:} \quad \max_{\theta} \mathcal{H}(\theta) = \max_{\theta} \left( - \sum_{i=1}^{\mathcal{N}} \sum_{y \in \mathcal{Y}} g(Y_i = y \mid Z_i, X_i, P_i; \theta) \log g(Y_i = y \mid Z_i, X_i, P_i; \theta) \right) 
\end{equation}
The resulting model $\hat{g}$ is then used to estimate the polarization parameter using Equation \eqref{eq:analog2}. Please see Web Appendix  $\S$\ref{appssec:MLImplementation} for additional details on implementation and model training.

\subsection{Algorithm}
\label{ssec:Algorithm}
We now present our algorithm, \textit{Polarization Measurement Using Counterfactual Image Generation (PMCIG)} in Algorithm \ref{alg:pmcig}. We split our data $\mathcal{D}$ into training and test sets, denoted by $\mathcal{D}_{\text{train}}$ and $\mathcal{D}_{\text{test}}$, respectively. We use the training data to train the deep learning model, and use the estimated model to calculate the polarization measure on the test data. Since our main goal is polarization measurement, this form of cross-fitting helps avoid overfitting bias. The algorithm consists of three steps as described below:

\begin{algorithm}[htp!]
\caption{Polarization Measurement Using Counterfactual Image Generation (PMCIG)}
\label{alg:pmcig}
\small
\begin{algorithmic}[1]
\State \textbf{Input:} 
\Statex \hspace{0.5cm} Dataset \(\mathcal{D} = \{(Z_i, \boldsymbol{X}_i, P_i, y_i)\}_{i \in \mathcal{N}}\)
\Statex \hspace{0.5cm} Split \(\mathcal{D}\) into training set \(\mathcal{D}_{\text{train}}\) and test set \(\mathcal{D}_{\text{test}}\)
\Statex \hspace{0.5cm} Selected neutral images \(\tilde{Z}_p \notin \mathcal{D}_{\text{train}} \text{ for each } p \in P\)
\Statex \hspace{0.5cm} Number of images for each politician used for counterfactual image generation $S$
\Statex \hspace{0.5cm} News outlets \(\mathcal{Y} = \{y^k\}_{k=1}^K\), with a baseline \(y^0\) determined as the neutral outlet

\State \textbf{Output:} 
\Statex \hspace{0.5cm} Polarization Measurement \(\hat{\rho}^T (p,y^k,y^0)\) for each observation in $\mathcal{D}_{\text{test}}$

\State \textbf{Step 1: Apply Transformation \(\pi\) in $\S$\ref{sssec:cig}}
\State For each \(p \in \mathcal{P}\) and image index $s$ ($1\leq s\leq S$) for the neutral image, apply \(\pi^1\) to obtain:
\[
\tilde{Z}^1_{ps} \gets \pi^1(\tilde{Z}_{ps}), \quad \tilde{Z}^0_{ps} \gets \tilde{Z}_{ps} \text{ for each } p \in \mathcal{P}
\]

\State \textbf{Output:} \( \mathcal{Z}_p  = \{\tilde{Z}^1_{ps}, \tilde{Z}^0_{ps}\}_{s=1}^S \text{ for each } p \in \mathcal{P}\).

\State \textbf{Step 2: Train the Multi-Modal Deep Learning Model \(g\) in $\S$\ref{sssec:identification} on \(\mathcal{D}_{\text{train}}\)}
\State Given the training set \(\mathcal{D}_{\text{train}} = \{(Z_i, \boldsymbol{X}_i, P_i, y_i)\}_{i \in \mathcal{N}_{\text{train}}}\), learn parameters of model $g(\cdot;\theta) \in \mathcal{G}$:
\begin{equation*}
\begin{aligned}
&\hat{\theta} \gets \argmax_{\theta} \mathcal{H}(\theta) = \argmax_{\theta} \left( - \sum_{i=1}^{\mathcal{N}_{train}} \sum_{y \in \mathcal{Y}} g(Y_i = y \mid Z_i, X_i, P_i; \theta) \log g(Y_i = y \mid Z_i, X_i, P_i; \theta) \right) 
\end{aligned}
\end{equation*}

\State \textbf{Output:} Trained model \(\hat{g}(Y_i = y \mid Z_i, X_i, P_i; \hat{\theta})\)

\State \textbf{Step 3: Estimate Probabilities and Calculate \(\hat{\rho}^T (p,y^k,y^0)\) on \(\mathcal{D}_{\text{test}}\)}


\State For each politician $p \in \mathcal{P}$ and any news outlet $y^k$, calculate the article-level estimated Polarization Measurement for the each article \(j \in \mathcal{D}_{\text{test}}\) as \(\hat{\rho}^{T}_{j} (p,y^k,y^0)\) following:
\[
\hat{\rho}_j (p,y^k,y^0) \gets  \frac{1}{S} \sum_{s = 1}^S \left[ \log \left( \frac{\hat{g} \left(y^k \mid  \tilde{Z}_{ps}^1, \boldsymbol{X}_j, p\right)}{\hat{g} \left(y^0 \mid  \tilde{Z}_{ps}^1, \boldsymbol{X}_j, p\right)}\right) -\log \left( \frac{\hat{g} \left( y^k \mid  \tilde{Z}_{ps}^0, \boldsymbol{X}_j, p\right)}{\hat{g} \left( y^0 \mid  \tilde{Z}_{ps}^0, \boldsymbol{X}_j, p\right)}\right) \right]
\]

\State Calculate the aggregated Polarization Measurement for a given politician $p$, for each news outlet $y^k \in \mathcal{Y}$ as:
\[
\hat{\rho}^T (p,y^k,y^0) \gets \frac{1}{N_{p}^{\text{test}}} \sum_{j \in \mathcal{D}_{\text{test}} , P_j = p} \hat{\rho}^{T}_{j} (p, y^k, y^0) 
\]


\end{algorithmic}
\end{algorithm}

\squishlist
\item In the first step, we generate the counterfactual image versions for each politician. To do so, we take three neutral images from each politician $p$ from the test data and apply the function $\pi^1$ using the procedure outlined in $\S$\ref{sssec:cig}. An important consideration in choosing the neutral images is that the images are within the joint distribution of the training data. If the image is very different from those in the training data, our $\hat{g}$ estimates can be inaccurate. 

\item In the second step of our algorithm, we use the model architecture presented in $\S$\ref{sssec:identification} to estimate the model $\hat{g}$ using training data $\mathcal{D}_{\emph{train}}$, which is a random subset of 85\% of all articles in data $\mathcal{D}$. 

\item Finally, in the third step, we measure polarization on the held-out test data set $\mathcal{D}_{\emph{test}}$, which is not used in the process of model building. This step incorporates the idea of cross-fitting presented in the literature on the intersection of machine learning and econometrics, and ensures that our estimates do not exhibit overfitting bias \citep{gentzkow2019measuring, chernozhukov2018double}. In the final part of our algorithm, we aggregate over all articles and counterfactual image versions to measure polarization for each politician $p$ in each outlet $y^k$ relative to a baseline outlet $y^0$. 
\squishend
In our empirical application, we use Reuters as the baseline news outlet given its reputation to be a non-partisan news source. However, we can easily change that baseline to any other outlet to obtain polarization measures. To the extent that Reuters is the neutral point zero on the spectrum, we can interpret our polarization estimates as measures of visual slant. 

\section{Empirical Evaluation and Findings}
\label{sec:emp_eval}
We now present the results from our empirical analysis. First, in $\S$\ref{ssec:CounterfaculValidation}, we provide evidence demonstrating that the counterfactual images generated in the Step 1 of our algorithm are consistent with the original images, preserving all features except the added smile. Next, $\S$\ref{ssec:MLPerformance}, we present results on the performance of our news outlet prediction model from Step 2 of our algorithm. Finally, in $\S$\ref{ssec:MainResults}, we share the Step 3 results on estimates of political polarization in visual content, and present findings at both the news outlet and politician levels.

\subsection{Validating Counterfactual Image Consistency}
\label{ssec:CounterfaculValidation}

We briefly discuss the validation of the counterfactual images using our cGANs toolkit. It is important to ensure that adding a smile does not systematically change other contextual features of the image. Ensuring this consistency is essential for isolating the effect of the smile without introducing biases from other image characteristics, such as brightness or colorfulness (see $\S$\ref{ssec:drawbacks}). To that end, we measure a few other image characteristics (such as brightness and colorfulness) before and after applying the smile operator \( \pi^{1} \) and use the Kolmogorov-Smirnov (K-S) test to compare the distributions of these characteristics for the original and smiley images. Our tests suggest that there are no statistically significant differences between the original and smiley images for other contextual features. This confirms that the operator \( \pi^{1} \) does not introduce significant changes in other characteristics, ensuring that the counterfactual images remain consistent with the original ones. Detailed histograms and a complete analysis of the test results are shown in the Web Appendix $\S$\ref{appssec:GeneratingCounterfactual}. 

\subsection{Performance of the News Outlet Prediction Model}
\label{ssec:MLPerformance}
In this section, we present the performance of our multi-modal multi-class classification problem for news outlet prediction. As mentioned earlier, we use an 85\%-15\% split between training and test data. We consider four measures to evaluate the model's predictive performance -- (1) Accuracy, (2) Precision, (3) Recall, and (4) Weighted Cross-Entropy (WCE). Detailed explanations of these metrics are provided in Web Appendix $\S$\ref{appssec:result_pred}. 

We use the above performance measures to evaluate different versions of our deep learning model on the test data and present the results in Table \ref{tab:classification_performance}. Each row shows a more complex model that progressively adds more explanatory variables/features. The first row considers a model that only uses categorical inputs such as politician identity, party affiliation, and date. We then add textual information in the second row and add image features in the final row. Two key points emerge from Table \ref{tab:classification_performance}. First, our final outlet prediction model achieves a remarkably good performance of approximately 44\%. It is worth emphasizing that a random benchmark only achieves a 5\% accuracy (given that there are 20 news labels). Second, we notice that the image information greatly contributes to the predictive accuracy of the model: while the non-image models achieve a maximum accuracy of 16.72\%, the multi-modal model achieves an accuracy of 43.81\%. This aligns with prior research that highlights the importance of visual data in predictive tasks \citep{dzyabura2023leveraging}. 

\begin{table}[t]
\centering
\small
\begin{tabular}{lcccc} 
\hline\hline
\textbf{Features Used} & \textbf{Accuracy (\%)} & \textbf{Precision (\%)} & \textbf{Recall (\%)} & \textbf{WCE Loss} \\ \hline
Politician, Party, Date &  11.39 & 10.99 & 15.18 & 2.76 \\ 
Politician, Party, Date, Text &  16.72 & 15.01 & 17.08 & 2.66 \\ 
Politician, Party, Date, Text, Image &  43.81 & 43.65 & 41.40 & 2.29 \\ 
\hline\hline
\end{tabular}
\caption{Performance metrics for 20-class classification task using 5\% smoothing factor with different inputs on the test data.}
\label{tab:classification_performance}
\end{table}

We present additional details on the model's performance measures across outlets in Web Appendix \ref{appssec:result_pred}. In the rest of this section, we provide some additional interpretation and intuition for the results from the black-box predictive model. In $\S$\ref{sssec:smile_pred}, we discuss how our model captures the information in a smile, and in $\S$\ref{sssec:smile_change}, we examine how outlet predictions shift when we add a smile to an image. 

\subsubsection{How Does the Model Account for Smiles?}
\label{sssec:smile_pred}
Understanding how a multi-modal model learns and utilizes visual features is essential for interpreting predictions, especially in complex tasks like news outlet classification. However, this can be challenging since visual features are high-dimensional. In particular, it is important to examine whether the model is really utilizing information from facial features and emotions or whether it is simply utilizing the broader contextual information in the image for prediction (such as background color, text, etc.). 

We employ a recently developed, popular tool -- Grad-CAM -- to visualize the contribution of specific visual features to the model's predictions by helping us understand the attention patterns of different components in the model \citep{selvaraju_etal_2017}. Grad-CAM highlights the regions of an image that drive the decision-making process by examining the gradient of the model's prediction function \( g(\cdot) \) with respect to its inputs. This technique is particularly helpful in our architecture, where two different CNN branches—Face-VGG and ResNet101—process facial features and global image context, respectively. We can therefore use Grad-CAM to analyze attention patterns in both CNN branches separately. 

\begin{figure}[htp!]
    \centering
    \includegraphics[width=0.7\linewidth]{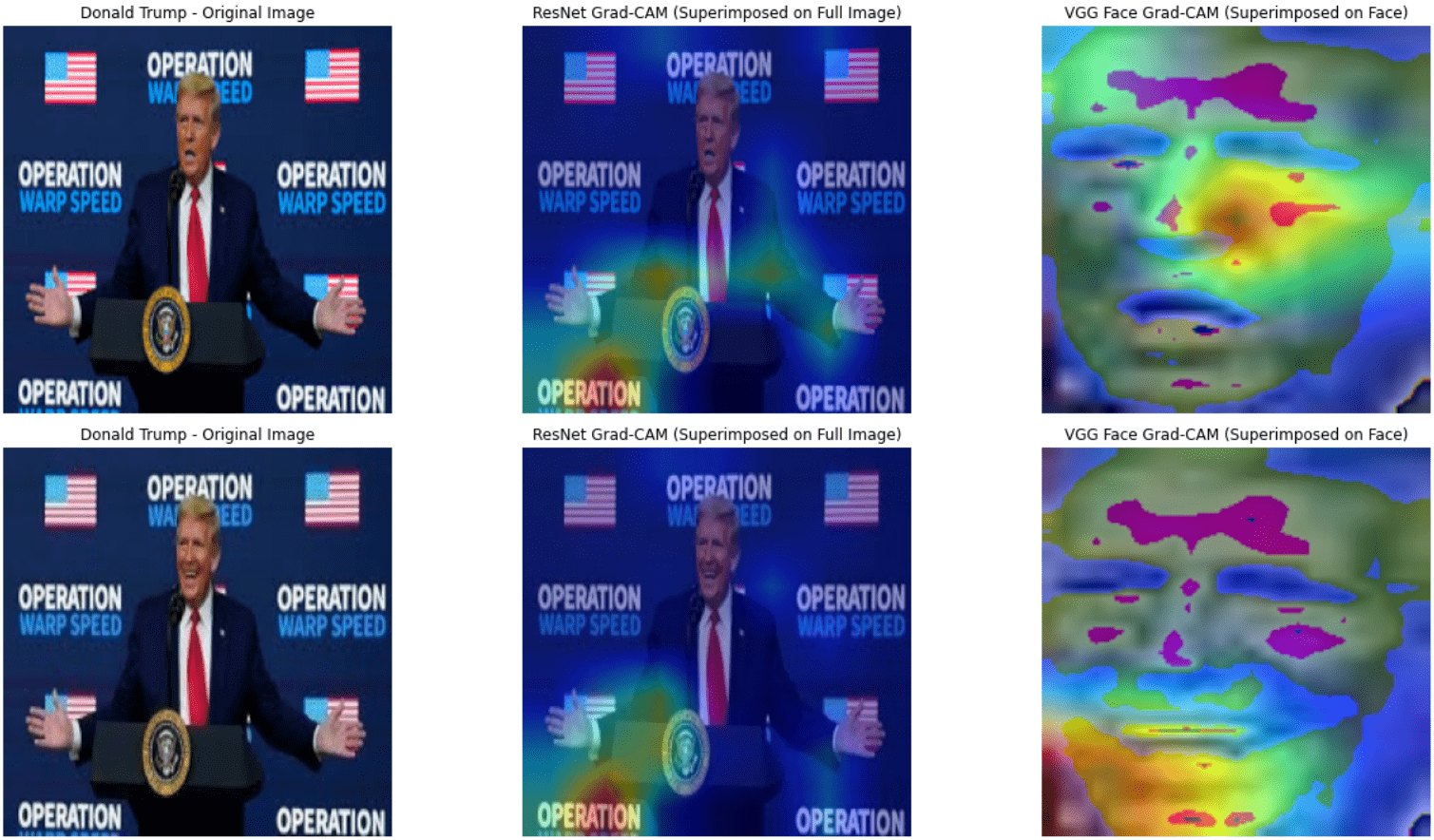}
    \caption{\small Grad-CAM visualizations for theoriginal and smile-added versions of a \textit{Washington Post} image. The ResNet101 heatmaps (right column) capture information on the global image context, while the Face-VGG heatmaps (middle column) highlight the mouth region and facial expression changes.}
    \label{fig:gradcam_example}
\end{figure}

As an illustrative example, consider the last image of Donald Trump in Figure \ref{fig:Different_Images}, from \textit{Washington Post}. Figure \ref{fig:gradcam_example} shows visualizations for two versions of this image: the original image (top row) and a smile-added version (bottom row) generated using the transformation function \(\pi^1\). The Grad-CAM visualizations illustrate two distinct patterns of model attention. First, the modified ResNet101 heatmaps predominantly focus on broader contextual features, such as the lectern, the logo, and background elements, including the event name (``Operation Warp Speed"). The gradients of \(g(\cdot)\) with respect to the global image embeddings emphasize that ResNet101 captures cues related to the overall spatial and contextual layout of the image, such as positioning and visual saliency. Second, the modified Face-VGG heatmaps display heightened sensitivity to facial features, with particular emphasis on the mouth region. For the original image, the gradient of \(g(\cdot)\) with respect to the facial embeddings shows moderate activation around the mouth, corresponding to a neutral expression. When a smile is introduced, the activation in the mouth region intensifies and extends to other facial areas associated with smile dynamics. This pattern suggests that Face-VGG is able to successfully detect smile-related features and subtle changes in facial expressions, which provides strong evidence in support of including Face-VGG in our network architecture.

\subsubsection{How Does Adding a Smile Change Predictions?}
\label{sssec:smile_change}

In the previous section, we saw that our outlet prediction model can capture the information in the smile/facial features of the focal politician. We now examine how adding a smile to a focal image changes news outlet predictions.  As discussed in $\S$\ref{sssec:identification}, our identifying assumption requires that the dataset contains similar images (e.g., from the same/similar events, as in the case of Operation Warp Speed) from a diverse set of news outlets, where the main difference between the images is the facial expression of the focal politician. As such, our goal in this section is to demonstrate how our multi-modal ML model $g$ not only learns the structure of the comparable images for similar events but also captures the editorial preferences of different news outlets for smiles versus neutral expressions. 


To do so, we turn to unsupervised learning techniques. First, we extract embeddings for all images using ResNet101. Each image is represented as a high-dimensional vector $\mathbf{e}_i \in \mathbb{R}^d$, where $d$ denotes the dimensionality of the embedding space. These embeddings capture general visual features of the images, such as texture, color, and background features, without being tied to specific news outlets. To make the embeddings interpretable and suitable for visualization, we reduce their dimensionality to two dimensions using Principal Component Analysis (PCA) and t-Distributed Stochastic Neighbor Embedding (t-SNE), followed by clustering to identify patterns. We present the details of PCA and t-SNE in our context in Web Appendix $\S$\ref{appssec:pcatsne}.

\begin{figure}[t]
  \centering
  \begin{subfigure}[t]{0.51\textwidth}
    \centering
    \includegraphics[width=\textwidth]{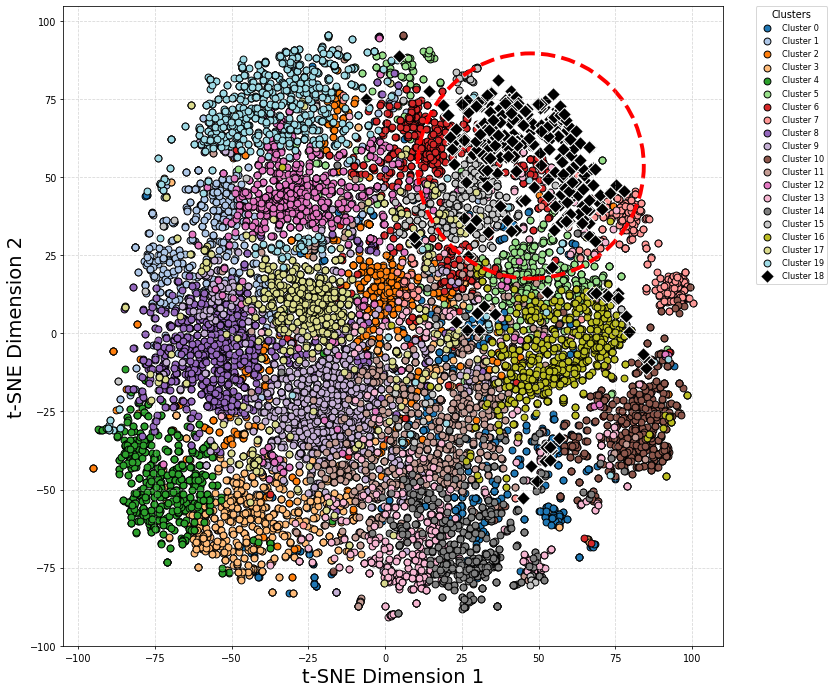}
    \caption{\small Clusters for Donald Trump: Images grouped into 20 clusters based on t-SNE projections.}
    \label{fig:trump_cluster}
  \end{subfigure}\hfill
  \begin{subfigure}[t]{0.48\textwidth}
    \centering
    \includegraphics[width=\textwidth]{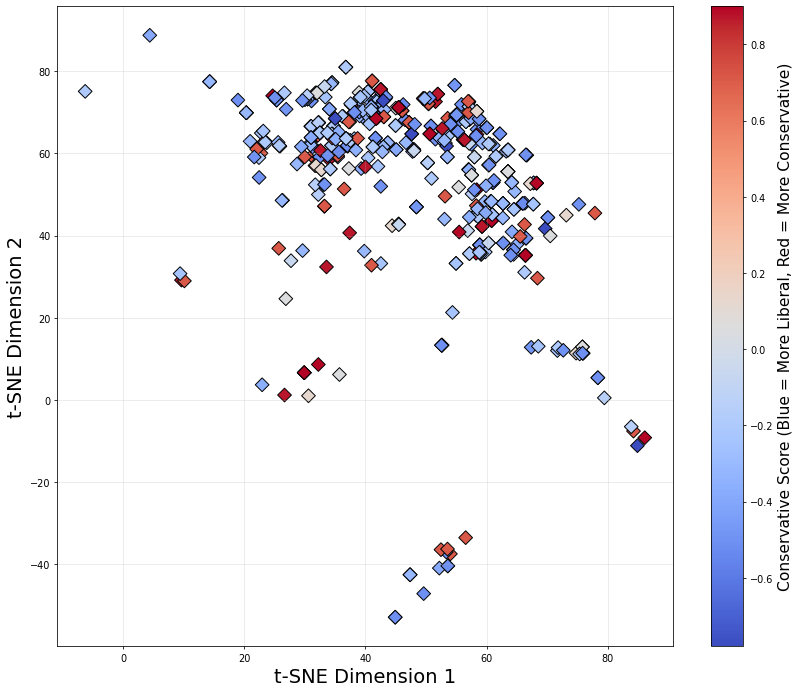}
    \caption{\small t-SNE plot for Cluster 18: Highlighting conservative scores for images in this cluster.}
    \label{fig:trump_cluster2}
  \end{subfigure}
  \caption{\small Image clustering based on t-SNE. }
  \label{fig:figure8}
\end{figure}

Figure \ref{fig:trump_cluster} shows the t-SNE projection for all images of Donald Trump, grouped into 20 clusters based on their visual similarity. Each cluster captures images with shared contextual or visual features and provides a conceptualization of ``similar events". For instance, consider Cluster 18, which is highlighted in Figure \ref{fig:trump_cluster2}. Here, each point represents an image, and the color indicates the conservative score of the news outlet that selected the image, ranging from liberal (blue) to conservative (red). The diversity in news outlets within the cluster suggests that multiple news outlets show images that are similar in structure/background (likely covering similar events). This variation is critical since it allows the outlet prediction model to capture how variation in facial features influences the likelihood of an image being chosen by a given outlet, conditional on the other image features (e.g., background, text, texture) being similar.

Figure \ref{fig:cluster_interpretation1} illustrates examples from Cluster 18, showcasing images used by different news outlets, while Figure \ref{fig:cluster_interpretation2} displays the corresponding images for the highlighted IDs. Within this cluster, consider the image corresponding to ID 196491 in our dataset by the \textit{Washington Post}, which is the same image of Donald Trump used in the previous section. In this image, Trump appears unhappy, leaning slightly towards an angry expression. As shown in Figure \ref{fig:cluster_interpretation1}, this image is initially positioned on the top right-hand side of the plot. Next, we modify the original image (ID 196491) by applying the transformation function \(\pi^1\), which adds a smile to Donald Trump while preserving the contextual elements of the image. After computing the embedding for this modified image, its coordinates in the t-SNE space are updated, as illustrated by the gold dot in the scatter plot. Interestingly, the smile-added version of the original image shifts to the left. Now, the updated position results in a shorter distance to images such as ID 36529 from the \textit{Daily Mail} and ID 72022 from \textit{Newsmax}, compared to its distance to the original image ID 196491 by the \textit{Washington Post} and ID 12736 by \textit{ABC News}. This demonstrates how the addition of a smile alters the embedding position to align with images that share similar facial expressions, even though the overall context remains unchanged.

\begin{figure}[t]
  \centering
  \begin{subfigure}[t]{0.63\textwidth}
    \centering
    \includegraphics[width=\textwidth]{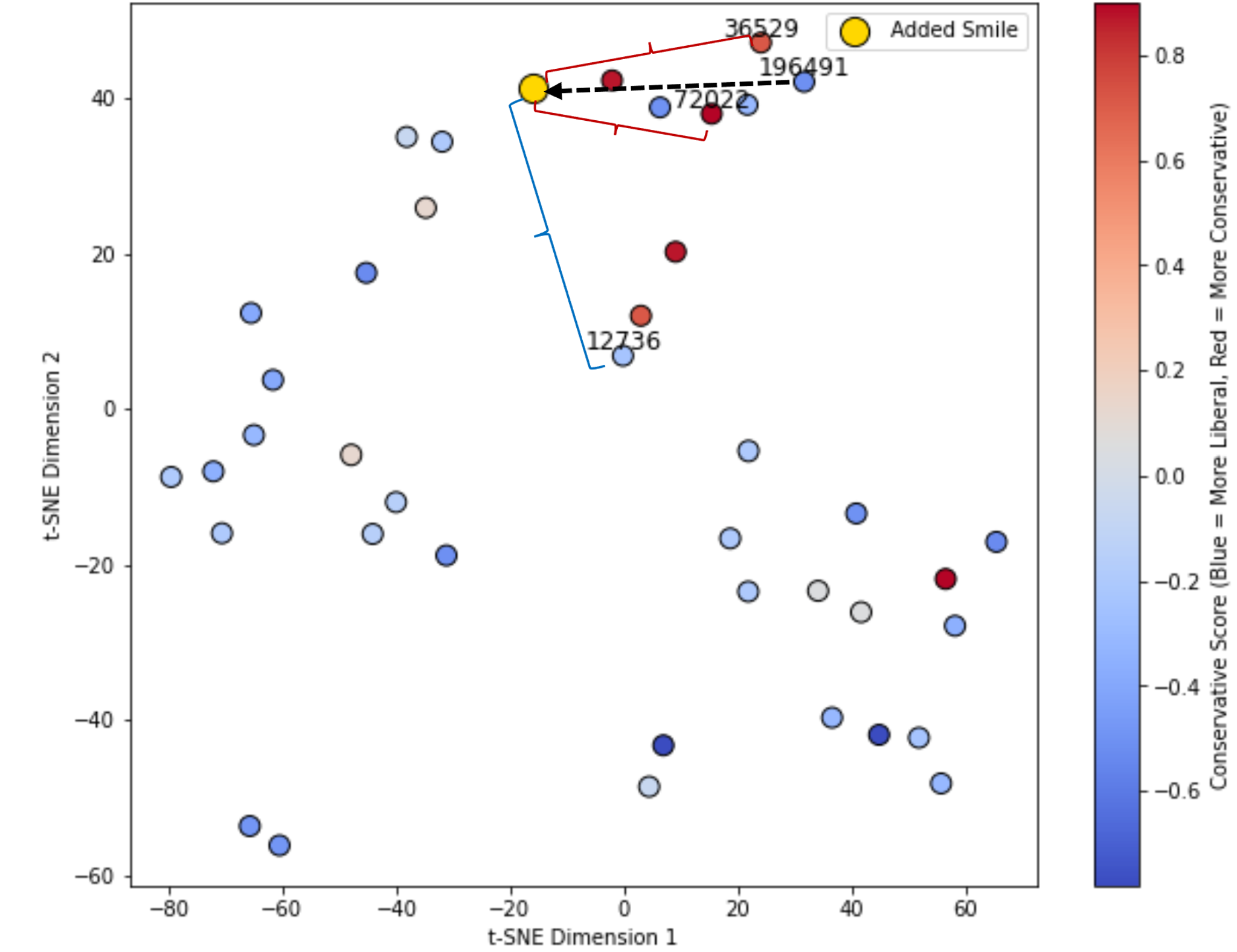}
    \caption{\small Embedding positions of the original image (ID: 196491), similar images, and the image with a smile added (highlighted in yellow).}
    \label{fig:cluster_interpretation1}
  \end{subfigure}\hfill
  \begin{subfigure}[t]{0.35\textwidth}
    \centering
    \includegraphics[width=\textwidth]{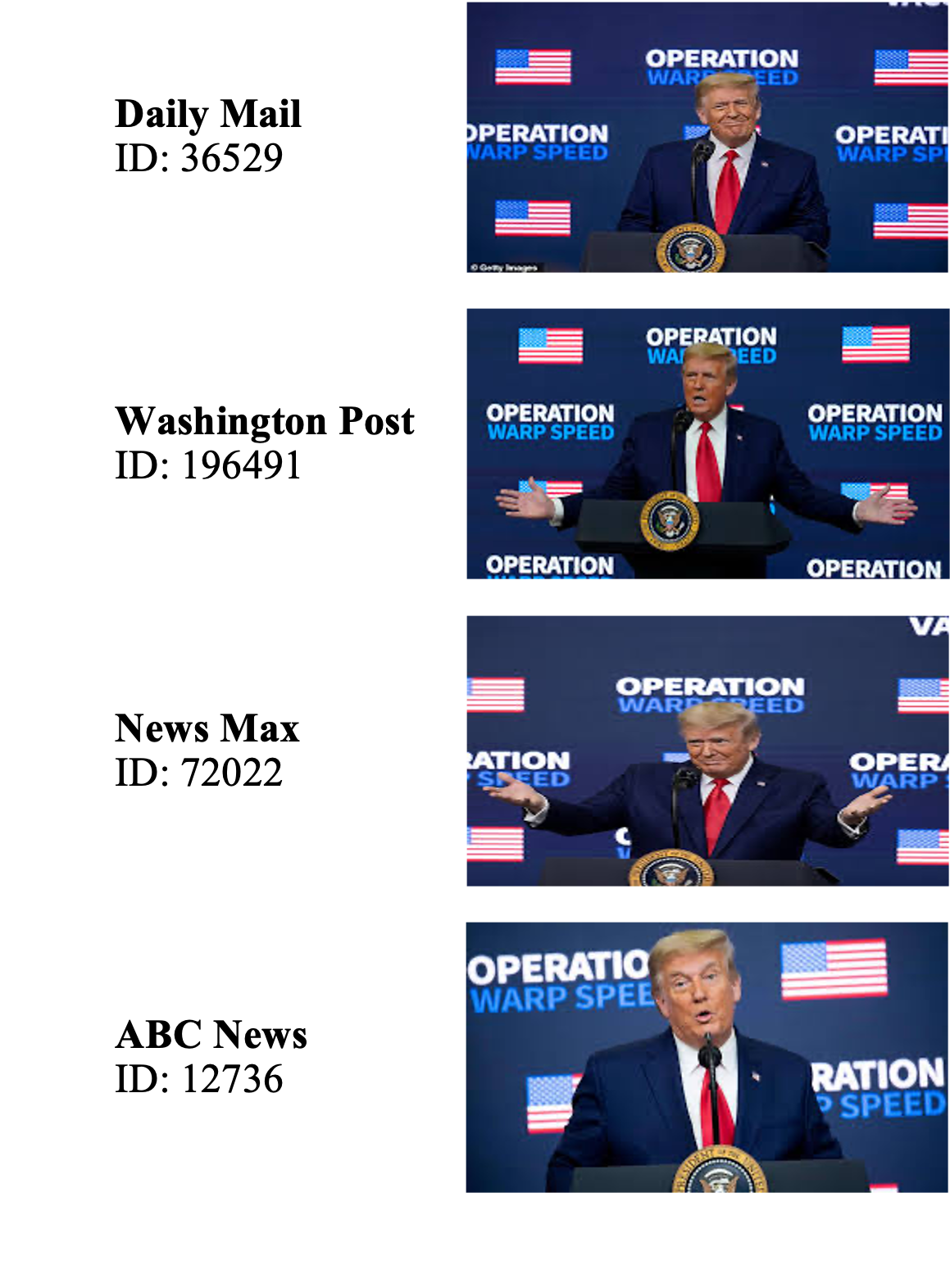}
    \caption{\small Images corresponding to highlighted points in the t-SNE plot.}
    \label{fig:cluster_interpretation2}
  \end{subfigure}
  \caption{\small t-SNE plot of the Cluster 18 of Donald Trump's images illustrating the effect of adding a smile. }
  \label{fig:cluster_interpretation}
\end{figure}

Although the t-SNE coordinates provide interpretable insights into how our model works, they do not capture the full complexity of the image features that our model $\hat{g}$ captures. Therefore, in Table \ref{tab:smile_predictions}, we present our model's outlet predictions for the fourth image from Figure 4 (the one that was originally shown in Washington Post) for two cases -- the original image and smile-added image version. We see substantial differences in the outlet predictions between the original image and the counterfactual version with the smile. Notably, we find that adding a smile substantially increases the predicted probability for \textit{Daily Mail} (from 2.89\% to 20.97\%) and \textit{Newsmax} (from 7.10\% to 16.52\%), slightly decreases the predicted probability for \textit{ABS News} (from 2.59\% to 1.70\%), and substantially decreases the predicted probability for \textit{Washington Post} (from 30.98\% to 3.32\%). 


\begin{table}[t]
\centering
\small
\begin{tabular}{lcc}
\hline\hline
\textbf{News Outlet} & \textbf{Original Image (\%)} & \textbf{Smile-Added Image (\%)} \\ \hline
{\it Daily Mail}           & 2.89                        & 20.97                           \\
{\it Newsmax}           & 7.10                        & 16.52                           \\
{\it Washington Post}      & 30.98                         & 3.32                           \\
{\it ABC News}             & 2.59                         & 1.70                            \\ 
\hline\hline
\end{tabular}
\caption{Outlet predictions for the original and smile-added versions of the Donald Trump image.}
\label{tab:smile_predictions}
\end{table}

In summary, we see that adding a smile systematically shifts the outlet prediction probabilities, consistent with the finding in the prior work in the advertising domain that shows facial expressions and emotions can shape audience engagement by capturing attention and influencing perception \citep{teixeira2012emotion}. Specifically, we see an increase (decrease) in the predicted probabilities for right-leaning (left-leaning) news outlets when we add a smile to an image of Donald Trump. Nevertheless, this is just a single instance in the data that we use to illustrate the intuition behind how our algorithm works. In the next section, we present more systematic measures to quantify visual polarization. 

\subsection{Visual Slant and Polarization}
\label{ssec:MainResults}
We now present our main results on visual slant. All results are direct applications of Algorithm \ref{alg:pmcig} in $\S$\ref{ssec:Algorithm}. Before presenting the results, we review a few important considerations in applying our algorithm. First, recall that our visual slant parameter requires a neutral outlet denoted by $y_n$ in Definition \ref{defn:visual_slant_def}. We use {\it Reuters} as the baseline outlet $y_n$ to measure visual slant, as {\it Reuters} is a largely neutral and fact-based outlet.\footnote{It is worth emphasizing that one could easily change the choice of base. Although the visual slant measure will change with a different choice, the overall visual polarization between two outlets remains unchanged.} Second, we use a train-test split, and all the polarization and slant measures are shown for the test data $\mathcal{D}_{\emph{test}}$ using the estimates obtained from the training data $\mathcal{D}_{\emph{train}}$. Third, for each politician, we start with three neutral images, generate counterfactual smiling versions of these three images, and use these six images in our polarization and slant measurement for each politician. 

This section is organized as follows. In $\S$\ref{sssec:pol_dist}, we present the overall distribution of visual slant across all outlets and politicians. We then explore the extent of heterogeneity across news outlets in $\S$\ref{sssec:pol_outlet} to see which outlets exhibit higher levels of visual slant and validate our measure using external measures of media slant. Finally, in $\S$\ref{sssec:pol_politician}, we document the heterogeneity in visual slant at the politician level.

\subsubsection{Distribution of Visual Slant} 
\label{sssec:pol_dist}
An important feature of our algorithm is its ability to produce an individual-level measure of visual slant \(\hat{\rho}^T_{i}(p_i, y^{k}_{i}, y^{Reuters}_{i}) \). As such, for each article $i$ featuring politician $p$, we can estimate 19 visual slant measures corresponding to all the outlets other than Reuters. Based on the scores from \cite{faris2017partisanship}, \cite{allsides2024}, and \cite{flaxman2016filter}, we categorize the three most Republican-leaning news outlets as: \textit{Fox News}, \textit{Newsmax}, and \textit{Daily Mail}, and the three most Democratic-leaning news outlets as: \textit{Washington Post}, \textit{CNN}, and \textit{The New York Times}. Figure \ref{fig:Hist_Overall} shows the distributions of visual slant for Democratic and Republican politicians as featured in Democratic- and Republican-leaning news outlets. If there were no visual slant or polarization, all distributions would be tightly centered around zero. However, we observe that for both Republican- and Democratic-leaning outlets, the individual-level visual slant measures deviate from zero and exhibit significant dispersion. This indicates the presence of a visual slant in these outlets. Additionally, we notice a clear divide in the distributions of Democratic- and Republican-leaning outlets for both Republican and Democratic politicians, pointing to evidence of visual polarization. 

\begin{figure}[htp!]
  \centering
  \begin{subfigure}[t]{0.5\textwidth}
    \centering
    \includegraphics[width=\textwidth]{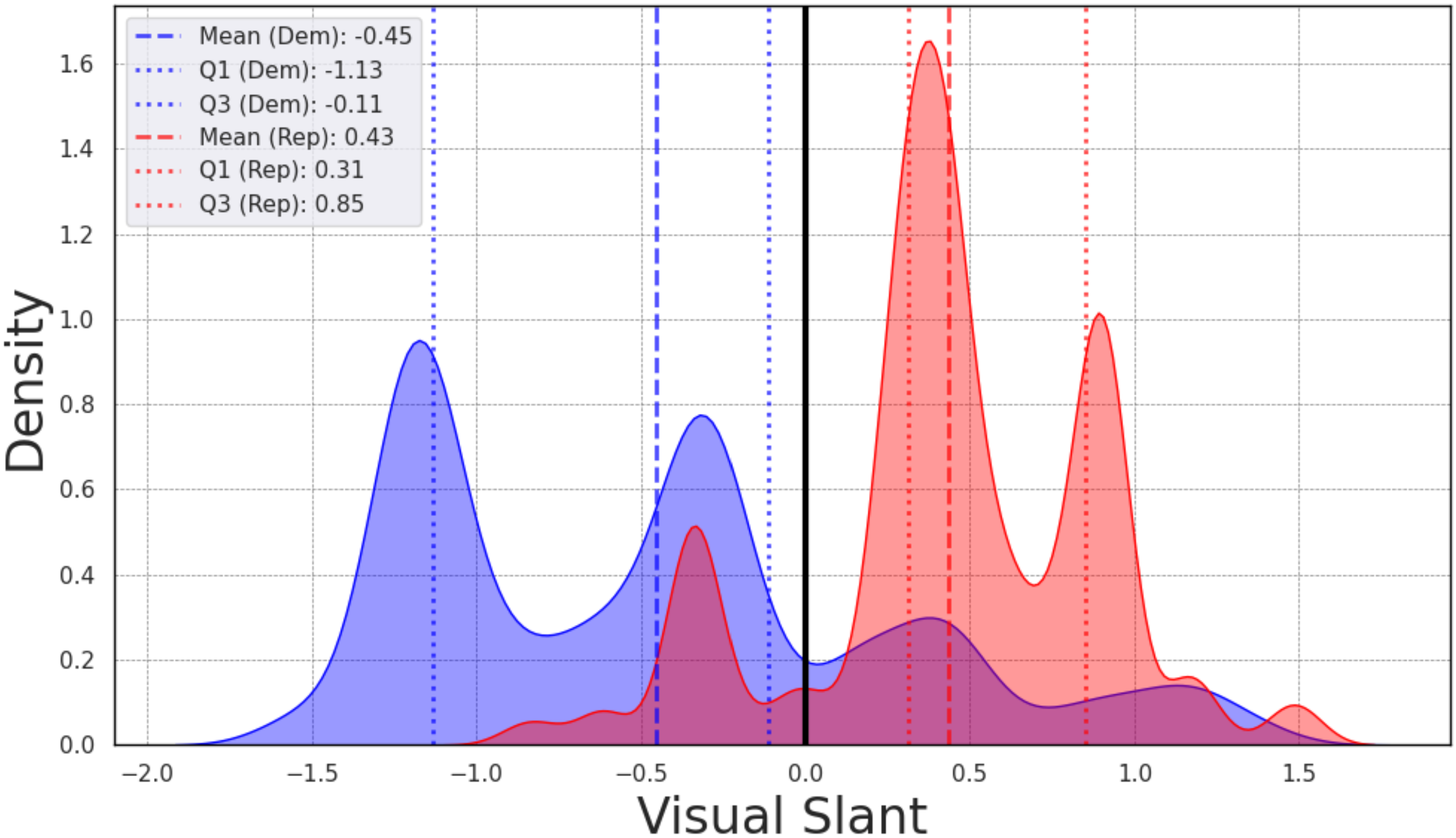}
    \caption{\small Republican politicians.}
    \label{fig:Hist_Overall1}
  \end{subfigure}\hfill
  \begin{subfigure}[t]{0.5\textwidth}
    \centering
    \includegraphics[width=\textwidth]{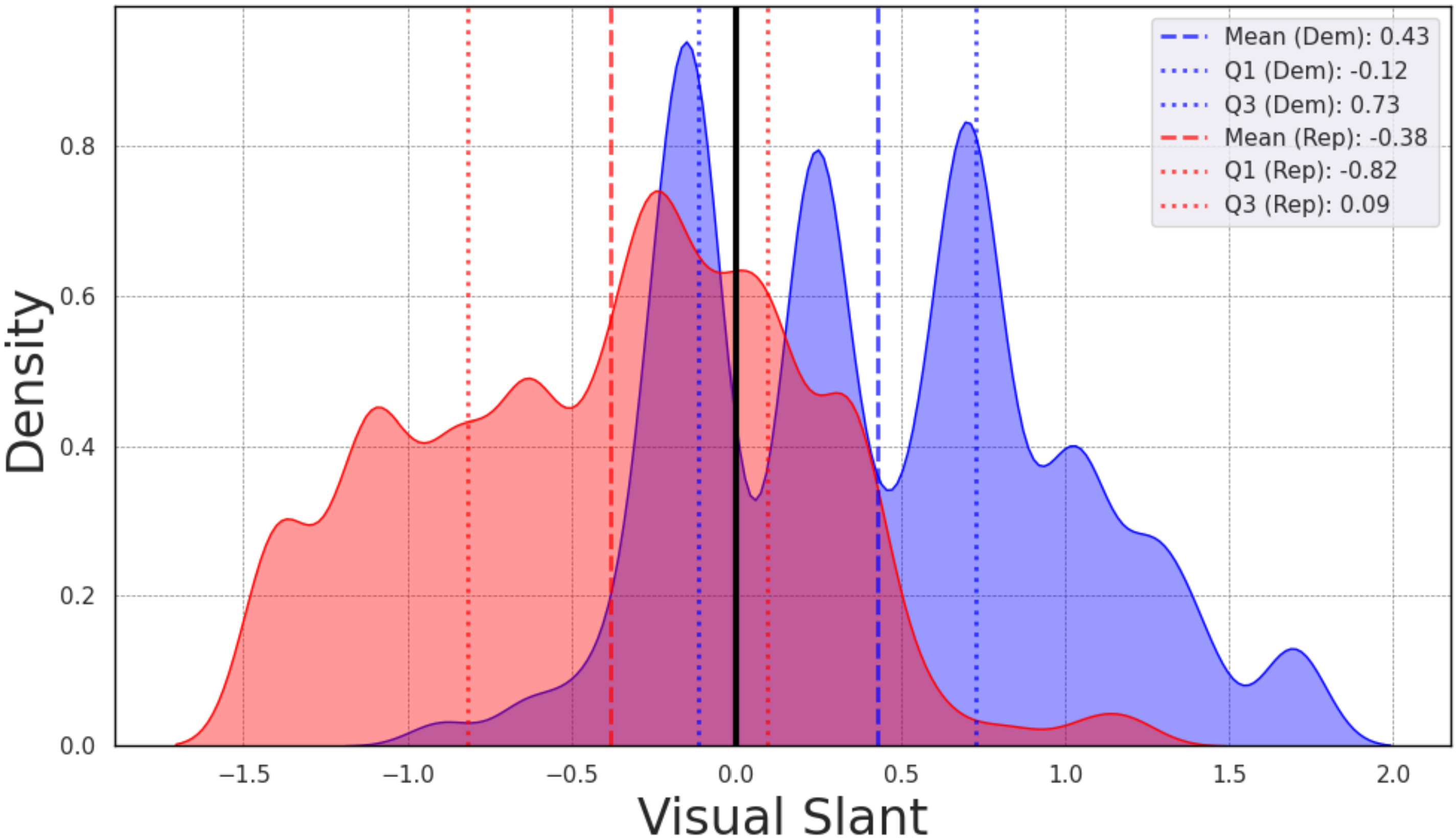}
    \caption{\small Democratic politicians.}
    \label{fig:Hist_Overall2}
  \end{subfigure}
  \caption{\small Histograms of {\it visual slant} for Democratic and Republican Politicians in Democratic- and Republican-leaning News Outlets.}
  \label{fig:Hist_Overall}
\end{figure}

Next, we examine if there is a systematic difference between Republican- and Democratic-learning outlets by examining their averages in each panel. We denote the set of Republican and Democratic politicians by $\mathcal{P}_R$ and $\mathcal{P}_D$, respectively. Similarly, we denote the sets of Republican- and Democratic-leaning outlets defined above are denoted by $\mathcal{Y}_R$ and $\mathcal{Y}_D$, respectively. For Republican politicians, in Figure \ref{fig:Hist_Overall1}, the distributions of \( \hat{\rho}^T_{i} \) show distinct differences between left- and Republican-leaning news outlets. The visual slant measurement for left-leaning news outlets is \textit{-0.45}, implying that, on average, smiling images of Republican politicians decrease the utility of a news outlet classified as Democratic-leaning compared to Reuters. In contrast, the visual slant measurement for Republican-leaning news outlets is \textit{0.43}, suggesting that, on average, smiling images of Republican politicians significantly increase their utility compared to Reuters. Therefore, for Republican politicians, we can conclude that:
\begin{equation}
\hat{\rho}^T(p \in \mathcal{P}_R, y^{k} \in \mathcal{Y}_D, y^{Reuters}) < 0 < \hat{\rho}^T(p \in \mathcal{P}_R, y^{k} \in \mathcal{Y}_R, y^{Reuters})
\end{equation}
For Democratic politicians, in Figure \ref{fig:Hist_Overall2}, the visual slant distribution for Democratic and Republican news outlets also shows clear differences. The average visual slant for Democratic news outlets is \textit{0.43}, indicating that, on average, images of smiling Democratic politicians increase the utility of a Democratic-leaning outlet compared to Reuters (neutral baseline). Conversely, the average visual slant for Republican-leaning new outlets is \textit{-0.38}, suggesting that, on average, smiling images of democratic politicians decrease their utility compared to Reuters. Therefore, for Democratic politicians, we can conclude that:
\begin{equation}
\hat{\rho}^T(p \in \mathcal{P}_D, y^{k} \in \mathcal{Y}_R, y^{Reuters}) < 0 < \hat{\rho}^T(p \in \mathcal{P}_D, y^{k} \in \mathcal{Y}_D, y^{Reuters})
\end{equation}

In Web Appendix $\S$\ref{appssec:hyptest}, we employ two statistical tests to analyze these differences: the Kolmogorov-Smirnov (K-S) test and one-sample t-tests. Both confirm that there are significant differences between the distributions and means of Republican- and Democratic-leaning outlets for both Democratic and Republican politicians. 

\subsubsection{Visual Slant Across News Outlets}
\label{sssec:pol_outlet}
We now document the extent to which visual slant varies across outlets. Figure \ref{fig:side_by_side_smile_effect} shows our visual slant measure for each news outlet ($y^k \in \mathcal{Y}$) for Republican politicians as \( \hat{\rho}^T(p \in  \mathcal{P}_R, y^{k}, y^{Reuters}) \)), and for Democratic politicians as \( \hat{\rho}^T(p \in \mathcal{P}_D, y^{k}, y^{Reuters}) \), ranked in bar charts. In Figure \ref{fig:side_by_side_smile_effect1}, we see that Democratic news outlets have a positive visual slant measurement for Democratic politicians and a negative visual slant for Republican politicians. Conversely, in Figure \ref{fig:side_by_side_smile_effect2}, we see that Republican news outlets exhibit a positive visual slant measurement for Republican politicians and a negative visual slant measurement for Democratic politicians. 

\begin{figure}[htp!]
  \centering
  \begin{subfigure}[t]{0.50\textwidth}
    \centering
    \includegraphics[width=\textwidth]{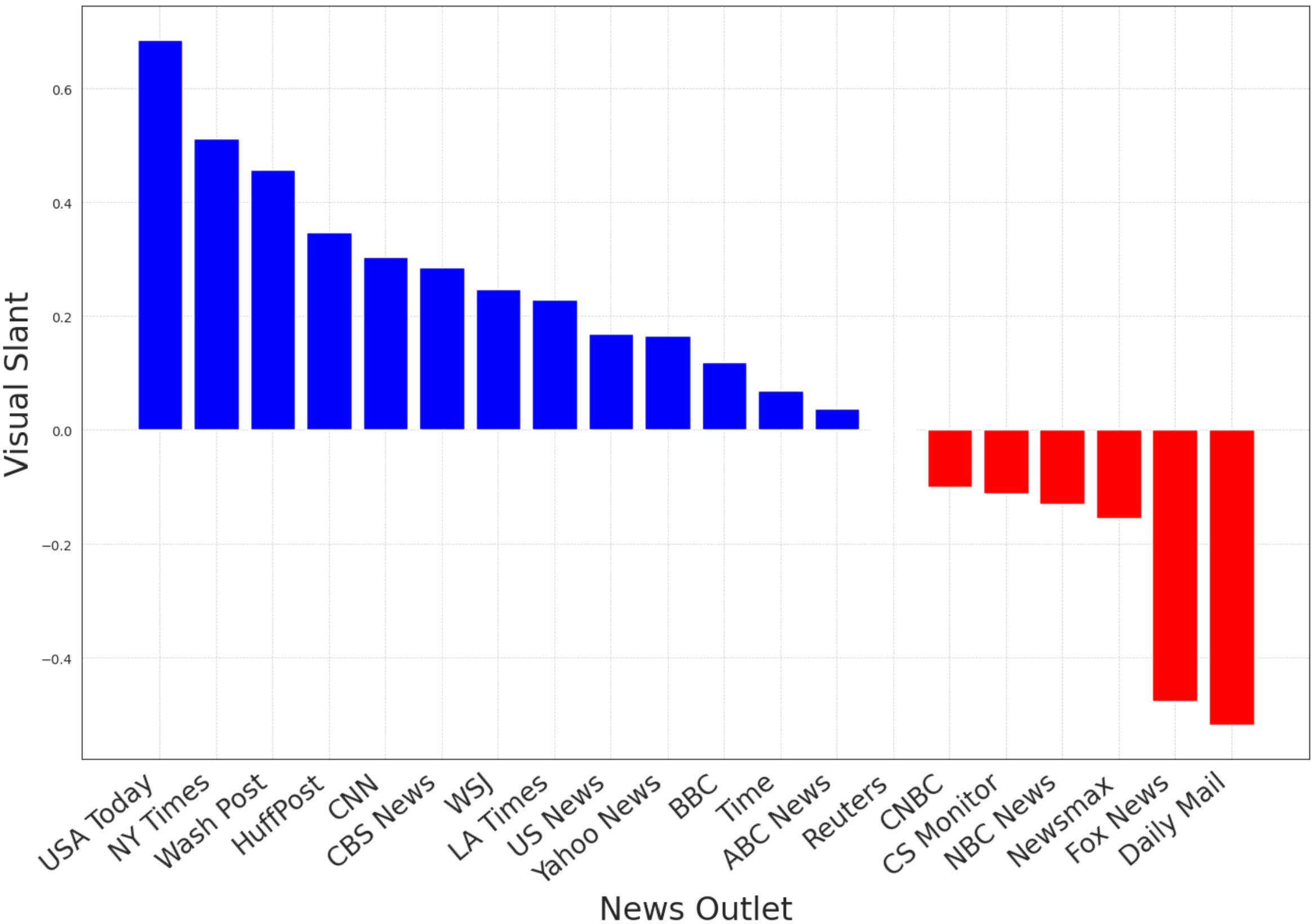}
    \caption{\small Democratic politicians.}
    \label{fig:side_by_side_smile_effect1}
  \end{subfigure}\hfill
  \begin{subfigure}[t]{0.50\textwidth}
    \centering
    \includegraphics[width=\textwidth]{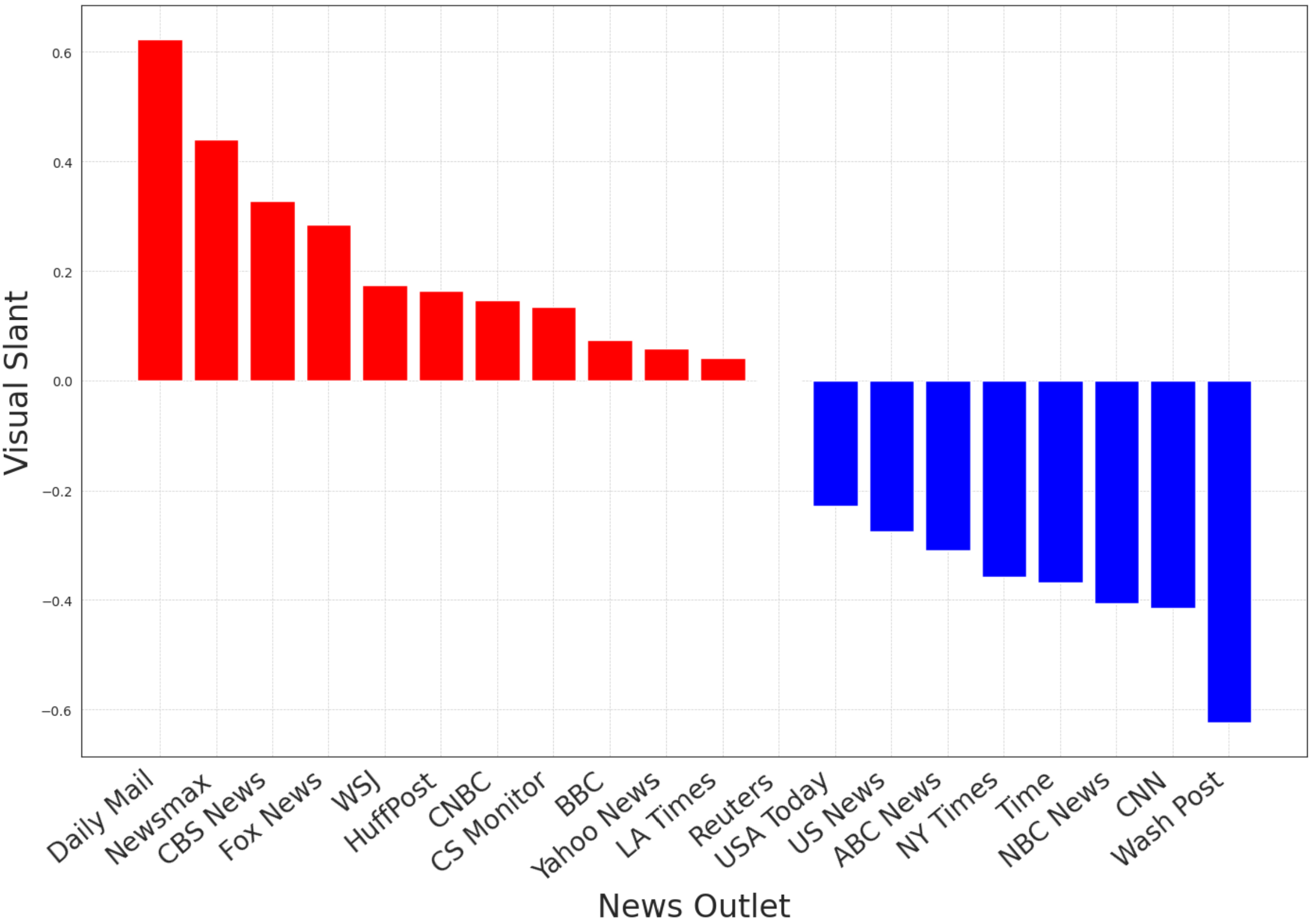}
    \caption{\small Republican politicians.}
    \label{fig:side_by_side_smile_effect2}
  \end{subfigure}
  \caption{\small Overall visual slant measures for Democratic and Republican politicians by news outlet.}
  \label{fig:side_by_side_smile_effect}
\end{figure}

Further, outlets such as \textit{Fox News} and \textit{Daily Mail} display a more positive visual slant measurement for Republican politicians, indicating a more favorable portrayal of these politicians. Simultaneously, these outlets show a more negative visual slant measure of Democratic politicians, displaying a less favorable portrayal of these politicians. On the other hand, outlets like \textit{CNN}, \textit{Washington Post}, and \textit{The New York Times} exhibit a more positive visual slant measurement for Democratic politicians and a more negative visual slant for Republican politicians. Interestingly, some outlets, such as the \textit{Wall Street Journal} and \textit{CBS News}, demonstrate a positive visual slant for both Democratic and Republican politicians. This suggests that their editorial approach may balance portrayals of politicians from both parties, reflecting a more centrist ideological positioning rather than strong partisan alignment.

To establish a single outlet-specific visual slant measurement, we compute the difference between the visual slant measures for Republican and Democratic politicians within each outlet. A larger difference indicates a stronger conservative visual slant. Accordingly, we define our unified measure as the \textit{Conservative Visual Slant (CVS)} as follows:
\begin{equation}\label{equ:CVS}
CVS(y_k) = 
\hat{\rho}^T(p \in \mathcal{P}_R, y^{k}, y^{\text{Reuters}}) 
- \hat{\rho}^T(p \in \mathcal{P}_D, y^{k}, y^{\text{Reuters}}), 
\quad \text{where } y^k \in \mathcal{Y}.
\end{equation}

Intuitively, this metric captures the degree to which an outlet portrays Republican politicians more favorably compared to Democratic politicians. For example, if a Republican-leaning outlet scores high on this measure, it implies that it emphasizes showing Republican politicians with a smile while depicting Democratic politicians without a smile. Figure \ref{fig:smile_effect_BarPlot_NewsOutlet} illustrates the conservative visual slant scores for all outlets in our dataset, sorted in increasing order. The results reveal that outlets such as \textit{Daily Mail} and \textit{Fox News} demonstrate high conservative visual slant, strongly favoring Republican politicians. On the other hand, outlets like \textit{Washington Post}, \textit{USA Today}, and \textit{NY Times} exhibit a pronounced liberal slant, favoring Democratic politicians. Meanwhile, outlets such as \textit{Wall Street Journal}, \textit{BBC News}, and \textit{CBS News} exhibit relatively neutral or low levels of slant. Overall, there is significant heterogeneity in visual slant across outlets, providing insights into which outlets are the most and least polarized visually.

\begin{figure}[htp!]
    \centering
    \includegraphics[width=.7\linewidth]{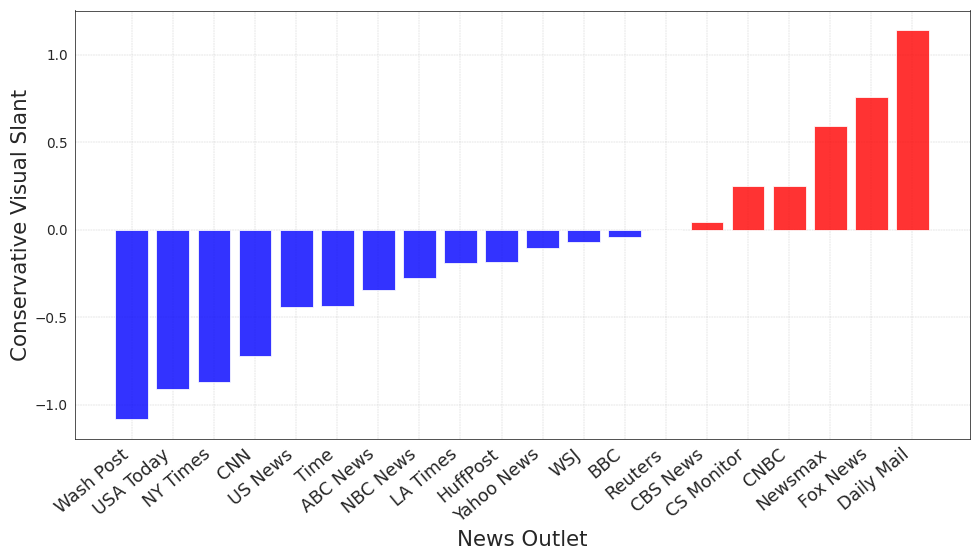}
    \caption{\small {\it Conservative Visual Slant (CVS)} of news outlets.}
    \label{fig:smile_effect_BarPlot_NewsOutlet}
\end{figure}

Lastly, we perform two validation tests to show how our proposed {\it CVS} measure relates to the existing measures of media slant, and how it performs relative to benchmarks that measure visual slant. We present a brief summary of our validation exercise below and refer readers to Web Appendix $\S$\ref{appssec:News_Outlet_Level} for a comprehensive analysis of outlet-level visual slant, including individual histograms, statistical summaries, and additional tests to quantify bias and polarization across news outlets. 
\squishlist
\item \textit{Consistency with external measures of media slant:} We begin by assessing the correlation between our {\it CVS} measure and existing media slant measures derived from independent sources not used in our algorithm. Specifically, we consider three existing measures from \cite{faris2017partisanship}, \cite{flaxman2016filter}, and \cite{allsides2024}. For instance, \cite{faris2017partisanship} quantifies media slant based on the proportion of a media outlet's stories shared on Twitter by users who predominantly retweet conservative-leaning sources. Since our algorithm does not incorporate the data used to construct these external slant measures, a positive correlation between our {\it CVS} measure and these benchmarks would provide validation for our approach. For the 20 outlets in our dataset, we find significant and positive correlations of 0.79, 0.55, and 0.81 with the measures from \cite{faris2017partisanship}, \cite{flaxman2016filter}, and \cite{allsides2024}, respectively. We visualize these correlations and conduct formal statistical tests in Web Appendix $\S$\ref{appssec:MoreResults} to further support this validation.

\item \textit{Comparison to existing measures of visual slant:} Second, we compare our {\it CVS} measure with existing outlet-specific visual slant measures from prior literature. Specifically, we examine the visual slant measure proposed by \cite{boxell2021slanted}, which employs a reduced-form approach similar to that discussed in $\S$\ref{sec:reduced_form}. Our objective is to determine which measure better aligns with existing polarization benchmarks discussed earlier. In the main text, we focus on the conservative share score by \citet{faris2017partisanship}, a widely recognized benchmark in media slant research, including in \citet{boxell2021slanted}. To quantify the alignment between these visual slant measures and the conservative share score from \cite{faris2017partisanship}, we conduct both Pearson and Spearman correlation analyses, assessing how well each measure captures the overall slant of news outlets. Given that our dataset and \cite{boxell2021slanted} share 11 common outlets, we focus on this subset for direct comparison.

\begin{figure}[htp!]
  \centering
  \begin{subfigure}[t]{0.50\textwidth}
    \centering
    \includegraphics[width=\textwidth]{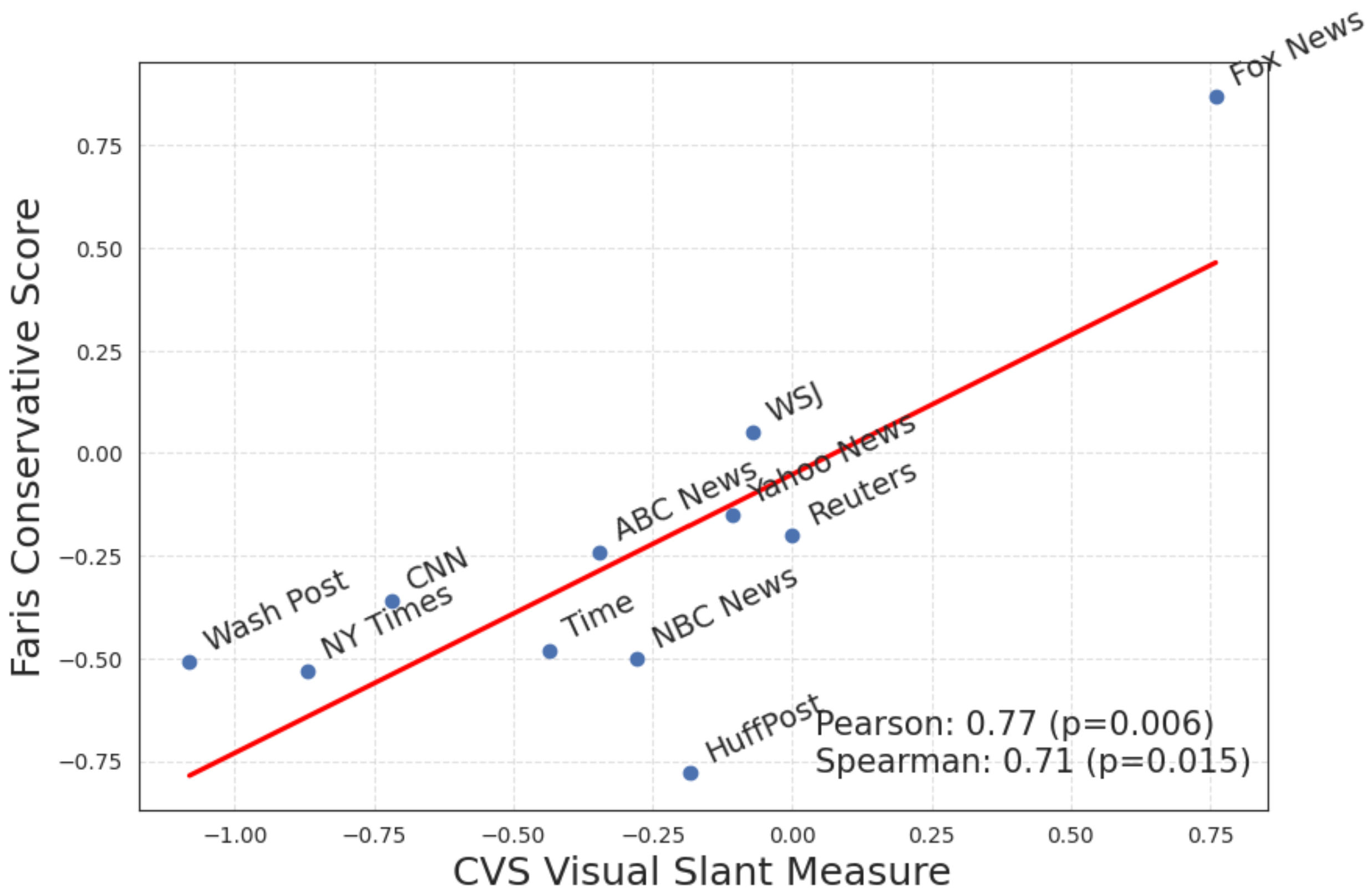}
    \caption{\small {\it Conservative Visual Slant (CVS)} from PMCIG}
    \label{fig:csv_vs_boxell1}
  \end{subfigure}\hfill
  \begin{subfigure}[t]{0.50\textwidth}
    \centering
    \includegraphics[width=\textwidth]{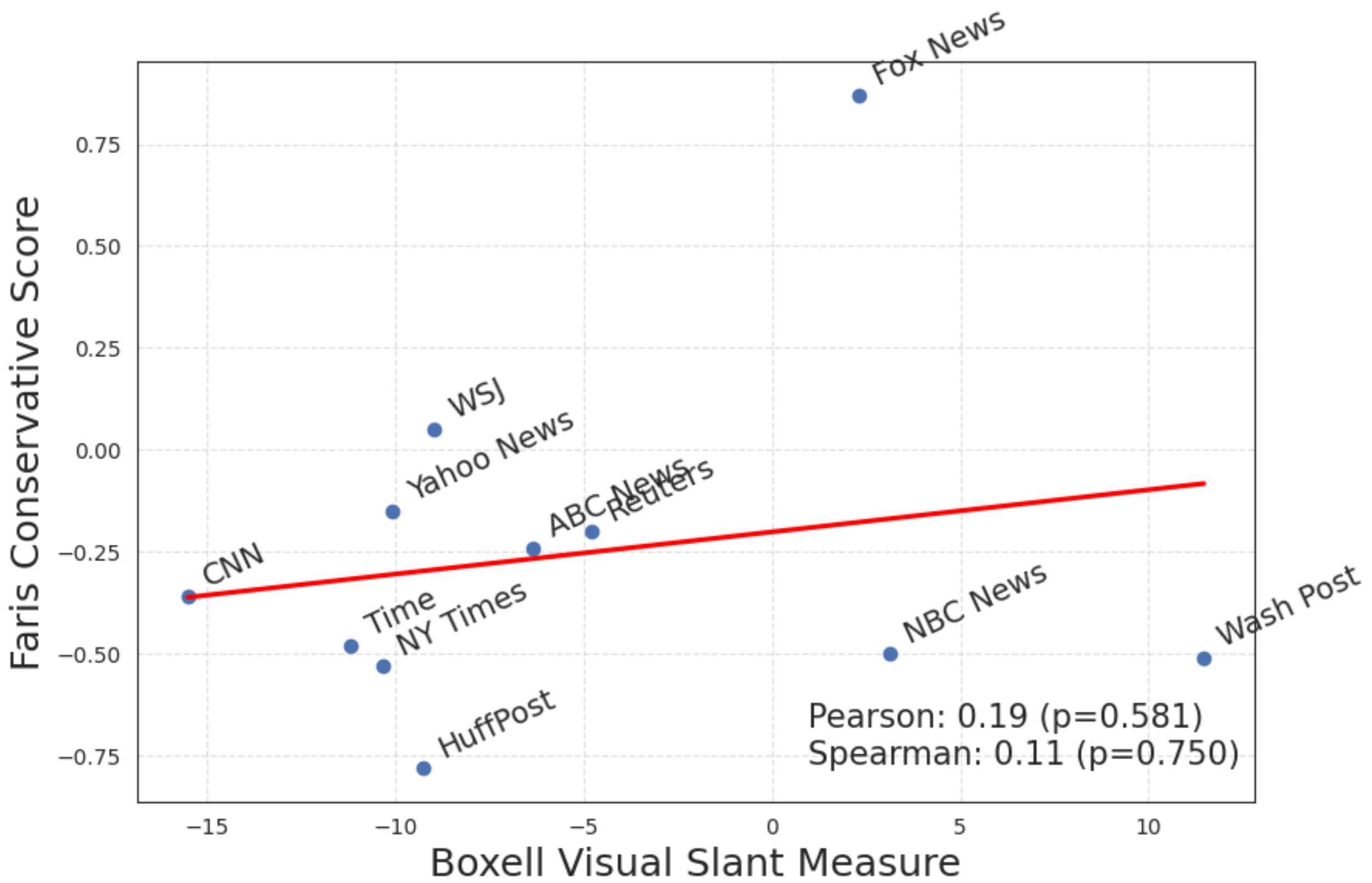}
    \caption{\small Visual Slant Measure from \cite{boxell2021slanted}.}
    \label{fig:csv_vs_boxell2}
  \end{subfigure}
  \caption{\small Comparison of {\it CVS} and visual slant measure by \cite{boxell2021slanted} against Conservative Share Score from \cite{faris2017partisanship}.}
  \label{fig:csv_vs_boxell}
\end{figure}
Figure \ref{fig:csv_vs_boxell} presents a comparative analysis of visual slant measures across news outlets. In both plots, the x-axis represents each outlet’s \textit{conservative share score} from \citet{faris2017partisanship}, while the y-axis represents either our {\it CVS} metric (Figure \ref{fig:csv_vs_boxell1}) or the visual slant measure from \cite{boxell2021slanted} (Figure \ref{fig:csv_vs_boxell2}). Each point corresponds to a news outlet, with a red trendline illustrating the relationship between visual slant and conservative share scores. As shown in these figures, our {\it CVS} measure exhibits a statistically significant and strong Pearson and Spearman correlation with the conservative share score from \cite{faris2017partisanship}, whereas the visual slant measure from \cite{boxell2021slanted} shows only a weak, statistically insignificant correlation. This finding further validates our method, demonstrating that our {\it CVS} measure more accurately captures media slant compared to prior visual slant metrics. Additionally, in Web Appendix $\S$\ref{appssec:comparison}, we replicate the analysis presented in this part using two other widely recognized conservative share scores and demonstrate the robustness of our findings.
\squishend
In summary, these results demonstrate the validity of our approach in comparison to existing benchmark measures of media bias. 

\subsubsection{Visual Polarization Across Politicians}
\label{sssec:pol_politician}

\begin{figure}[htp!]
    \centering
    \includegraphics[width=0.8\linewidth]{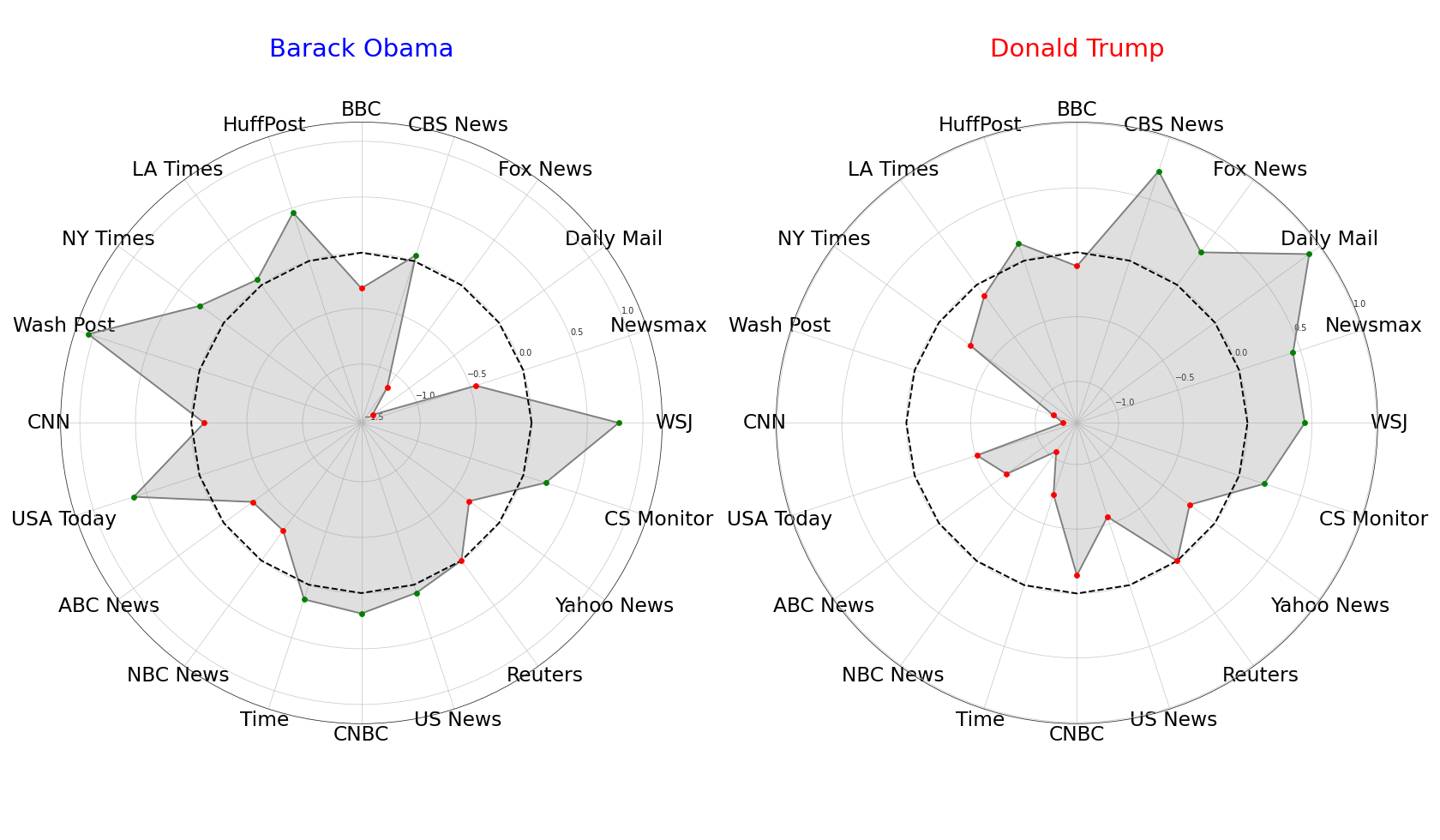}
    \caption{\small Radar plots of visual slant in each news outlet for Barack Obama and Donald Trump.}
    \label{fig:fourpoliticinas}
\end{figure}

We now examine how different politicians are portrayed across news outlets and identify the politicians with the most polarizing depictions in media. Figure \ref{fig:fourpoliticinas} presents the visual slant measures for two prominent figures: Barack Obama and Donald Trump. Consistent with our earlier findings, we observe a clear divide in visual slant scores -- Obama receives more favorable portrayals in Democratic-leaning outlets, while Trump is depicted more positively in Republican-leaning outlets. We extend this analysis to other politicians, with detailed results provided in Web Appendix $\S$\ref{appssec:Politicians_Level}.

Next, we develop a measure for the overall extent of visual polarization for each politician. Intuitively, we expect to observe a greater extent of variability in visual slant measures across outlets for a more polarizing politician. As such, we quantify the {\it Overall Visual Polarization (OVP)} for each politician, which is measured by the standard deviation of the polarization across all news outlets as:
\begin{equation}
\label{equ:OVP}
OVP(p) = 
\sqrt{\frac{1}{|\mathcal{Y}|} \sum_{y^k \in \mathcal{Y}} \left( \hat{\rho}^T(p, y^{k}, y^{\text{Reuters}}) - \mu_p \right)^2},
\quad \text{where} \quad \mu_p = \frac{1}{|\mathcal{Y}|} \sum_{y^k \in \mathcal{Y}} \hat{\rho}^T(p, y^{k}, y^{\text{Reuters}})
\end{equation}

\begin{figure}[htp!]
    \centering
    \includegraphics[width=0.7\linewidth]{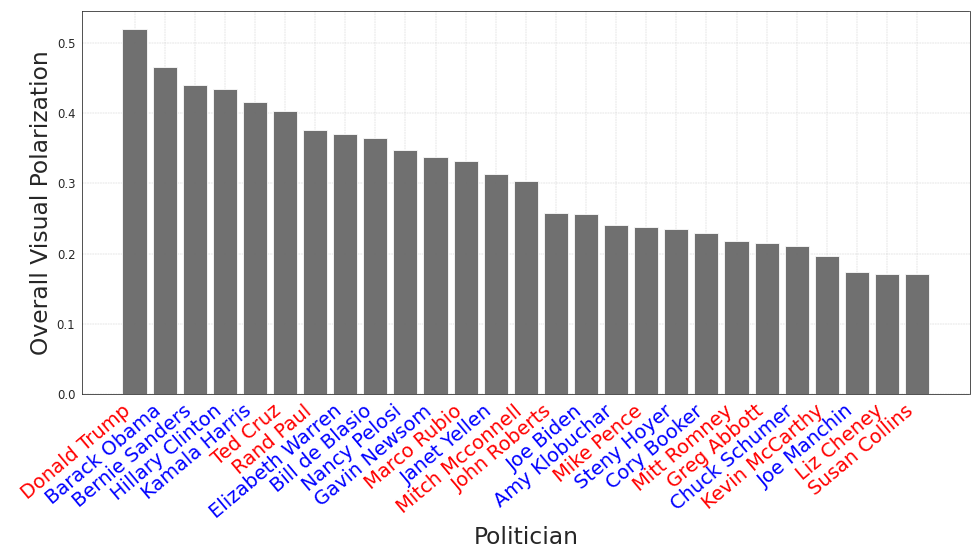}
    \caption{\small {\it Overall Visual Polarization (OVP)} of politicians across all news outlets, ranked from most to least polarized.}
    \label{fig:Overall_Politicians}
\end{figure}

Figure \ref{fig:Overall_Politicians} ranks the politicians from most polarized to least polarized based on this criterion. We see that Donald Trump and Barack Obama are the top two politicians with the most visually polarizing portrayal across all outlets. Given that both were the president/presidential candidate for significant chunks of our observation period and at the forefront of multiple polarizing discussions and events, this is understandable. Further, Rand Paul and Ted Cruz, both prominent Republican politicians, also show high levels of polarization, likely attributed to their roles in policy debates and media prominence during key political events. On the Democratic side, figures such as Bernie Sanders, Hillary Clinton, and Kamala Harris display significant polarization, reflecting their leadership roles and ideological positions within the party. Further, on the right end of Figure \ref{fig:Overall_Politicians}, we observe politicians with lower {\it OVP} scores. Notably, Susan Collins, Liz Cheney, and Joe Manchin rank among the least polarized, aligning with their reputations for bipartisanship and moderation \citep{cnn_2018}. These findings have important implications for political strategy, particularly in how politicians shape their election campaigns when deciding whether to mobilize their base or appeal to a broader, centrist electorate. 


Finally, we seek to validate our politician-specific {\it OVP} measure by comparing it with external indicators of a politician's level of polarization. However, there exists no widely accepted, politician-specific polarization metric. To overcome this issue, we propose that a politician’s ideological alignment with their primary constituency serves as a meaningful proxy, since more polarizing politicians are likely to perform better in ideologically aligned constituencies and struggle in misaligned ones. To quantify this alignment, we use the politician’s party's success in the 2016 election within their state, by measuring their party's percentage point advantage in that election. We then analyze the correlation between this measure and our {\it OVP} metric. Figure \ref{fig:OVP_Vote} presents the results from this exercise. We see a strong and significant correlation between the two measures, which further supports the validity of our {\it OVP} measure.

\begin{figure}[htp!]
    \centering
    \includegraphics[width=0.7\linewidth]{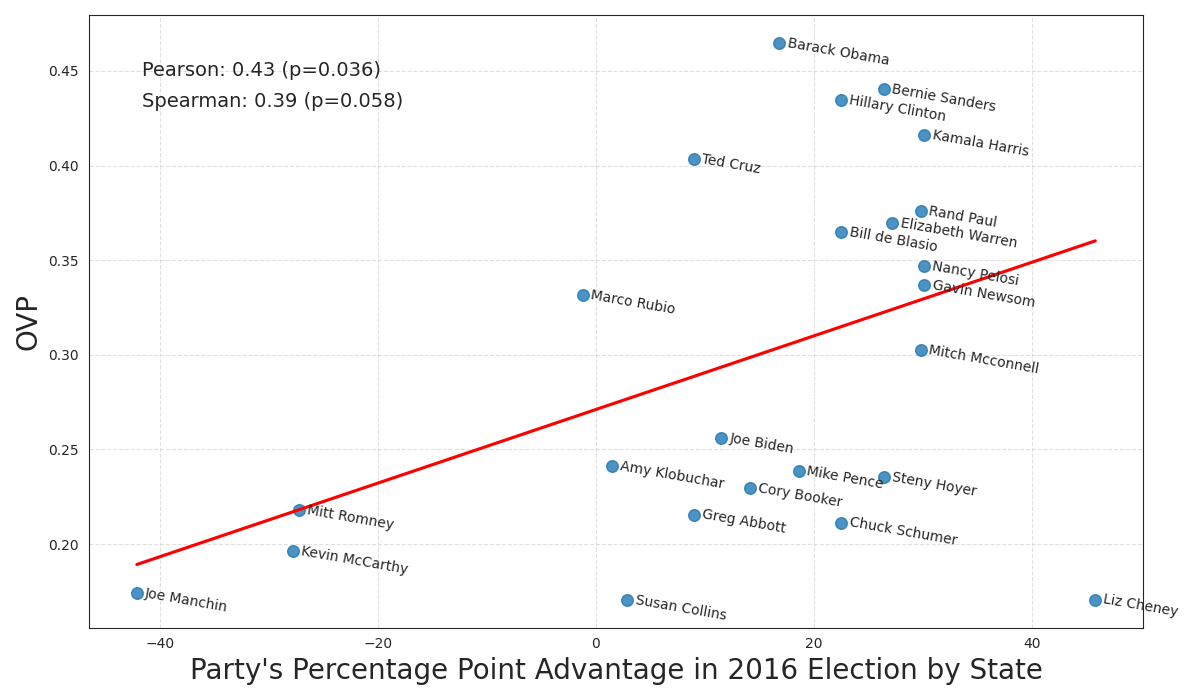}
    \caption{\small Relationship between a politician's {\it Overall Visual Polarization (OVP)} and their ideological alignment with their constituency.}
    \label{fig:OVP_Vote}
\end{figure}

This finding supports the idea that politicians from states with strong partisan leanings tend to adopt more pronounced partisan positions without seeking to appeal to a broad electorate. For example, Bernie Sanders (Vermont) and Ted Cruz (Texas) come from states with clear ideological identities and exhibit high {\it OVP} values, likely reflecting their strong partisan stances and the resulting polarized media portrayals. In contrast, Joe Manchin (West Virginia) and Susan Collins (Maine), who represent states where their party is in the minority, have lower {\it OVP} values, which reflects their moderate positions.

\section{Conclusion}
\label{sec:conclusion}
In this paper, we present a framework for measuring slant and polarization in the visual content accompanying news articles. We propose the Polarization Measurement Using Counterfactual Image Generation (PMCIG) algorithm, which quantifies news outlets' preference for slanted imagery -- such as smiling images -- to convey positive (vs. negative) representation of politicians. Our framework combines the economic structure of the problem with generative models to generate comparable counterfactual images and measure polarization in visual content. Notably, our algorithm overcomes the key limitations in traditional descriptive methods due to information loss in the feature extraction phase by using the rich information contained in images. 

In our empirical analysis, we apply the PMCIG framework to a decade-long dataset that covers 20 major news outlets and 30 prominent politicians. We identify clear patterns of ideological slanting and political polarization in the visual representation of political figures. We validate our measure of visual slant by demonstrating a high correlation between our measure and the existing measures used for media slant and partisanship. Our framework measures visual slant and polarization with detailed granularity, highlighting differences both at the outlet level and for individual politicians. Among outlets, we find that {\it Daily Mail} and {\it Fox News} display the strongest Republican-leaning visual slant, while {\it Washington Post} and {\it The New York Times} exhibit the strongest Democratic-leaning slant. In contrast, {\it CBS News} and {\it Wall Street Journal} are among the outlets with the lowest overall visual slant. At the individual level, Donald Trump and Barack Obama stand out as the most polarizing figures, whereas Joe Manchin, Liz Cheney, and Susan Collins are among the least polarizing in their visual portrayal across news outlets. 

In summary, the PMCIG framework offers a systematic approach to analyze how ideological preferences shape visual content in news media and contribute to polarization. Nevertheless, our paper has limitations that serve as excellent avenues for future research. For instance, our analysis is based on data from the United States, and extending the framework to other regions, such as Europe, could uncover cross-cultural differences in visual polarization. Additionally, while the framework measures visual slant and polarization, it does not examine the downstream impact on individuals' beliefs or behavior, which could be fruitful avenues for future research. Future studies could also apply the PMCIG framework to other forms of media, such as social media or advertising, and investigate the extent of ideological slanting/bias in these settings. 


\section*{Competing Interests Declaration}
Author(s) have no competing interests to declare.

\newpage


\setcounter{table}{0}
\setcounter{page}{0}
\setcounter{figure}{0}

\renewcommand{\thetable}{A\arabic{table}}
\renewcommand{\theequation}{A.\arabic{equation}}
\renewcommand{\thepage}{\roman{page}}
\renewcommand{\thefigure}{A.\arabic{figure}}

\renewcommand{\thepage}{\roman{page}}
\newpage
\renewcommand*\appendixpagename{Web Appendix}

\begin{appendices}

\section{Details of Data Collection and Cleaning}
\label{appsec:data}

\subsection{Details of Politicians and News Outlets}
\label{appssec:polnews}
Two important details about our data collection relate to the set of politicians and news outlets. Below we present these two lists:

\squishlist
\item \textbf{Politicians:} To select the sample of 30 politicians, we first created a large set of individuals who have run for public offices from either Democratic or Republican party. We then sampled a set of top 30 politicians based on their search volume on Google Trends. As such, the higher the search volume for a politician, the more likely it is to select that politician for our main sample. The resulting sets of Democratic and Republican politicians is presented below:
\squishlist
\item \textit{Democratic Politicians}: This list contains (1) Barack Obama, (2) Joe Biden, (3) Hilary Clinton, (4) Bernie Sanders, (5) Nancy Pelosi, (6) Kamala Harris, (7) Chuck Schumer, (8) Corey Booker, (9) Amy Klobuchar, (10) Elizabeth Warren, (11) Gavin Newsom, (12) Bill de Blasio, (13) James Clyburn, (14) Janet Yellen, (15) Joe Manchin, and (16) Steny Hoyer. 

\item \textit{Republican Politicians}: This list contains (1) Donald Trump, (2) Ted Cruz, (3) Marco Rubio, (4) Mitt Romney, (5) Mitch McConnell, (6) Greg Abbott, (7) Rand Paul, (8) Mike Pence, (9) Kevin McCarthy, (10) Susan Collins, (11) Liz Chenney, (12) John Roberts, (13) Hal Rogers, (14) Andy Biggs.
\squishend

\item \textbf{News Outlets:} We selected top 20 news outlets following the work by \cite{flaxman2016filter}. Unlike politicians, news outlets do not have a clear political affiliation and party. However, there are several indices for media slant based on the readership and coverage \citep{groseclose2005measure, flaxman2016filter}. For example, Figure \ref{fig:news_outlets} shows the news outlets in our data using the \textit{conservative share} measure used by \cite{flaxman2016filter}. In our main analysis, we do not use the media slant indices for model building and measuring political polarization in visual content. However, we use these measures for validation of our main measures.  
\squishend
\begin{figure}[t]
    \centering
    \includegraphics[width=0.7\textwidth]{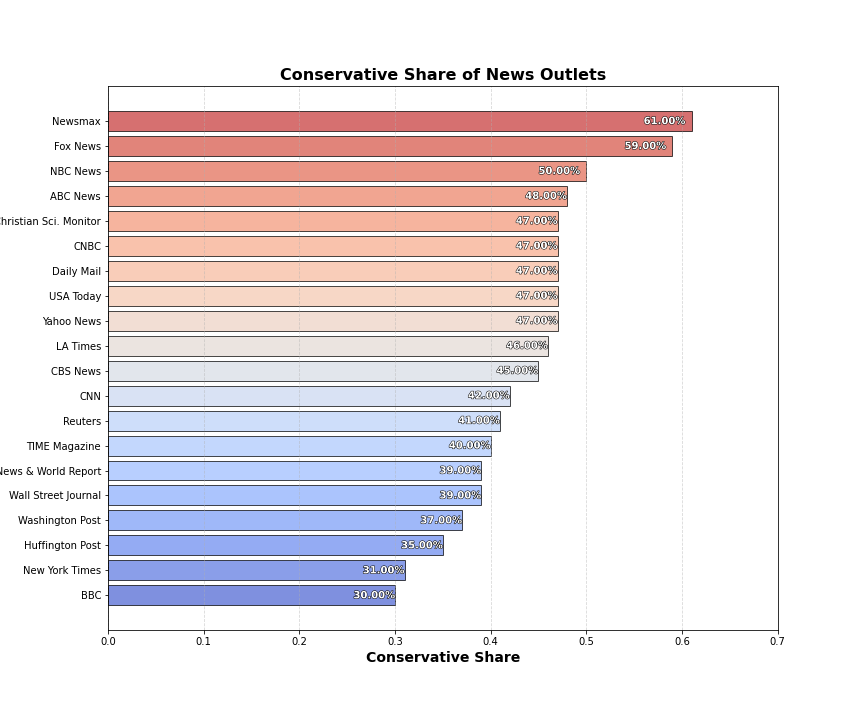}
    \caption{\small Conservative share of news outlets, from \citet{flaxman2016filter}.}
    \label{fig:news_outlets}
\end{figure}

\subsection{Details of Data Cleaning}
\label{appssec:cleaning}

\begin{figure}[htp!]
    \centering
    \includegraphics[width=0.9\textwidth]{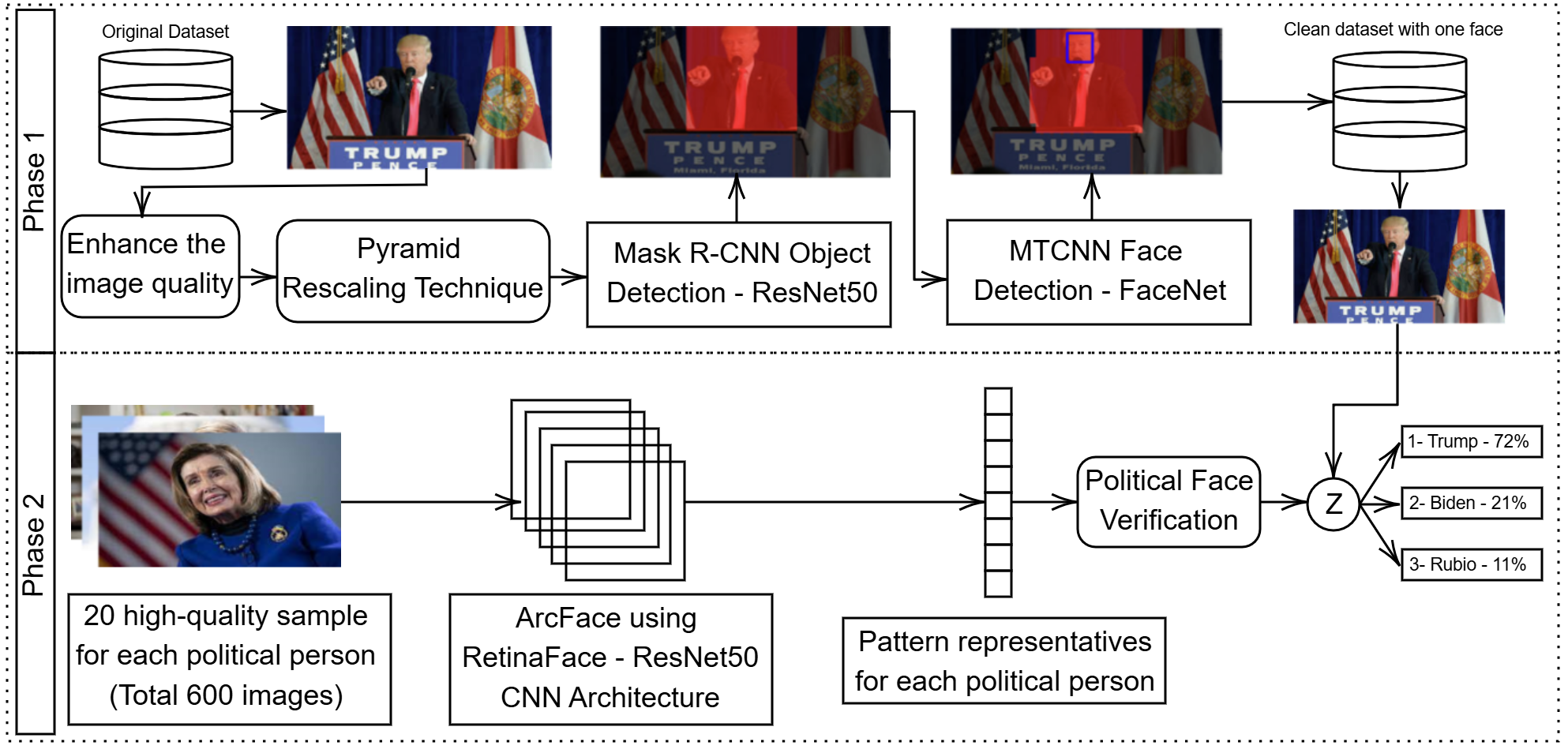} 
    \caption{\small Two-phase computer vision framework for political person verification.}
    \label{fig:1}
\end{figure}

In this section, we present details about data cleaning. We first present the manual verification procedure we used to ensure that SerpAPI does not miss important data. To address the limitations of the automated method, we employ an additional manual process for quality control. This involves saving HTML files of search results locally and using Beautiful Soup to extract images and metadata. This method ensures the accuracy and completeness of our dataset. 

We now present the two-phase data cleaning procedure in Figure \ref{fig:1} in greater detail. The first phase, designed for accurate face detection, consists of two main steps. Initially, we perform pyramid rescaling, adjusting each image to three scales (1.1, 0.8, and 0.6) to enhance focus, similar to adjusting a camera lens \citep{lin2017feature}. Following this, Mask R-CNN, which uses the ResNet50 architecture as its backbone, identifies and outlines humans within these images \citep{he2017mask}, referring to an object detection and instance segmentation task. Next, we use multitask cascaded Convolutional Neural Networks (MTCNN) for face detection within the regions identified by Mask R-CNN. The detected faces are subsequently processed by FaceNet for face recognition \citep{zhang2016joint}. This dual approach not only increases the efficiency of our face detection process but also significantly improves its accuracy. 

The second phase is the face verification stage of our computer vision framework. We initiate the process by feeding ArcFace with twenty samples for each of the thirty politicians to create a unique facial pattern representative for each political person \citep{deng2019arcface}. Then, we take the images from the first phase, which are confirmed to have just one face, and run them through ArcFace. This system compares the detected face with our database of political figures and ranks the top three matches, providing a similarity percentage for each. 

In total, we retain 63,188 images with exactly one face, that have been verified by our dual framework. These images represent a clean and reliable dataset for our study. The distribution of the final data is shown in Figure \ref{fig:2}. This dataset allows us to accurately analyze media representation of political figures, ensuring that each image is correctly attributed to the intended individual.

\begin{figure}[htp!]
    \centering
    \includegraphics[width=0.9\textwidth]{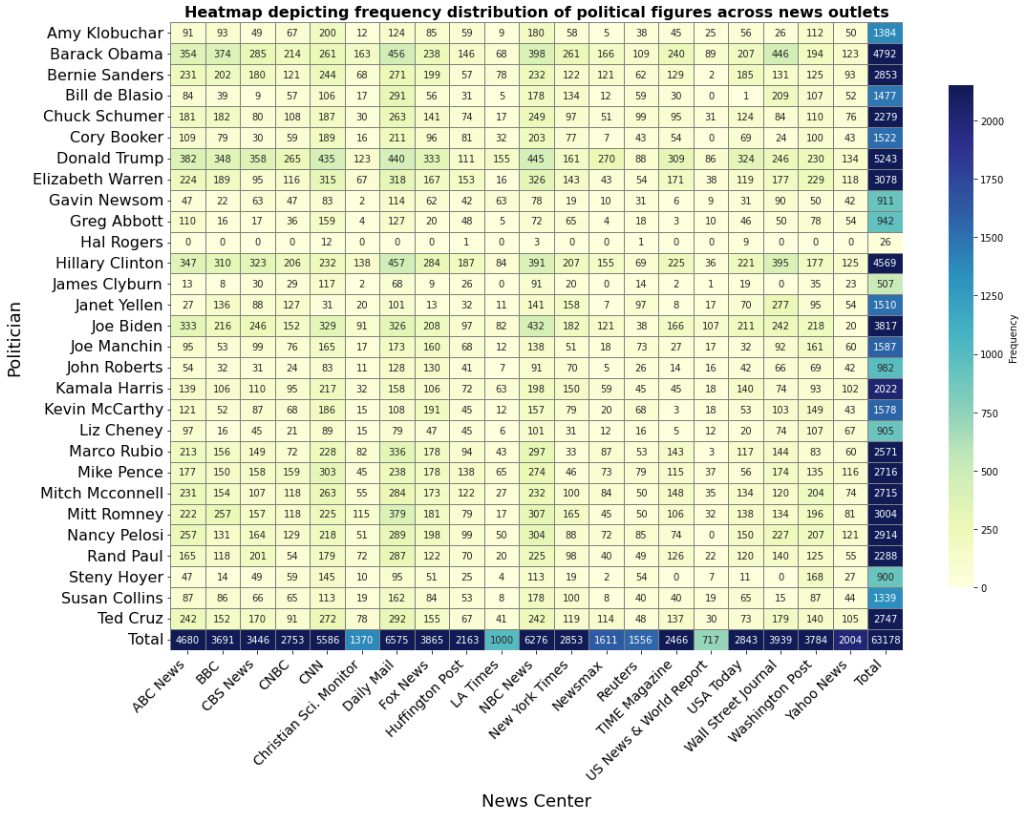} 
    \caption{\small Frequency distribution of politicians across outlets.}
    \label{fig:2}
\end{figure}

\section{Theoretical Analysis of Bias for the Two-step Model}
\label{appsec:drawback}

\subsection{Extraction Bias}
\label{sssec:ExractionBias}

As discussed in $\S$\ref{ssec:drawbacks}, \textit{extraction bias} arises from the use of the predicted variable \(\hat{T}\) in the analysis because the true variable \(T\) is not observable. This issue is primarily related to the machine learning model \(f_1\), where transfer learning is employed to predict the feature of interest. We provide the theoretical econometric framework demonstrating how extraction bias can lead to biased estimation of \(\beta\).

Recall that \(T\) is not directly observed; rather, it is extracted from the high-dimensional unstructured image data $Z$ using a first-stage machine learning model $f_1$. This leads to two types of errors in the estimate \(\hat{T}\): Measurement Error (\(\epsilon_{r}\)) and Extraction Error (\(\epsilon_{e}\)).

\begin{remark}\label{rem:measurmentbias}
Measurement Error (\(\epsilon_r\)) arises from inaccuracies in predictions or measurements that are not systematically correlated with the true value of \(T\) \citep{hansen2022econometrics}. This can be formally defined as:
\[
T = \hat{T} + \epsilon_r \quad \text{where} \quad \text{Cov}(T, \epsilon_r) = 0
\]
\end{remark}

\begin{remark}\label{rem:extractiobbias}
Extraction Error (\(\epsilon_e\)) occurs when the estimation method includes additional, indirectly related features \citep{wei2022unstructured}. This error implies that the error term is correlated with the true value \(T\):
\[
T = \hat{T} + \epsilon_e \quad \text{where} \quad \text{Cov}(T, \epsilon_e) \neq 0
\]
\end{remark}
Measurement error can be viewed as random noise added to the estimate \(\hat{T}\), resulting in a variable that is not systematically biased. In contrast, extraction errors occur when the machine learning model, trained to predict the specific feature of interest \(T\), inadvertently captures additional non-focal features that are correlated with \(T\). For instance, a machine learning model designed to detect the facial expression of faces whether a person is happy may also inadvertently capture brightness levels in the image, as brighter images are often associated with happier expressions. This can introduce an extraction error \(\epsilon_{e}\) into the model's predictions, potentially biasing the results.

In our setting, suppose that images used by {\it CNN} are generally brighter. In this case, the model may also capture brightness rather than true happiness. As a result, the predicted variable \(\hat{T}\) reflects not only the intended feature (happiness) but also the unintended influence of brightness. This issue complicates the analysis by conflating the natural effect of emotional expression with confounding factors, making it difficult to separate their impacts. To assess this challenge theoretically, we can divide it into \textit{Well-specified} and \textit{Partially-specified} models.

Consider scenarios where \(T\) represents the actual value of the feature. Achieving this oracle estimate can be done by using human judgment to classify photos as Happy (\(T=1\)) or Unhappy (\(T=0\)), though this method is expensive. Now, assume a linear model where \(Y\) is a binary variable indicating the news outlet (1 for {\it CNN}, 0 for {\it Fox news}). In this case, the parameter of interest, \(\beta\), can be estimated without bias because of the following Proposition.

\begin{proposition} \textbf{Well-specified:}
Let \((Y, X)\) denote observed regressors, where \(Y\) is the variable of interest and \(X \in \mathbb{R}^k\) captures other explanatory variables. Consider the model:

\begin{equation}
\begin{aligned}
&\hat{T}_{oracle} = T + \epsilon_{r}, \quad \text{where} \quad \hat{T}_{oracle} = \text{Human}, \\
&T  = \alpha + \beta Y + \gamma^\top X + e,
\end{aligned}
\end{equation}
where \(e\) and \(\epsilon_r\) are i.i.d. errors, and \(\mathrm{Cov}(T, \epsilon_r) = 0\) (Remark \ref{rem:measurmentbias}). Suppose \(\hat{\beta}\) is the OLS estimator of the parameter of interest (coefficient of \(Y\)). Then \(\hat{\beta}\) is an unbiased estimator of \(\beta\). As \(n \to \infty\):
\[
\sqrt{n}(\hat{\beta} - \beta) \xrightarrow{d} \mathcal{N}\Bigl(0, \Sigma_\beta = \mathbf{e}_2^\top\Bigl(A_0^{-1} \Sigma A_0^{-1}\Bigr)\mathbf{e}_2\Bigr),
\]
where \(\mathbf{Z} = (1, Y, X_1, \ldots, X_k)^\top\), 
\(A_0 = \mathbb{E}[\mathbf{Z}\mathbf{Z}^\top]\),
\(\Sigma = \mathbb{E}[\mathbf{Z}\theta\theta\mathbf{Z}^\top]\),
\(\theta = e + \epsilon_r\),
and \(\mathbf{e}_2 = (0, 1, 0, \ldots, 0)^\top\).
\end{proposition}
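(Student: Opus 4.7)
The plan is to reduce the problem to a standard OLS regression with a composite error term and then invoke classical results on consistency and asymptotic normality. First, I would substitute the structural equation for $T$ into the measurement equation to obtain the reduced form
\begin{equation*}
\hat{T}_{oracle} = \alpha + \beta Y + \gamma^\top X + \theta, \qquad \theta := e + \epsilon_r.
\end{equation*}
Writing $\mathbf{Z} = (1, Y, X_1,\ldots,X_k)^\top$ and $\beta^{*} = (\alpha,\beta,\gamma_1,\ldots,\gamma_k)^\top$, this becomes $\hat{T}_{oracle} = \mathbf{Z}^\top \beta^{*} + \theta$, so the OLS regression of the observable $\hat{T}_{oracle}$ on $\mathbf{Z}$ is the natural estimator and we can isolate $\hat\beta$ by pre-multiplying $\hat\beta^{*}$ by $\mathbf{e}_2^\top$.

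Next I would establish unbiasedness (and consistency). Writing the OLS estimator as $\hat\beta^{*} = \beta^{*} + (\mathbf{Z}^\top \mathbf{Z})^{-1} \mathbf{Z}^\top \theta$, unbiasedness follows from showing $\mathbb{E}[\theta \mid \mathbf{Z}] = 0$. The structural-equation assumption implies $\mathbb{E}[e \mid \mathbf{Z}] = 0$, while the measurement error $\epsilon_r$ is i.i.d.\ and, by Remark \ref{rem:measurmentbias}, uncorrelated with $T$; since $\mathbf{Z}$ is a regressor determining $T$, the natural strengthening is that $\epsilon_r$ is independent of $\mathbf{Z}$ as well, yielding $\mathbb{E}[\epsilon_r \mid \mathbf{Z}] = 0$. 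Adding these two gives the required exogeneity condition, so $\mathbb{E}[\hat\beta^{*}] = \beta^{*}$ and hence $\mathbb{E}[\hat\beta] = \mathbf{e}_2^\top \beta^{*} = \beta$.

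For the asymptotic distribution I would apply the standard Lindeberg–Lévy CLT for OLS: under i.i.d.\ sampling and the usual moment conditions, $\tfrac{1}{n} \sum_i \mathbf{Z}_i \mathbf{Z}_i^\top \xrightarrow{p} A_0 = \mathbb{E}[\mathbf{Z}\mathbf{Z}^\top]$ and $\tfrac{1}{\sqrt{n}} \sum_i \mathbf{Z}_i \theta_i \xrightarrow{d} \mathcal{N}(0, \Sigma)$ with $\Sigma = \mathbb{E}[\mathbf{Z} \theta \theta \mathbf{Z}^\top]$ (note that since $\epsilon_r$ is independent of $\mathbf{Z}$ and $e$, the composite error $\theta$ satisfies the required moment conditions). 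Applying the sandwich formula and the delta method with the linear functional $\mathbf{e}_2^\top(\cdot)$ gives
\begin{equation*}
\sqrt{n}(\hat\beta - \beta) \xrightarrow{d} \mathcal{N}\Bigl(0,\; \mathbf{e}_2^\top \bigl(A_0^{-1}\Sigma A_0^{-1}\bigr)\mathbf{e}_2\Bigr),
\end{equation*}
which is the stated limit.

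The main obstacle is conceptual rather than technical: the result hinges on justifying exogeneity of the measurement error with respect to the full regressor vector $\mathbf{Z}$, not merely with respect to $T$. The hypothesis $\mathrm{Cov}(T, \epsilon_r) = 0$ in Remark \ref{rem:measurmentbias} is weaker than what is needed, so I would have to argue that the i.i.d.\ phrasing in the proposition implicitly means $\epsilon_r$ is independent of $(\mathbf{Z}, e)$ — essentially the classical ``pure noise'' interpretation of measurement error. Once that identification assumption is pinned down, the remainder is a direct invocation of standard OLS asymptotics, and the contrast with the extraction-error case of Remark \ref{rem:extractiobbias} (where $\epsilon_e$ correlates with $T$ and hence generally with $\mathbf{Z}$) becomes transparent.
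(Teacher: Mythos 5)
Your proposal follows essentially the same route as the paper's proof: substitute the structural equation into the measurement equation to obtain the composite-error regression $\hat{T}_{oracle}=\mathbf{Z}^\top\beta^{*}+\theta$ with $\theta=e+\epsilon_r$, establish $\mathbb{E}[\mathbf{Z}\theta]=0$ for unbiasedness, and invoke the LLN/CLT to get the sandwich variance $A_0^{-1}\Sigma A_0^{-1}$ projected onto the second coordinate via $\mathbf{e}_2$. Your observation that $\mathrm{Cov}(T,\epsilon_r)=0$ by itself does not deliver orthogonality of $\epsilon_r$ to the full regressor vector $\mathbf{Z}$ is well taken — the paper's proof simply asserts that this covariance condition implies $\epsilon_r$ is uncorrelated with $(Y,X)$, so your explicit strengthening to independence of $\epsilon_r$ from $(\mathbf{Z},e)$ (the classical pure-noise reading of measurement error) makes the exogeneity step more careful than the paper's own.
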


\begin{proof}
In scenarios where \(T\) represents an oracle estimation of the population, it is possible to leverage human judgment to categorize photos into Happy (\(T=1\)) or Unhappy (\(T=0\)). Assume a linear model where \(Y\) is a binary variable representing the news outlet (1 for {\it Fox News}, 0 for {\it CNN}).

Let \(\hat{T}_{oracle} = \alpha + \beta Y + \gamma^\top X + \theta\), where \(\theta = e + \epsilon_r\). Define \(\mathbf{Z} = (1, Y, X_1, \ldots, X_k)^\top\) and stack the observations:

\[
\hat{\mathbf{T}}_{oracle} = \mathbf{Z} \boldsymbol{\beta}_{\mathrm{true}} + \boldsymbol{\theta}, \quad \boldsymbol{\beta}_{\mathrm{true}} = (\alpha, \beta, \gamma_1, \ldots, \gamma_k)^\top.
\]

Here, the OLS estimator is:
\[
\hat{\boldsymbol{\beta}} = (\mathbf{Z}^\top \mathbf{Z})^{-1} \mathbf{Z}^\top \hat{\mathbf{T}}_{oracle} = \boldsymbol{\beta}_{\mathrm{true}} + (\mathbf{Z}^\top \mathbf{Z})^{-1} \mathbf{Z}^\top \boldsymbol{\theta}.
\]

Since \(\mathrm{Cov}(T, \epsilon_r) = 0\) (Remark \ref{rem:measurmentbias}), \(\epsilon_r\) is uncorrelated with \((Y, X)\). Therefore:
\[
\mathbb{E}[\theta\, \mathbf{Z}] = \mathbb{E}[(e + \epsilon_r)\, \mathbf{Z}] = \mathbf{0}.
\]

Taking the expectation of \(\hat{\boldsymbol{\beta}}\):
\[
\mathbb{E}[\hat{\boldsymbol{\beta}}] = \boldsymbol{\beta}_{\mathrm{true}} + (\mathbf{Z}^\top \mathbf{Z})^{-1} \mathbb{E}[\mathbf{Z}^\top \boldsymbol{\theta}] = \boldsymbol{\beta}_{\mathrm{true}},
\]
so \(\hat{\boldsymbol{\beta}}\) is unbiased. In particular, \(\hat{\beta}\) is an unbiased estimator of \(\beta\). By the Law of Large Numbers (LLN) and Central Limit Theorem (CLT):
\[
\sqrt{n} (\hat{\boldsymbol{\beta}} - \boldsymbol{\beta}_{\mathrm{true}}) \xrightarrow{d} \mathcal{N}\Bigl(\mathbf{0}, A_0^{-1} \Sigma A_0^{-1}\Bigr),
\]
where:
\[
A_0 = \mathbb{E}[\mathbf{Z}\mathbf{Z}^\top], \quad \Sigma = \mathbb{E}[\mathbf{Z}\theta\theta\mathbf{Z}^\top].
\]
For \(\hat{\beta}\), focusing on the second component:
\[
\sqrt{n} (\hat{\beta} - \beta) \xrightarrow{d} \mathcal{N}\Bigl(0, \mathbf{e}_2^\top A_0^{-1} \Sigma A_0^{-1} \mathbf{e}_2\Bigr),
\]
where \(\mathbf{e}_2 = (0, 1, 0, \ldots, 0)^\top\). This confirms that in a well-specified model, the estimator \(\hat{\beta}\) remains unbiased, and only the variance is affected by the inclusion of the error terms.
\end{proof}

The Proposition above confirms the reliability of using such models in well-specified scenarios.

\begin{proposition}\label{pro:extractionbias} \textbf{Partial-specified:}
Let \((Y, X)\) again denote observed regressors, where \(Y\) is the variable of interest and \(X \in \mathbb{R}^k\) captures other explanatory variables, ensuring no omitted variable bias. Consider the model:

\begin{equation}
\begin{aligned}\label{secondtype}
&\hat{T}_{ml} = T + \epsilon_{r} + \epsilon_{e}, \quad \text{where} \quad \hat{T}_{ml} = f_{1}(Z), \\
&T  = \alpha + \beta Y + \gamma^\top X + e,
\end{aligned}
\end{equation}
where \(e\) and \(\epsilon_r\) are i.i.d.\ errors uncorrelated with \((Y, X)\), and \(\epsilon_e\) is an extraction error that is endogenous (\(\mathrm{Cov}(T, \epsilon_e) \neq 0\); see Remark \ref{rem:extractiobbias}). Suppose \(\hat{\beta}\) is the OLS estimator of the parameter of interest (coefficient of \(Y\)). Then, \(\hat{\beta}\) is a biased estimator due to the endogeneity introduced by \(\epsilon_e\). As \(n \to \infty\):
\[
\sqrt{n}(\hat{\beta} - \beta) \xrightarrow{d} \mathcal{N}\Bigl(\mu = \mathbf{e}_2^\top A_0^{-1} \mathbb{E}[\mathbf{Z}\epsilon_e], \quad \Sigma_\beta = \mathbf{e}_2^\top\Bigl(A_0^{-1} \Sigma A_0^{-1}\Bigr)\mathbf{e}_2\Bigr),
\]
where \(\mathbf{Z} = (1, Y, X_1, \ldots, X_k)^\top\), 
\(A_0 = \mathbb{E}[\mathbf{Z}\mathbf{Z}^\top]\),
\(\Sigma = \mathbb{E}[\mathbf{Z}\theta\theta\mathbf{Z}^\top]\),
\(\theta = e + \epsilon_r + \epsilon_e\),
and \(\mathbf{e}_2 = (0, 1, 0, \ldots, 0)^\top\).
\end{proposition}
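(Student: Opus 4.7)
The plan is to mirror the structure of the well-specified proof, isolating the new source of bias that arises from the extraction error $\epsilon_e$. First, I would substitute the structural equation for $T$ into the measurement equation in \eqref{secondtype} to obtain the stacked representation
\[
\hat{\mathbf{T}}_{ml} = \mathbf{Z}\boldsymbol{\beta}_{\mathrm{true}} + \boldsymbol{\theta}, \qquad \boldsymbol{\theta} = e + \epsilon_r + \epsilon_e,
\]
with $\mathbf{Z} = (1, Y, X_1, \ldots, X_k)^\top$ and $\boldsymbol{\beta}_{\mathrm{true}} = (\alpha, \beta, \gamma_1, \ldots, \gamma_k)^\top$. I would then write down the OLS estimator exactly as in the well-specified case:
\[
\hat{\boldsymbol{\beta}} = (\mathbf{Z}^\top \mathbf{Z})^{-1} \mathbf{Z}^\top \hat{\mathbf{T}}_{ml} = \boldsymbol{\beta}_{\mathrm{true}} + (\mathbf{Z}^\top \mathbf{Z})^{-1} \mathbf{Z}^\top \boldsymbol{\theta}.
\]
The divergence from the well-specified proof starts here: whereas previously $\mathbb{E}[\mathbf{Z}\boldsymbol{\theta}] = 0$, that orthogonality condition fails when $\epsilon_e$ is present.

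Next, I would decompose $\mathbb{E}[\mathbf{Z}\boldsymbol{\theta}]$ into the three components corresponding to $e$, $\epsilon_r$, and $\epsilon_e$. By the no-omitted-variables assumption on the structural equation, $\mathbb{E}[\mathbf{Z} e] = 0$; by Remark \ref{rem:measurmentbias}, the i.i.d.\ noise $\epsilon_r$ satisfies $\mathbb{E}[\mathbf{Z}\epsilon_r] = 0$. The remaining term $\mathbb{E}[\mathbf{Z}\epsilon_e]$ is generically nonzero because, by Remark \ref{rem:extractiobbias}, $\mathrm{Cov}(T, \epsilon_e) \neq 0$, and $T$ is itself an affine function of $\mathbf{Z}$, so the correlation of $\epsilon_e$ with $T$ propagates mechanically to a correlation with $\mathbf{Z}$. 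This isolates $\mathbb{E}[\mathbf{Z}\epsilon_e]$ as the \emph{only} algebraic footprint of extraction bias.

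I would then invoke the standard asymptotic machinery. The LLN gives $n^{-1}\mathbf{Z}^\top \mathbf{Z} \xrightarrow{p} A_0$, and the CLT applied to $n^{-1/2}\mathbf{Z}^\top \boldsymbol{\theta}$ around its nonzero mean yields
\[
\sqrt{n}\bigl(\hat{\boldsymbol{\beta}} - \boldsymbol{\beta}_{\mathrm{true}}\bigr) \xrightarrow{d} \mathcal{N}\Bigl(A_0^{-1}\,\mathbb{E}[\mathbf{Z}\epsilon_e],\ A_0^{-1}\Sigma A_0^{-1}\Bigr),
\]
with $\Sigma = \mathbb{E}[\mathbf{Z}\boldsymbol{\theta}\boldsymbol{\theta}^\top \mathbf{Z}^\top]$ now computed with the full $\boldsymbol{\theta} = e + \epsilon_r + \epsilon_e$. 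Projecting onto the second coordinate via $\mathbf{e}_2 = (0,1,0,\ldots,0)^\top$ extracts the marginal statement for $\hat{\beta}$ and gives the claimed asymptotic mean $\mathbf{e}_2^\top A_0^{-1}\,\mathbb{E}[\mathbf{Z}\epsilon_e]$ and variance $\mathbf{e}_2^\top A_0^{-1}\Sigma A_0^{-1}\mathbf{e}_2$.

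The main obstacle is arguing rigorously that $\mathbb{E}[\mathbf{Z}\epsilon_e]$ is nonvanishing under the proposition's primitives, since Remark \ref{rem:extractiobbias} only posits $\mathrm{Cov}(T,\epsilon_e) \neq 0$ rather than directly constraining $\epsilon_e$ against $(Y, X)$. To bridge this gap I would project $\epsilon_e$ onto the span of $\mathbf{Z}$, writing $\epsilon_e = \Pi(\epsilon_e \mid \mathbf{Z}) + \tilde\epsilon_e$, and use $T = \mathbf{Z}^\top\boldsymbol{\beta}_{\mathrm{true}} + e$ together with $\mathbb{E}[\mathbf{Z} e]=0$ to translate the nonzero covariance with $T$ into a nonzero projection onto $\mathbf{Z}$ whenever the structural coefficients $(\beta, \gamma)$ do not lie in the null space of this correlation, which is the generic case. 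Everything else is bookkeeping: substitute, take expectations, apply LLN/CLT, and project onto the coordinate of interest.
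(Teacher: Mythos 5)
Your proposal is correct and follows essentially the same route as the paper's proof: substitute the structural equation into the measurement equation, write the OLS sampling error as $(\mathbf{Z}^\top\mathbf{Z})^{-1}\mathbf{Z}^\top\boldsymbol{\theta}$, kill the $e$ and $\epsilon_r$ terms by orthogonality so that $\mathbb{E}[\mathbf{Z}\boldsymbol{\theta}]=\mathbb{E}[\mathbf{Z}\epsilon_e]$, apply the LLN/CLT, and project onto the second coordinate. Your closing argument that $\mathrm{Cov}(T,\epsilon_e)\neq 0$ forces a nonzero projection of $\epsilon_e$ onto $\mathbf{Z}$ (because $T$ is affine in $\mathbf{Z}$ up to the orthogonal noise $e$) is the same device the paper uses when it expands $\mathrm{Cov}(T,\epsilon_e)=\beta\,\mathrm{Cov}(Y,\epsilon_e)+\gamma^\top\mathrm{Cov}(X,\epsilon_e)$, and you are if anything slightly more careful in flagging that the nonvanishing of the particular linear combination $\mathbf{e}_2^\top A_0^{-1}\mathbb{E}[\mathbf{Z}\epsilon_e]$ is a generic rather than automatic consequence.
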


\begin{proof}
In real-world scenarios, \(\hat{T}_{ml}\) often includes extraction errors, leading to biases.

Let \(\hat{T}_{ml} = \alpha + \beta Y + \gamma^\top X + \theta\), where \(\theta = e + \epsilon_r + \epsilon_e\). Define \(\mathbf{Z} = (1, Y, X_1, \ldots, X_k)^\top\) and stack the observations:
\[
\hat{\mathbf{T}}_{ml} = \mathbf{Z} \boldsymbol{\beta}_{\mathrm{true}} + \boldsymbol{\theta}, \quad \boldsymbol{\beta}_{\mathrm{true}} = (\alpha, \beta, \gamma_1, \ldots, \gamma_k)^\top.
\]

Here, the OLS estimator is:
\[
\hat{\boldsymbol{\beta}} = (\mathbf{Z}^\top \mathbf{Z})^{-1} \mathbf{Z}^\top \hat{\mathbf{T}}_{ml} = \boldsymbol{\beta}_{\mathrm{true}} + (\mathbf{Z}^\top \mathbf{Z})^{-1} \mathbf{Z}^\top \boldsymbol{\theta}.
\]
\(e\) and \(\epsilon_r\) are assumed to have zero mean and are uncorrelated with \((Y, X)\), but the extraction error \(\epsilon_e\) is endogenous. Consequently, due to \(\mathrm{Cov}(T, \epsilon_e) \neq 0\) (Remark \ref{rem:extractiobbias}), $\mathbb{E}[\mathbf{Z}^\top \boldsymbol{\theta}] = \mathbb{E}[\mathbf{Z}^\top \epsilon_e] \neq 0$, we have:
\[
\mathbb{E}[\hat{\boldsymbol{\beta}}] = \boldsymbol{\beta}_{\mathrm{true}} + (\mathbf{Z}^\top \mathbf{Z})^{-1} \mathbb{E}[\mathbf{Z}^\top \boldsymbol{\theta}] \neq \boldsymbol{\beta}_{\mathrm{true}}.
\]

By the LLN and CLT:
\[
\sqrt{n} (\hat{\boldsymbol{\beta}} - \boldsymbol{\beta}_{\mathrm{true}}) \xrightarrow{d} \mathcal{N}(A_0^{-1} b_0, \, A_0^{-1} \Sigma A_0^{-1}),
\]
where \(A_0 = \mathbb{E}[\mathbf{Z}\mathbf{Z}^\top]\), \(b_0 = \mathbb{E}[\mathbf{Z}\theta]\) and \(\Sigma = \mathbb{E}[\mathbf{Z}\theta\theta\mathbf{Z}^\top]\).

To isolate the second component (\(\beta\)) in \(\hat{\boldsymbol{\beta}}\), let \(\mathbf{e}_2 = (0, 1, 0, \ldots, 0)^\top\). Then:
\[
\sqrt{n} (\hat{\beta} - \beta) \xrightarrow{d} \mathcal{N}\Bigl(\mu = \mathbf{e}_2^\top A_0^{-1} b_0, \quad \Sigma_\beta = \mathbf{e}_2^\top A_0^{-1} \Sigma A_0^{-1} \mathbf{e}_2\Bigr),
\]
where:
\[
A_0^{-1}\,b_0
=
\begin{pmatrix}
a_{11}\,\mathbb{E}[\epsilon_e]
+
a_{12}\,\mathbb{E}[Y\,\epsilon_e]
+
\displaystyle\sum_{j=1}^k a_{1,(j+2)}\,\mathbb{E}[X_j\,\epsilon_e] \\[6pt]
a_{21}\,\mathbb{E}[\epsilon_e]
+
a_{22}\,\mathbb{E}[Y\,\epsilon_e]
+
\displaystyle\sum_{j=1}^k a_{2,(j+2)}\,\mathbb{E}[X_j\,\epsilon_e] \\[6pt]
\vdots \\[6pt]
a_{(2+k)1}\,\mathbb{E}[\epsilon_e]
+
a_{(2+k)2}\,\mathbb{E}[Y\,\epsilon_e]
+
\displaystyle\sum_{j=1}^k a_{(2+k),(j+2)}\,\mathbb{E}[X_j\,\epsilon_e]
\end{pmatrix}.
\]
Since
\(\mathbf{e}_2^\top = (0,\;1,\;0,\ldots,0)\),
it follows that
\[
\mu
=
\mathbf{e}_2^\top\bigl(A_0^{-1}\,b_0\bigr)
=
a_{21}\,\mathbb{E}[\epsilon_e]
+
a_{22}\,\mathbb{E}[Y\,\epsilon_e]
+
\displaystyle\sum_{j=1}^k a_{2,(j+2)}\,\mathbb{E}[X_j\,\epsilon_e].
\]

From Remark \ref{rem:extractiobbias}, we know:
\[
\mathrm{Cov}(T, \epsilon_e) = \beta\, \mathrm{Cov}(Y, \epsilon_e) + \gamma^\top\, \mathrm{Cov}(X, \epsilon_e) \neq 0.
\]
This implies that at least one of \(\mathrm{Cov}(Y, \epsilon_e)\) or one of the elements of \(\mathrm{Cov}(X, \epsilon_e)\) must be nonzero. Consequently, at least one of \(\mathbb{E}[Y\,\epsilon_e]\) or one of the \(\mathbb{E}[X_j\,\epsilon_e]\) must also be nonzero. Therefore,
\[
\mu = a_{21}\,\mathbb{E}[\epsilon_e] + a_{22}\,\mathbb{E}[Y\,\epsilon_e] + \sum_{j=1}^k a_{2,(j+2)}\,\mathbb{E}[X_j\,\epsilon_e] \neq 0.
\]
Thus, \(\hat{\beta}\) is asymptotically biased.
\end{proof}

In summary, in a partially-specified model where \(\hat{T}_{ml}\) includes extraction errors correlated with \(Y\), the OLS estimator \(\hat{\beta}\) becomes biased. This bias stems from the inherent correlation between the predictor \(Y\) and the error term \(\epsilon_e\). Consequently, while the estimator remains consistent in its variance under the CLT, its expectation deviates from the true parameter \(\beta\).

\subsection{Omitted Variable Bias}
\label{sssec:OmittedBias}
This bias can occur when the variable of interest \(Y\) is correlated with the information not captured in $T$. Recall that \(Z^{(-T)}\) represents the components of \(Z\) that are not captured by the extracted feature \(T\). To quantify this bias, we first start by characterizing the true relationship between the variable of interest \( Y \), the extracted feature \( T \), and the omitted components \( Z^{(-T)} \) as $T = f_2(Y, k(Z^{(-T)}); \beta) + \epsilon_0$, where \( k(Z^{(-T)}) \) is a non-parametric function that captures the potentially complex effects of the omitted variables \( Z^{(-T)} \). In contrast, the econometric model that researchers estimate (as described earlier) is given by ${T} = f_{2}(Y; \beta) + \epsilon$. The key challenge is that the error term \( \epsilon \) is composed of both the original noise \( \epsilon_0 \) and the bias due to the exclusion of \( k(Z^{(-T)}) \).

\begin{remark}\label{rem:omittedbias}
Omitted Variable Bias (\(\epsilon_{ov}\)) occurs when the error term includes the effect of omitted variables that are correlated with the independent variable \(Y\). Formally, the error term can be expressed as:
\[
\epsilon = \epsilon_0 + \epsilon_{ov} \quad \text{where}  \quad \text{Cov}(Y, \epsilon_{ov})) \neq 0
\]
\end{remark}

\begin{proposition} \textbf{Omitted Variable Bias:}
Consider the linear model for \( T \):
\begin{equation}
\begin{aligned}
T &= \beta Y + \delta k(Z^{(-T)}) + \epsilon_0 \quad \text{(True Model)}, \\
T &= \tilde{\beta} Y + \epsilon \quad \text{(Observed Model)},
\end{aligned}
\end{equation}
where \( \epsilon = \epsilon_0 + \epsilon_{ov} \) and \( \epsilon_{ov} = \delta k(Z^{(-T)}) \) represents the impact of the omitted variable. Then, the bias in the estimator \( \tilde{\beta} \) is given by:
\begin{equation}
\text{Bias}(\hat{\tilde{\beta}}) = \tilde{\beta} - \beta = \lambda \delta,
\end{equation}
where \( \lambda \) is the coefficient from the projection of \( Y \) on \( k(Z^{(-T)}) \) based on the Frisch-Waugh-Lovell Theorem \citep{hansen2022econometrics}.
\end{proposition}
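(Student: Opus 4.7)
The plan is to derive the bias in $\hat{\tilde{\beta}}$ via the classical ``short regression vs.\ long regression'' argument, invoking the Frisch--Waugh--Lovell (FWL) interpretation to identify the coefficient $\lambda$. The starting point is to write the OLS estimator from the observed (misspecified) model as $\hat{\tilde{\beta}} = (Y^\top Y)^{-1} Y^\top T$ and then substitute the true DGP $T = \beta Y + \delta\, k(Z^{(-T)}) + \epsilon_0$ into this expression. This yields the decomposition
\begin{equation*}
\hat{\tilde{\beta}} = \beta + \delta \,(Y^\top Y)^{-1} Y^\top k(Z^{(-T)}) + (Y^\top Y)^{-1} Y^\top \epsilon_0.
\end{equation*}
The first nontrivial piece is the middle term; the third term will vanish in expectation (or in probability limit) because $\epsilon_0$ is by construction the exogenous component of the true DGP, uncorrelated with $Y$.

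Next, I would identify $(Y^\top Y)^{-1} Y^\top k(Z^{(-T)})$ as the OLS coefficient in the auxiliary regression of $k(Z^{(-T)})$ on $Y$, or equivalently, via FWL, relate it to the projection coefficient $\lambda$ linking $Y$ and $k(Z^{(-T)})$ in the partialling-out step of the full (correctly specified) regression. This is exactly the role FWL plays here: it tells us that the difference between the short-regression slope and the long-regression slope on $Y$ is captured precisely by how much of $Y$ is ``explained'' by $k(Z^{(-T)})$, weighted by the true effect $\delta$. Assembling the pieces, taking expectations (or probability limits under standard regularity assumptions on moments of $Y$ and $k(Z^{(-T)})$), and invoking Remark \ref{rem:omittedbias} to note $\mathrm{Cov}(Y, \epsilon_{ov}) \neq 0$ via $\epsilon_{ov} = \delta k(Z^{(-T)})$, I would conclude $\tilde{\beta} - \beta = \lambda \delta$.

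The argument is essentially textbook material, so I do not anticipate a serious analytical obstacle. The one point that deserves care is pinning down exactly which projection ``$\lambda$'' refers to, since the statement says ``projection of $Y$ on $k(Z^{(-T)})$'' while the standard OVB formula uses the reverse projection of $k(Z^{(-T)})$ on $Y$; these coincide only up to a ratio of variances. I would resolve this by appealing to FWL directly, which unambiguously defines $\lambda$ as the partialling-out coefficient relating $Y$ and $k(Z^{(-T)})$, and note that the generality of the nonparametric function $k(\cdot)$ does not affect the argument so long as $k(Z^{(-T)})$ is treated as a (scalar, for clarity) regressor with finite second moments. With that clarification, the bias formula $\tilde{\beta} - \beta = \lambda\delta$ follows immediately, and its sign and magnitude are governed jointly by the true effect $\delta$ of the omitted variable and by the degree of association $\lambda$ between the omitted variable and $Y$.
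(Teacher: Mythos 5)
Your proposal is correct and follows essentially the same route as the paper's proof: substitute the true model into the short-regression OLS estimator, let the exogenous $\epsilon_0$ term drop out, and identify the remaining term as $\delta$ times the auxiliary-regression coefficient $\lambda = \mathrm{Cov}\bigl(Y, k(Z^{(-T)})\bigr)/\mathrm{Var}(Y)$ (the paper works with population covariances rather than your sample matrix notation, but the argument is identical). Your caveat about the direction of the projection defining $\lambda$ is well taken — the paper's own derivation yields $\lambda = \mathrm{Cov}\bigl(Y,k(Z^{(-T)})\bigr)/\mathrm{Var}(Y)$, i.e., the coefficient from regressing $k(Z^{(-T)})$ on $Y$, even though the statement labels it as the reverse projection.
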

\begin{proof}
We start by considering the true model:
\[
T = {\beta} Y + {\delta} k(Z^{(-T)}) + \epsilon_0
\]
and the estimated model, which omits the variable \( k(Z^{(-T)}) \):
\[
T = \tilde{\beta} Y + \epsilon
\]
where \( \epsilon = \epsilon_0 + \epsilon_{ov} \) and \( \epsilon_{ov} = {\delta} k(Z^{(-T)}) \). The OLS estimator for \( \tilde{\beta} \) is given by:
\[
\hat{\tilde{\beta}} = \frac{\text{Cov}(Y, T)}{\text{Var}(Y)}
\]

Substituting the true model into this equation, we have:
\[
\text{Cov}(Y, T) = \text{Cov}\left(Y, {\beta} Y + {\delta} k(Z^{(-T)}) + \epsilon_0\right)
\]

Expanding the covariance:
\[
\text{Cov}(Y, T) = {\beta} \text{Cov}(Y, Y) + {\delta} \text{Cov}(Y, k(Z^{(-T)})) + \text{Cov}(Y, \epsilon_0)
\]

Assuming \( \text{Cov}(Y, \epsilon_0) = 0 \), we get:
\[
\text{Cov}(Y, T) = {\beta} \text{Var}(Y) + {\delta} \text{Cov}(Y, k(Z^{(-T)}))
\]

Thus, the OLS estimator becomes:
\[
\hat{\tilde{\beta}} = \frac{{\beta} \text{Var}(Y) + {\delta} \text{Cov}(Y, k(Z^{(-T)}))}{\text{Var}(Y)}
\]

The expected value of \( \hat{\tilde{\beta}} \) is therefore:
\[
\mathbb{E}[\hat{\tilde{\beta}}] = \frac{{\beta} \text{Var}(Y) + {\delta} \text{Cov}(Y, k(Z^{(-T)}))}{\text{Var}(Y)} = {\beta} + \frac{{\delta} \text{Cov}(Y, k(Z^{(-T)}))}{\text{Var}(Y)}
\]

This simplifies to:

\[
\text{Bias}(\hat{\tilde{\beta}}) = \frac{{\delta} \text{Cov}(Y, k(Z^{(-T)}))}{\text{Var}(Y)}
\]

where $\frac{\text{Cov}(Y, k(Z^{(-T)}))}{\text{Var}(Y)} = \lambda$ is the coefficient from the projection of \( Y \) on \( k(Z^{(-T)}) \).
    
\end{proof}

\section{Empirical Evidence} 
\label{appsec:empericalevidence}

In the previous section, we theoretically characterized how the two-step approach based on feature extraction results in information loss that can appear in forms of extraction bias and omitted variable bias. In this section, we empirically examine if the sources of bias discussed in $\S$\ref{ssec:drawbacks} can bias the estimates of the two-step reduced form model outlined in $\S$\ref{ssec:Two_Step_Model}. We focus on visual representations of Hillary Clinton as the focal politician $p$, where \( y_1 \) corresponds to images published by Democratic-leaning news outlets such as \textit{CNN}, \textit{The New York Times}, and the \textit{BBC}, and \( y_2 \) corresponds to those from Republican-leaning outlets like \textit{Fox News}, \textit{Newsmax}, and \textit{The Daily Mail}. Our goal is to estimate the parameter $\beta$ as characterized in Equation \eqref{eq:reducedform}. For illustrative purposes, we focus on the subset of our data containing images of Hillary Clinton (comprising 1,021 articles). 

Following the two-step approach, we first employ the DeepFace model developed by Meta as the machine learning algorithm \( f_1 \) \citep{taigman2014deepface}. We extract the presence (\( Z(\hat{T} = 1) \)) or lack of a smile (\( Z(\hat{T} = 0) \)) in the images of Hilary Clinton using DeepFace.\footnote{DeepFace utilizes a deep neural network with a nine-layer architecture and over 120 million parameters, including convolutional layers and a 3D alignment step. This model has been adapted for tasks such as facial expression and gender recognition in a variety of social science contexts \citep{dvorkin2021sovereign, luca2022scapegoating}.} In the second step, we apply a logistic regression model \( f_2 \) to quantify the relationship between the estimated facial expression \( \hat{T} \) and the independent variable \( Y \) as:
\begin{equation*}
\textrm{log} \left( \frac{\textrm{Pr}(\hat{T} = 1 \mid Y)}{\textrm{Pr}(\hat{T} = 0 \mid Y)} \right) = \alpha + \beta \cdot Y,
\end{equation*}
where \( Y = 1 \) represents Democratic outlets (\textit{CNN}, \textit{The New York Times}, or \textit{BBC}) and \( Y = 0 \) corresponds to Republican outlets ({\it Fox News}, {\it Newsmax}, or \textit{The Daily Mail}). The coefficients \(\alpha\) (intercept) and \(\beta\) (slope) are estimated from the data. For simplicity, this example excludes article-level features \( X \).

However, the above specification ignores all the non-smile-related information in the image (represented by \( Z^{(-T)} \)). As discussed earlier, doing so can lead to biased estimates of $\beta$. To that end, we now consider a modified specification that accounts for a few other image characteristics. For illustrative purposes, we select three arbitrary image features: edge density, brightness, and contrast, which capture a few relevant aspects of the visual context. Thus, the extended specification of the logistic regression model becomes:
\begin{equation*}
\textrm{log} \left( \frac{\textrm{Pr}(\hat{T} = 1 \mid Y, Z^{(-T)})}{\textrm{Pr}(\hat{T} = 0 \mid Y, Z^{(-T)})} \right) = \alpha + \beta \cdot Y + \gamma_1 \cdot \text{EdgeDensity} + \gamma_2 \cdot \text{Brightness} + \gamma_3 \cdot \text{Contrast},
\end{equation*}
where \( Z^{(-T)} \) includes the additional image characteristics, and \( \gamma_1 \), \( \gamma_2 \), and \( \gamma_3 \) are the coefficients that measure the effects of edge density, brightness, and contrast, respectively.

The results from both specifications are shown in Table \ref{tab:logit_results}. Column (1) presents the estimates for the model without other image characteristics. Here, the \textit{Democratic Outlets} variable is statistically significant (\( p < 0.01 \)), with a positive coefficient of \( 0.369 \), indicating that Democratic news outlets are more likely to show Hillary Clinton smiling as a democratic politician. However, when we include other image characteristics in Column (2), the coefficient for \textit{Democratic Outlets} changes to \( 0.278 \) and loses statistical significance (\( p < 0.10 \)). This reduction in both magnitude and significance suggests that part of the association initially attributed to \textit{Democratic Outlets} in Column (1) was actually capturing differences in how images were presented across outlets. In Column (3), we show that the change in the coefficient for Democratic Outlets comes from the fact that the variable Democratic Outlets is correlated with some of the three image features excluded from the model in Column (1). It is important to emphasize that the correlation structure between the Democratic Outlet dummy and three arbitrary images only highlight the potential for the bias in the estimates from the reduced form model, and the correct estimate that can be obtained with including all relevant controls can have any directional relationship with the estimates in Column (1). 

In sum, the empirical evidence confirms the existence of extraction and/or omitted variable bias and highlights the crucial role it plays in inference when social scientists use reduced-form models. Even in this simple illustrative example, we see that failing to account for correlated image characteristics can lead to biased estimates and incorrect conclusions about media bias and political polarization. Finally, it is crucial to emphasize that edge density, brightness, and contrast are just three arbitrary image characteristics selected from the much broader set of potential contextual features \( Z^{(-T)} \). The visual content of an image encompasses a wide range of information, and the choice of these particular features is meant to demonstrate that even a limited selection of characteristics can significantly impact the results.

\begin{table}[t]
\begin{center}
\small{
\begin{tabular}[h]{lccc}
\\[-1.8ex]\hline
\hline \\[-1.8ex]
\multirow{2}{*}{Variable} & \multicolumn{2}{c}{\textit{Dependent Variable: $\hat{T}$ (Happiness) }} & \textit{Dependent Variable: $Y$} \\
\cline{2-4} \\
[-1.8ex] &  (1)   &  (2)  &  (3)   \\
\hline \\[-1.8ex]
Intercept  & -1.170$^{***}$ & -1.809$^{***}$ &  -1.604$^{***}$\\
 & (0.104) & (0.257) &  (0.235)\\[0.1cm]
Democratic Outlets ($Y$) & 0.369$^{**}$ & 0.278$^{\text{†}}$ &  \\
 & (0.141) & (0.145) &  \\[0.1cm]
Brightness Level &  & 0.356  & 0.325\\
 &  & (0.561)  & (0.512)\\[0.1cm]
Edge Density &  & 0.878$^{\text{†}}$ & 2.312$^{***}$\\
 &  & (0.477) &  (0.456)\\[0.1cm]
Contrast &  & 0.537 & 1.450$^{**}$\\
 &  & (0.561) &  (0.513)\\[0.1cm]
\hline \\[-1.8ex]
No. of Obs. & 1021 & 1021  & 1021 \\
\textit{Pseudo-}$R^2$ & 0.005 & 0.013 &  0.040\\
Log-Likelihood & -595.4 & -591.3 &  -679.2\\
Likelihood Ratio Test & 6.87$^{**}$ & 14.97$^{**}$ &  56.94$^{***}$\\
\hline \\[-1.8ex]
\textit{Note:}  & \multicolumn{3}{r}{$^{\text{†}}$p$<$0.10; $^{*}$p$<$0.05; $^{**}$p$<$0.01; $^{***}$p$<$0.001} \\
\hline\hline
\end{tabular}
\caption{Results for logistic regressions. Columns 1-3 show results where $T$ is the binary dependent variable (happiness), while column 4 shows results where $Y$ (news outlet side) is the dependent variable where Republican outlets ({\it Fox News}, {\it Newsmax}, {\it Daily Mail}) are coded as $Y=0$ and Democratic outlets ({\it The New York Times}, {\it BBC}, {\it CNN}) are coded as $Y=1$. Image characteristics include brightness level, edge density, and contrast.}
\label{tab:logit_results}}
\end{center}
\end{table}

\section{Appendix: Implementation Details}\label{appsec:DetailedMLModel}

\subsection{Multi-Modal ML Implementation}\label{appssec:MLImplementation}

First, to fully exploit the presence of similar events, we use the contextual information in articles. The contextual information comprises textual data \( X^{\text{text}} \) from article titles, publication date \( X^{\text{date}} \), politicians' names \( P \), and political affiliations \( P^\text{aff} \), and image data \( Z \). We want our model architecture to fully capture the clustering structure of the contextual information for two reasons. First, identifying clusters with diverse news outlets allows for identification of visual factors related to polarization. Second, identifying clusters with a single or only a few news outlets helps identify stylistic preferences of outlets, preventing those factors to play a role in determining the polarization parameter. 

Figure \ref{fig:ML_Model} in the main text of the paper offers a visual overview of the model. To extract meaningful patterns in textual data \(  X^{\text{text}} \), we utilize Latent Dirichlet Allocation (LDA), which models the text data by identifying underlying topics across the articles. Although methods like BERT \citep{devlin2018bert} are known for their semantic understanding, LDA outperformed these models in our specific prediction task, particularly in identifying and quantifying topic distributions. Each article is thus represented as a 40-dimensional vector \( F_{\text{LDA}} \), encapsulating the relevance of different topics within the text. The LDA processing steps, including pre-processing and model training, are detailed in Web Appendix $\S$\ref{appssec:TextProcessing}. Categorical data, including the publication date \( X^{\text{date}} \), politicians' names \( P \), and political affiliations \( P^\text{aff} \), provides additional context that enhances the model's ability to discern patterns relevant to news outlet classification. These features are represented using \textit{embedding layers}, where each categorical value is mapped to a dense vector representation. Specifically, we use an embedding size of 4 for the publication dates \( X^{\text{date}} \), 8 for the politicians' names \( P \), and 2 for political affiliations \( P^\text{aff} \). For the image input, \textit{ResNet-101} processes the entire image, leveraging its exceptional performance in general image recognition tasks to extract hierarchical and context-rich features from the broader visual content and scene information \citep{he2016deep}. To adapt these architectures to the specific requirements of our task, the last 10 layers of ResNet-101 and the last 5 layers of VGG-Face are fine-tuned, ensuring that both contextual and facial embeddings are optimized for our application.

The second challenge is related to correctly estimating the link between smile and news outlet prediction. We use \textit{MTCNN} architecture for face detection due to its ability to perform joint face detection and alignment with high accuracy, ensuring precise focus on facial regions \citep{zhang2016joint}. Detected faces are then passed through the \textit{VGG-Face} network, which is particularly well-suited for this task because it is pre-trained on facial expression data, making it highly effective at capturing facial attributes such as smiles \citep{parkhi2015deep}. We design this part of the architecture to ensure that the predictive model accounts for the information in the politicians' faces. Later in results, we show how adding this element to the structure allows the model to correctly identify the differences between the counterfactual image versions. 

In summary, the integration of modalities occurs through specialized attention mechanisms:
\squishlist
\item \textit{Chunk attention} is applied to the VGG-Face embeddings, combining them with categorical data (politicians’ names \( P \) and affiliations \( P^\text{aff} \)) to capture correlations between facial features and structured metadata related to the image. This fusion ensures the model can link specific facial attributes to political or identity-related information \citep{liang2024foundations} (see Web Appendix $\S$\ref{appsssec:face_analysis} for details). 
\item \textit{Attention Mechanism} processes the structured data (categorical features, LDA topics), learning to prioritize the most relevant metadata features for the classification task \citep{vaswani2017attention} (see Web Appendix $\S$\ref{appsssec:structured_analysis} for details). 
\item The embeddings from ResNet-101 that are passed through fully connected layers, enriched with contextual image information, are directly incorporated into the final representation (see Web Appendix $\S$\ref{appsssec:image_analysis} for details).
\squishend
The outputs from these attention mechanisms and embeddings are used together in the final classification layer, which predicts the target news outlet. This design ensures that facial details, image context, and structured data interact effectively, enhancing the model’s performance in discerning patterns across modalities. 

After concatenating the feature vectors, the output from this layer is then passed through a final softmax layer to produce the classification prediction. The complete model details are explained in Web Appendix $\S$\ref{appssec:ModelArchitecture}. The model is trained using the AdamW optimizer \citep{loshchilov2017decoupled}, which is particularly suited for large-scale data due to its adaptive learning rate and regularization through weight decay. The loss function used is weighted cross-entropy, which measures the discrepancy between the predicted probabilities and the true labels, optimizing the model to improve classification accuracy. The training process's specifics, including the dataset's division, hyperparameter tuning, and regularization techniques, are discussed in Web Appendix $\S$\ref{appssec:ModelOptimization}.

With the model architecture defined, we can then maximize the following entropy over the training data:
\begin{equation}
\textbf{Entropy:} \quad \max_{\theta} \mathcal{H}(\theta) = \max_{\theta} \left( - \sum_{i=1}^{\mathcal{N}} \sum_{y \in \mathcal{Y}} g(Y_i = y \mid Z_i, X_i, P_i; \theta) \log g(Y_i = y \mid Z_i, X_i, P_i; \theta) \right) 
\end{equation}
The resulting model is what we use to estimate the polarization parameter using Equation \eqref{eq:analog2}.

\subsubsection{Textual Data Processing using LDA}\label{appssec:TextProcessing}
Textual data \( X^{\text{text}} \) is pivotal as it provides contextual and content information from the articles. We evaluated several approaches for modeling this data, including BERT (Bidirectional Encoder Representations from Transformers) \citep{devlin2018bert} and LDA (Latent Dirichlet Allocation) \citep{blei2003latent}. Despite BERT's advanced capabilities in understanding context and semantics, LDA solely demonstrated superior performance in our prediction task, particularly in identifying topics and their distributions across the articles.

LDA is a generative statistical model that explains sets of observations by unobserved groups, revealing why certain parts of the data are similar. The model assumes that each document is a mixture of a small number of topics and that each word in the document is attributable to one of the document's topics. The hyperparameters \( \alpha \) and \( \beta \) are set to 'auto' to allow the model to learn these parameters during training. Mathematically, LDA posits the following generative process for a corpus \( D \) consisting of \( M \) documents, each containing \( N \) words:

\begin{algorithm}[H]
\caption{LDA Generative Process \citep{blei2003latent}}
\begin{algorithmic}[1]
\For{each topic \( k \) in \( \{1, \ldots, K\} \)}
    \State Draw a distribution over words \( \phi_k \sim \text{Dir}(\beta) \)
\EndFor
\For{each document \( d \) in \( \{1, \ldots, M\} \)}
    \State Draw a distribution over topics \( \theta_d \sim \text{Dir}(\alpha) \)
    \For{each word \( n \) in \( \{1, \ldots, N_d\} \)}
        \State Draw a topic \( z_{dn} \sim \text{Multinomial}(\theta_d) \)
        \State Draw a word \( w_{dn} \sim \text{Multinomial}(\phi_{z_{dn}}) \)
    \EndFor
\EndFor
\end{algorithmic}
\end{algorithm}

Here, \( \alpha \) and \( \beta \) are hyperparameters of the Dirichlet distributions, \( \theta_d \) is the topic distribution for document \( d \), \( \phi_k \) is the word distribution for topic \( k \), \( z_{dn} \) is the topic assignment for the \( n \)-th word in document \( d \), and \( w_{dn} \) is the \( n \)-th word in document \( d \). In our model, we set the number of topics \( K \) to 40, and the model is trained with 40 passes over the corpus to ensure topic extraction.

The preprocessing steps for the textual data include cleaning and tokenizing the news titles, followed by lemmatization and stopword removal using NLTK's WordNetLemmatizer and stopwords list \citep{bird2009natural}. The processed tokens are then used to create bigrams and trigrams using Gensim's Phrases model \citep{rehurek_lrec}. These n-grams help capture contextual relationships between words, making the textual data more informative for the LDA model.

The trained LDA model generates topic distributions for each document. For each document \( d \), LDA provides the distribution \( \theta_d \) over topics, which can be interpreted as the document's composition in terms of latent topics. These distributions are then scaled and normalized for input into the neural network. This transformation converts textual data into a structured format encapsulating underlying topics and their relevance to each document. As a result, we have a 40-dimensional vector representation for each news article title.

\subsubsection{Face Information Branch: Framework and Operations}\label{appsssec:face_analysis}
The Face Analysis branch of the proposed architecture is designed to extract, refine, and integrate facial information using a multi-step process that combines advanced detection, feature extraction, dimensionality reduction, and attention mechanisms. This section details each operation, starting from face detection via the MTCNN, progressing to feature extraction using Face-VGG, and culminating with chunked attention for integrating structured metadata. Given an input image \( \mathbf{I} \in \mathbb{R}^{H \times W \times C} \), the MTCNN algorithm detects faces and aligns them for further processing. The detection process is divided into three stages. In the first stage, a proposal network is used to generate candidate bounding boxes:
\[
\mathbf{R}_1 = f_1(\mathbf{I}; \boldsymbol{\theta}_1),
\]
where \( \mathbf{R}_1 \) denotes the set of candidate bounding boxes, and \( \boldsymbol{\theta}_1 \) are the parameters of the proposal network. These candidate regions are refined in the second stage using a refine network:
\[
\mathbf{R}_2 = f_2(\mathbf{R}_1; \boldsymbol{\theta}_2),
\]
where \( \mathbf{R}_2 \) represents the refined bounding boxes. Finally, the output network predicts the bounding boxes and facial landmarks:
\[
(\mathbf{R}_\text{final}, \mathbf{L}) = f_3(\mathbf{R}_2; \boldsymbol{\theta}_3),
\]
where \( \mathbf{L} \in \mathbb{R}^{5 \times 2} \) are the coordinates of the facial landmarks. Using \( \mathbf{R}_\text{final} \) and \( \mathbf{L} \), the aligned face crop \( \mathbf{F} \in \mathbb{R}^{160 \times 160 \times 3} \) is extracted from the original image. Each aligned face crop \( \mathbf{F} \) is passed through a pre-trained Face-VGG network to extract a 512-dimensional feature vector. Let \( g_\text{VGG}(\cdot; \boldsymbol{\Theta}_\text{VGG}) \) denote the Face-VGG model, where \( \boldsymbol{\Theta}_\text{VGG} \) represents its parameters. The extracted feature vector \( \mathbf{v} \in \mathbb{R}^{512} \) is computed as:
\[
\mathbf{v} = g_\text{VGG}(\mathbf{F}; \boldsymbol{\Theta}_\text{VGG}).
\]
The last five layers of the Face-VGG model are fine-tuned to adapt to the specific task. The extracted feature vector \( \mathbf{v} \) is projected into a lower-dimensional space using a fully connected layer. The reduced feature vector \( \mathbf{v}_\text{red} \in \mathbb{R}^{256} \) is given by:
\[
\mathbf{v}_\text{red} = \mathbf{W}_\text{red} \mathbf{v} + \mathbf{b}_\text{red},
\]
where \( \mathbf{W}_\text{red} \in \mathbb{R}^{256 \times 512} \) and \( \mathbf{b}_\text{red} \in \mathbb{R}^{256} \) are learnable parameters of the layer. To improve generalization, batch normalization and dropout are applied to \( \mathbf{v}_\text{red} \). The normalized and regularized feature vector \( \mathbf{v}_\text{norm} \) is computed as:
\[
\mathbf{v}_\text{norm} = \text{BatchNorm}(\mathbf{v}_\text{red}),
\]
and the final output after dropout is:
\[
\mathbf{v}_\text{drop} = \text{Dropout}(\mathbf{v}_\text{norm}, p=0.3).
\]
The reduced feature vector \( \mathbf{v}_\text{drop} \in \mathbb{R}^{256} \) is divided into \( k = 8 \) equally sized chunks, each of dimension \( 32 \):
\[
\mathbf{v}_\text{chunk} = \{\mathbf{v}_i \in \mathbb{R}^{32} \mid i = 1, \dots, k\}, \quad \mathbf{v}_i = \mathbf{v}_\text{drop}[32(i-1):32i].
\]
This chunking operation allows the model to analyze localized regions of the image feature vector \citep{fu2019dual}. To incorporate structured metadata, such as person and party embeddings, the chunked features \( \mathbf{v}_\text{chunk} \) interact with a metadata embedding \( \mathbf{m} \in \mathbb{R}^{32} \). The metadata embedding is computed as:
\[
\mathbf{m} = \mathbf{W}_\text{meta} \mathbf{x} + \mathbf{b}_\text{meta},
\]
where \( \mathbf{x} \) represents the concatenated embeddings of person and party, \( \mathbf{W}_\text{meta} \in \mathbb{R}^{32 \times d_\text{meta}} \), and \( \mathbf{b}_\text{meta} \in \mathbb{R}^{32} \) are learnable parameters. An attention mechanism assigns weights \( \alpha_i \) to each chunk \( \mathbf{v}_i \) based on its similarity to the metadata embedding \( \mathbf{m} \):
\[
\alpha_i = \frac{\exp(\mathbf{v}_i^\top \mathbf{m})}{\sum_{j=1}^k \exp(\mathbf{v}_j^\top \mathbf{m})}.
\]
The attention-weighted feature representation \( \mathbf{v}_\text{att} \in \mathbb{R}^{256} \) is computed as:
\[
\mathbf{v}_\text{att} = \sum_{i=1}^k \alpha_i \mathbf{v}_i.
\]
The output of the Face Analysis branch \( \mathbf{v}_\text{att} \) is concatenated with features from the ResNet branch \( \mathbf{f}_\text{ResNet} \) and the metadata branch \( \mathbf{f}_\text{meta} \) to form the final aggregated feature vector:
\[
\mathbf{z} = [\mathbf{v}_\text{att}; \mathbf{f}_\text{ResNet}; \mathbf{f}_\text{meta}].
\]
This feature vector \( \mathbf{z} \in \mathbb{R}^{d_\text{final}} \) is passed through subsequent layers for classification.

\subsubsection{Structured Metadata Branch: Framework and Operations}\label{appsssec:structured_analysis}

The Structured Data Branch in the proposed architecture processes tabular metadata, including embeddings for categorical features (e.g., person, party, and date) and topic distributions derived from LDA. The key component of this branch is the multi-head attention mechanism, inspired by the Transformer model \citep{vaswani2017attention}, which dynamically highlights relevant interactions among structured features. This section provides the mathematical foundations and operations in a coherent manner.

Let the structured input be represented as \( \mathbf{x} \in \mathbb{R}^{d_\text{struct}} \), where \( d_\text{struct} = 54 \), obtained by concatenating the embeddings of categorical features and topic distributions. The embeddings for person, party, and date are denoted as \( P \in \mathbb{R}^{d_\text{person}} \), \( P^\text{aff} \in \mathbb{R}^{d_\text{party}} \), and \( X^{\text{date}}  \in \mathbb{R}^{d_\text{date}} \), respectively, while the topic vector is \( X^{\text{text}} \in \mathbb{R}^{d_\text{topics}} \), where \( d_\text{text} = 40 \). The structured input vector \( \mathbf{x} \) is thus formed as:

\[
\mathbf{x} = [P; P^\text{aff}; X^{\text{date}}; X^{\text{text}}],
\]
where \( [\cdot] \) denotes vector concatenation. This input vector is then projected into a latent representation space of dimension \( d_\text{latent} = 64 \) via a linear transformation:

\[
\mathbf{h} = \mathbf{W}_\text{linear} \mathbf{x} + \mathbf{b}_\text{linear},
\]
where \( \mathbf{W}_\text{linear} \in \mathbb{R}^{d_\text{latent} \times d_\text{struct}} \) and \( \mathbf{b}_\text{linear} \in \mathbb{R}^{d_\text{latent}} \) are learnable weights and biases. The resulting vector \( \mathbf{h} \in \mathbb{R}^{d_\text{latent}} \) serves as the input to the multi-head attention mechanism. The multi-head attention mechanism begins by projecting \( \mathbf{h} \) into three distinct subspaces corresponding to the query (\( \mathbf{Q} \)), key (\( \mathbf{K} \)), and value (\( \mathbf{V} \)) representations \citep{vaswani2017attention}. These projections are computed as:

\[
\mathbf{Q} = \mathbf{W}_Q \mathbf{h}, \quad 
\mathbf{K} = \mathbf{W}_K \mathbf{h}, \quad 
\mathbf{V} = \mathbf{W}_V \mathbf{h},
\]
where \( \mathbf{W}_Q, \mathbf{W}_K, \mathbf{W}_V \in \mathbb{R}^{d_\text{head} \times d_\text{latent}} \) are learnable weight matrices, and \( d_\text{head} \) represents the dimensionality of each attention head. The scaled dot-product attention mechanism computes the compatibility between queries and keys, resulting in an attention score matrix:

\[
\mathbf{A} = \text{softmax}\left( \frac{\mathbf{Q} \mathbf{K}^\top}{\sqrt{d_\text{head}}} \right),
\]
where \( \text{softmax}(\cdot) \) normalizes the scores across all keys for each query, and the scaling factor \( \sqrt{d_\text{head}} \) stabilizes gradients during training by mitigating the effect of large dot-product magnitudes. The attention scores \( \mathbf{A} \in \mathbb{R}^{d_\text{head} \times d_\text{head}} \) modulate the value vectors \( \mathbf{V} \), yielding the attention output:

\[
\mathbf{O}_\text{single} = \mathbf{A} \mathbf{V}.
\]

To capture diverse interactions among the structured features, multiple attention heads are employed. For each attention head \( i \), the process is repeated independently, producing:

\[
\mathbf{O}_i = \mathbf{A}_i \mathbf{V}_i = \text{softmax}\left( \frac{\mathbf{Q}_i \mathbf{K}_i^\top}{\sqrt{d_\text{head}}} \right) \mathbf{V}_i,
\]
where \( i = 1, \dots, h \) and \( h \) is the number of heads. The outputs from all attention heads are concatenated and projected back into the latent representation space using a linear transformation:

\[
\mathbf{O}_\text{multi} = \mathbf{W}_\text{out} \left[ \mathbf{O}_1; \mathbf{O}_2; \dots; \mathbf{O}_h \right],
\]
where \( \mathbf{W}_\text{out} \in \mathbb{R}^{d_\text{latent} \times (h \cdot d_\text{head})} \) is a learnable projection matrix. The multi-head attention output \( \mathbf{O}_\text{multi} \in \mathbb{R}^{d_\text{latent}} \) encodes a rich and dynamic representation of the structured data. To ensure stability and prevent overfitting, the output \( \mathbf{O}_\text{multi} \) is normalized using batch normalization and regularized with dropout. These operations are defined as:

\[
\mathbf{h}_\text{norm} = \text{BatchNorm}(\mathbf{O}_\text{multi}), \quad 
\mathbf{h}_\text{drop} = \text{Dropout}(\mathbf{h}_\text{norm}, p),
\]
where \( p \) is the dropout probability. The final representation of the structured data branch is \( \mathbf{h}_\text{struct} = \mathbf{h}_\text{drop} \), a 64-dimensional vector ready to be fused with the outputs from other branches in the model. The multi-head attention mechanism allows the model to dynamically prioritize different aspects of the structured data by assigning relevance scores based on the compatibility of queries and keys. The queries (\( \mathbf{Q} \)) represent the model’s current focus, the keys (\( \mathbf{K} \)) encode contextual information about all features, and the values (\( \mathbf{V} \)) provide the corresponding information content. By using multiple attention heads, the model captures diverse patterns and relationships, resulting in a robust and task-specific representation of the structured metadata.

\subsubsection{Image Contextual Information Branch: Framework and Operations}\label{appsssec:image_analysis}

The ResNet101 branch is responsible for extracting high-dimensional feature representations from the full raw image data. Let \( \mathbf{I} \in \mathbb{R}^{H \times W \times C} \) represent an input image, where \( H = 244 \) and \( W = 244 \) are the height and width, and \( C \) is the number of color channels. The input image passes through a pre-trained ResNet101 network, with the last 10 layers trainable. Denote the ResNet101 transformation as \( g_\text{ResNet}(\cdot; \boldsymbol{\Theta}_\text{ResNet}) \), where \( \boldsymbol{\Theta}_\text{ResNet} \) are the trainable weights of the final layers. The output feature map \( \mathbf{F}_\text{ResNet} \in \mathbb{R}^{d_\text{ResNet}} \) is given by:

\[
\mathbf{F}_\text{ResNet} = g_\text{ResNet}(\mathbf{I}; \boldsymbol{\Theta}_\text{ResNet}),
\]
where \( d_\text{ResNet} = 2048 \) is the dimensionality of the extracted feature vector. This high-dimensional feature vector is flattened and passed through a linear layer for dimensionality reduction:

\[
\mathbf{f}_\text{ResNet} = \mathbf{W}_\text{reduce} \mathbf{F}_\text{ResNet} + \mathbf{b}_\text{reduce},
\]
where \( \mathbf{W}_\text{reduce} \in \mathbb{R}^{d_\text{reduce} \times d_\text{ResNet}} \), \( \mathbf{b}_\text{reduce} \in \mathbb{R}^{d_\text{reduce}} \), and \( d_\text{reduce} = 512 \). Batch normalization and dropout are applied to enhance generalization:

\[
\mathbf{f}_\text{norm} = \text{BatchNorm}(\mathbf{f}_\text{ResNet}), \quad
\mathbf{f}_\text{drop} = \text{Dropout}(\mathbf{f}_\text{norm}, p),
\]
where \( p = 0.6 \) is the dropout probability. The final output of the ResNet101 branch is \( \mathbf{f}_\text{drop} \in \mathbb{R}^{512} \), which is fused with outputs from other branches for classification.

\subsubsection{Final Model Architecture}\label{appssec:ModelArchitecture}
The final architecture of the multi-modal model is summarized in Table \ref{tab:improved_multimodal_architecture}. This table provides an overview of each layer in the model.
\begin{table}[t]
\centering
\small
\begin{tabular}{llr}
\hline
\textbf{Layer (type)} & \textbf{Output Shape} & \textbf{Param \#} \\ \hline

\multicolumn{3}{c}{\textbf{Branch 1: Face Information}} \\ \hline
MTCNN (Face Detection) & [-1, 160, 160, 3] & 0 \\
Face-VGG (Last 5 layers trainable) & [-1, 512] & 23,560,896 \\
Linear (Face-VGG to 256) & [-1, 256] & 131,328 \\
BatchNorm1d (Face-VGG normalization) & [-1, 256] & 512 \\
Dropout (Face-VGG, 0.3) & [-1, 256] & 0 \\
Chunked VGG Features (8 chunks of 32) & [-1, 8, 32] & 0 \\
Linear (person + party to 32) & [-1, 1, 32] & 384 \\
Chunked Attention (Chunked VGG attention with person + party) & [-1, 256] & 65,792 \\

\multicolumn{3}{c}{\textbf{Branch 2: Image Contextual Information}} \\ \hline

ResNet101 (Last 10 layers trainable) & [-1, 2048] & 44,549,160 \\
Flatten (ResNet feature vector) & [-1, 2048] & 0 \\
Linear (ResNet to 512) & [-1, 512] & 1,049,088 \\
BatchNorm1d (ResNet normalization) & [-1, 512] & 1,024 \\
Dropout (ResNet, 0.6) & [-1, 512] & 0 \\

\multicolumn{3}{c}{\textbf{Branch 3: Structured Metadata}} \\ \hline

Embedding (person: 8, party: 2, date: 4) & [-1, 14] & 0 \\
Latent Dirichlet Allocation (LDA topics: 40) & [-1, 40] & 0 \\
Concatenation (person, party, date, topics) & [-1, 54] & 0 \\
Linear (Structured data to 64) & [-1, 64] & 3,520 \\
MultiHeadAttention (Structured data attention) & [-1, 64] & 16,512 \\
BatchNorm1d (Structured data normalization) & [-1, 64] & 128 \\
Dropout (Structured data, 0.4) & [-1, 64] & 0 \\ \hline

\multicolumn{3}{c}{\textbf{Final: Concatenation}} \\ \hline
Concatenation (VGG + Structured + ResNet) & [-1, 832] & 0 \\
Dropout (Final dropout, 0.5) & [-1, 832] & 0 \\
Linear (Final classification layer) & [-1, num\_classes] & 8,320 \\ \hline
\multicolumn{2}{l}{Total params} & 69,398,560 \\
\multicolumn{2}{l}{Trainable params} & 10,580,820 \\
\multicolumn{2}{l}{Non-trainable params} & 58,817,740 \\
\hline
\end{tabular}
\caption{Architecture of the Multi-Modal Machine Learning Model}
\label{tab:improved_multimodal_architecture}
\end{table}

The outputs from the three branches—Face Analysis (\( \mathbf{f}_\text{face} \in \mathbb{R}^{d_\text{face}} \)), Structured Data (\( \mathbf{h}_\text{struct} \in \mathbb{R}^{d_\text{struct}} \)), and ResNet101 (\( \mathbf{f}_\text{drop} \in \mathbb{R}^{d_\text{reduce}} \))—are concatenated to form a unified feature vector:
\[
\mathbf{z} = [\mathbf{f}_\text{face}; \mathbf{h}_\text{struct}; \mathbf{f}_\text{drop}] \in \mathbb{R}^{d_\text{final}},
\]
where \( d_\text{final} = d_\text{face} + d_\text{struct} + d_\text{reduce} \). In this case, \( d_\text{face} = 256 \), \( d_\text{struct} = 64 \), and \( d_\text{reduce} = 512 \), resulting in \( d_\text{final} = 832 \). To enhance generalization and mitigate overfitting, dropout is applied to the concatenated representation:

\[
\mathbf{z}_\text{drop} = \text{Dropout}(\mathbf{z}, p),
\]
where \( p = 0.5 \) is the dropout rate. The resulting vector \( \mathbf{z}_\text{drop} \in \mathbb{R}^{d_\text{final}} \) is passed through a fully connected classification layer to compute the logits for the \( n_\text{class} \) target classes:

\[
\mathbf{y} = \mathbf{W}_\text{class} \mathbf{z}_\text{drop} + \mathbf{b}_\text{class},
\]
where \( \mathbf{W}_\text{class} \in \mathbb{R}^{n_\text{class} \times d_\text{final}} \) and \( \mathbf{b}_\text{class} \in \mathbb{R}^{n_\text{class}} \) are the weights and biases of the classification layer. The predicted class probabilities are obtained by applying the softmax function to the logits:

\[
\hat{\mathbf{y}}_i = \frac{\exp(y_i)}{\sum_{j=1}^{n_\text{class}} \exp(y_j)}, \quad i = 1, \dots, n_\text{class},
\]
where \( y_i \) is the \( i \)-th component of \( \mathbf{y} \), and \( \hat{\mathbf{y}}_i \) represents the probability of the \( i \)-th class. The final output \( \hat{\mathbf{y}} \in \mathbb{R}^{n_\text{class}} \) is a normalized probability distribution over the target classes, enabling robust multi-class predictions.

\subsubsection{Model Training and Fine-Tuning}
\label{appssec:ModelOptimization}
To ensure robust model evaluation, the dataset is split into training (85\%) and test (15\%) sets. The splitting process preserves the distribution of key features by stratifying based on both news center and date, ensuring balanced records across all classes. The training set is used to optimize the model parameters, while the test data set is used for assessing model performance and polarization measurement to avoid overfitting bias. 

The training process employs the AdamW optimizer \citep{loshchilov2017decoupled}, configured with a learning rate \( \eta = 0.0001 \) and a weight decay coefficient \( \lambda = 0.01 \). This optimizer is particularly well-suited for large-scale data due to its adaptive learning rates and decoupled weight decay, effectively balancing convergence speed and regularization. The loss function used for training is a weighted cross-entropy loss, adjusted to account for class imbalance and enhanced with label smoothing. Let \( \mathcal{D} = \{(\mathbf{x}_i, \mathbf{y}_i)\}_{i=1}^N \) represent a batch of \( N \) training samples, where \( \mathbf{y}_i \in \{0, 1\}^C \) is the one-hot encoded true label for the \( i \)-th sample, and \( C = 20 \) denotes the number of classes. The predicted class probabilities are given by \( \hat{\mathbf{y}}_i = \text{softmax}(\mathbf{z}_i) \), where \( \mathbf{z}_i \in \mathbb{R}^C \) is the logit vector for the \( i \)-th sample. The weighted cross-entropy loss with label smoothing is defined as:

\[
\mathcal{L} = -\frac{1}{N} \sum_{i=1}^{N} \sum_{c=1}^{C} w_c \, \tilde{y}_{i,c} \log \hat{y}_{i,c},
\]
where \( w_c \) is the weight for class \( c \), computed as the inverse of its relative frequency in the training set to mitigate the impact of class imbalance:

\[
w_c = \frac{1}{\text{freq}(c)} \cdot \frac{\sum_{j=1}^C \text{freq}(j)}{C}.
\]
The smoothed label \( \tilde{y}_{i,c} \) for class \( c \) is defined as:

\[
\tilde{y}_{i,c} = (1 - \alpha) y_{i,c} + \frac{\alpha}{C},
\]
where \( \alpha = 0.05 \) is the label smoothing coefficient. Label smoothing redistributes a small fraction of the ground-truth probability mass to all classes, reducing overconfidence in predictions and improving generalization \citep{szegedy2016rethinking}. The optimizer updates the model parameters \( \boldsymbol{\theta} \) at each iteration by minimizing the total loss \( \mathcal{L} \). The parameter updates follow the AdamW rule:

\[
\boldsymbol{\theta}_{t+1} = \boldsymbol{\theta}_t - \eta \frac{\hat{\mathbf{m}}_t}{\sqrt{\hat{\mathbf{v}}_t} + \epsilon} - \lambda \boldsymbol{\theta}_t,
\]
where \( \hat{\mathbf{m}}_t \) and \( \hat{\mathbf{v}}_t \) are the bias-corrected first and second moments of the gradient, respectively, and \( \epsilon \) is a small constant to ensure numerical stability. The algorithm for parameter updates is formally presented below:

\begin{algorithm}[H]
\caption{AdamW Optimizer}
\begin{algorithmic}[1]
\State \textbf{Input:} Learning rate \( \eta \), decay rates \( \beta_1, \beta_2 \), weight decay coefficient \( \lambda \), small constant \( \epsilon \)
\State \textbf{Initialize:} \( \mathbf{m}_0 = 0 \), \( \mathbf{v}_0 = 0 \), \( t = 0 \)
\For{each iteration}
    \State \( t = t + 1 \)
    \State Compute gradients of the loss: \( \mathbf{g}_t = \nabla_{\boldsymbol{\theta}} \mathcal{L}(\boldsymbol{\theta}_{t-1}) \)
    \State Update biased first moment estimate:
    \[
    \mathbf{m}_t = \beta_1 \mathbf{m}_{t-1} + (1 - \beta_1) \mathbf{g}_t
    \]
    \State Update biased second moment estimate:
    \[
    \mathbf{v}_t = \beta_2 \mathbf{v}_{t-1} + (1 - \beta_2) \mathbf{g}_t^{\odot 2}
    \]
    \State Compute bias-corrected first moment estimate:
    \[
    \hat{\mathbf{m}}_t = \frac{\mathbf{m}_t}{1 - \beta_1^t}
    \]
    \State Compute bias-corrected second moment estimate:
    \[
    \hat{\mathbf{v}}_t = \frac{\mathbf{v}_t}{1 - \beta_2^t}
    \]
    \State Update parameters:
    \[
    \boldsymbol{\theta}_t = \boldsymbol{\theta}_{t-1} - \eta \frac{\hat{\mathbf{m}}_t}{\sqrt{\hat{\mathbf{v}}_t} + \epsilon} - \lambda \boldsymbol{\theta}_{t-1}
    \]
\EndFor
\end{algorithmic}
\end{algorithm}

The optimization process continues over 30 epochs, where each epoch consists of processing mini-batches of data of size 64. At each iteration, the model updates its parameters using the computed gradients and the AdamW update rule. The weight decay term \( \lambda \boldsymbol{\theta}_{t-1} \) ensures regularization by penalizing large weights, helping to improve generalization. Metrics such as validation loss and classification accuracy are monitored at the end of each epoch to assess the model’s performance and ensure generalization to unseen data.

\subsection{Validating Generated Counterfactual Images} \label{appssec:GeneratingCounterfactual}

In this part, we explain generating counterfactual images. First, for each politician \( p \), we select three images \( \tilde{Z}^{0}_{p1}, \tilde{Z}^{0}_{p2}, \tilde{Z}^{0}_{p3} \) that are not used to train our multi-modal ML model, sum up to 84 images. To ensure unbiasedness, these images are chosen to represent the politician in a neutral manner, avoiding any inherent slant. Consequently, this selected image set inherently represents the neutral version \( z^0 \). Additionally, selecting three images helps mitigate any specific biases that might arise from using a single image and ensures that our analysis is robust across different visual contexts.

Using the approach proposed by \cite{mirza2014conditional}, we generate a smiley version of each image while keeping other factors constant. In other words, we apply the transformation \( \pi^{1} \) on images \( \tilde{Z}^{0}_{p1}, \tilde{Z}^{0}_{p2}, \tilde{Z}^{0}_{p3} \) to obtain \( \pi^{1}(\tilde{Z}^{0}_{p1}) = \tilde{Z}^{1}_{p1}, \pi^{1}(\tilde{Z}^{0}_{p2}) = \tilde{Z}^{1}_{p2}, \pi^{1}(\tilde{Z}^{0}_{p3}) = \tilde{Z}^{1}_{p3} \). Each \( \tilde{Z}^{1} \) now embodies the slant notion, specifically a smile, as shown in Figure \ref{AddingSmile} in the main text of the paper. 

Since the isolated change in smile and facial expression is crucial, we further assess the generated images to see if other image characteristics have changed. The generated photos should have no other significant differences. As discussed, one of the challenges with the two-stage approach is that image characteristics such as brightness or colorfulness can cause bias in the model (see $\S$\ref{ssec:drawbacks}). Therefore, we measure the brightness and colorfulness of images before and after applying the smile operator \( \pi^{1} \). The histograms of these characteristics are shown in Figure \ref{fig:image_characteristics}:

\begin{figure}[H]
    \centering
    \includegraphics[width=\linewidth]{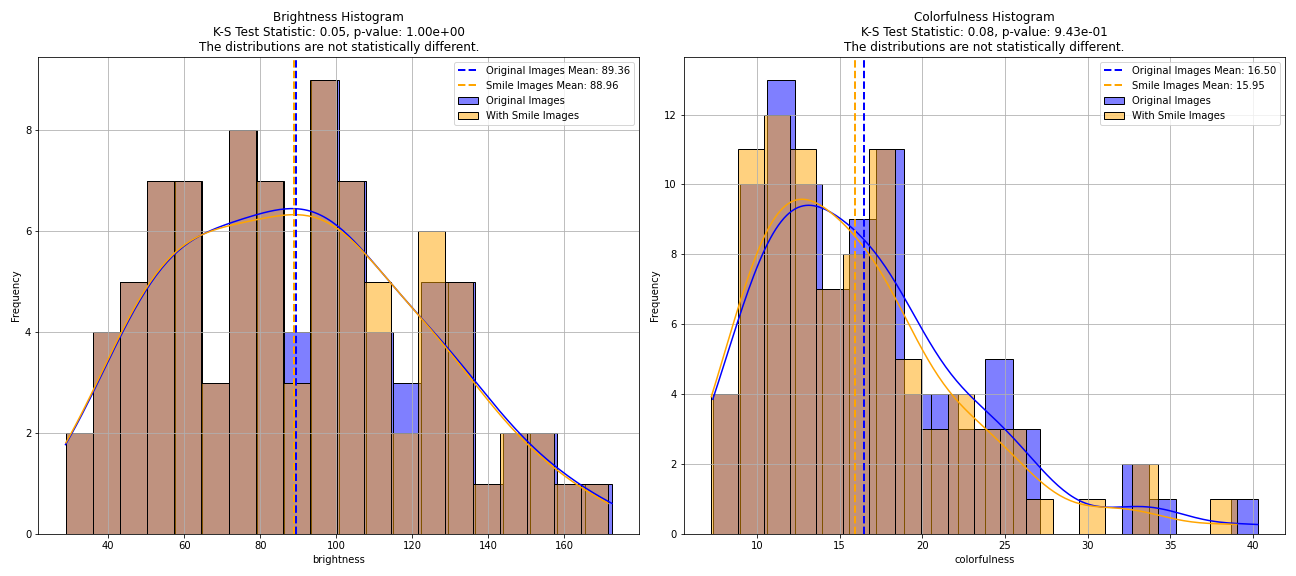}
    \caption{\small Histograms of brightness and colorfulness for original and smiley images.}
    \label{fig:image_characteristics}
\end{figure}

For brightness, the mean value for the original images is 89.36, while for the smiley images, it is 88.96. The Kolmogorov-Smirnov (K-S) test statistic is 0.05 with a p-value of $\approx$1.00, indicating that the distributions are not statistically different. This suggests that the introduction of a smile does not significantly alter the brightness of the images. Similarly, for colorfulness, the mean value for the original images is 16.50, compared to 15.95 for the smiley images. The K-S test statistic here is 0.08 with a p-value of 0.943, again indicating no statistical difference between the distributions.

This analysis confirms that the generated smiley images do not exhibit significant changes in brightness and colorfulness compared to the original images. This supports the validity of using these counterfactual images to isolate the effect of the smile on the parameter of interest, without introducing additional biases from other image characteristics.

\section{Supplementary Results}

\subsection{Detailed Multi-Modal ML Performance}
\label{appssec:result_pred}
In this section, we detail the performance of our multi-modal multi-class classification problem for news outlet prediction. As mentioned earlier, we use an 85\%-15\% split between training and test data. We consider four measures to evaluate the model's predictive performance -- (1) Accuracy, (2) Precision, (3) Recall, and (4) Weighted Cross-Entropy (WCE). Accuracy is simply measured as:
\[
\text{Accuracy} = \frac{\sum_{i=1}^{N} \mathbf{1}(Y_i = \argmax_c \hat{Y}_{ic})}{N},
\]
where $\hat{Y}_{ic}$ is our predicted probability for news outlet $c$ producing article $i$. 

Precision evaluates the reliability of the model’s predictions by measuring the proportion of correctly predicted articles for a given outlet among all articles predicted for that outlet. Formally, it is defined as:
\[
\text{Precision} = \frac{\sum_{i=1}^{N} \mathbf{1}(Y_i = c \land \argmax_c \hat{Y}_{ic} = c)}{\sum_{i=1}^{N} \mathbf{1}(\argmax_c \hat{Y}_{ic} = c)},
\]
where \(\mathbf{1}(Y_i = c \land \argmax_c \hat{Y}_{ic} = c)\) counts the true positives (correct predictions for outlet \(c\)), and \(\mathbf{1}(\argmax_c \hat{Y}_{ic} = c)\) counts all predictions made for outlet \(c\). 

Recall, on the other hand, measures the model’s ability to identify all true articles for a given outlet. It is defined as:
\[
\text{Recall} = \frac{\sum_{i=1}^{N} \mathbf{1}(Y_i = c \land \argmax_c \hat{Y}_{ic} = c)}{\sum_{i=1}^{N} \mathbf{1}(Y_i = c)},
\]
where \(\mathbf{1}(Y_i = c)\) counts all true articles for outlet \(c\). While Precision emphasizes minimizing false positives, Recall ensures that false negatives are minimized, making them complementary metrics to assess the model’s performance comprehensively. 

Finally, WCE Loss is defined as:
\[
\text{WCE Loss} = -\frac{1}{N} \sum_{i=1}^{N} \sum_{c=1}^{C} w_c \, (Y_{ic} (1 - \epsilon) + \epsilon / C) \log(\hat{Y}_{ic}),
\]
where \( C \) is the number of classes, \( w_c \) represents the weight assigned to class \( c \) to handle class imbalance, and \( \epsilon \) is the label smoothing factor applied to soften the target labels, encouraging the model to focus more evenly across all classes.

Table \ref{tab:val_results_sorted_accuracy} provides information on our multi-modal model performance at each class level (news outlet) and presents the test performance of the model across various news sources, sorted by accuracy. The model performs best with \textit{Daily Mail} and shows strong, consistent results across other sources, such as \textit{Newsmax} and \textit{CNN}. The model effectively addresses class imbalances by employing a weighted cross-entropy loss, ensuring fair representation for both frequent and rare classes. This is reflected in the balanced precision, recall, and F1 scores across sources with different sample sizes. The weighted loss function helps prevent overfitting to dominant classes, resulting in reliable performance across various news outlets.

\begin{table}[htp!]
\centering
\small
\begin{tabular}{lccccc}
\hline\hline
\textbf{Source} & \textbf{Accuracy} & \textbf{Precision} & \textbf{Recall} & \textbf{F1-Score} & \textbf{No. of observations} \\
\hline\hline
{\it Daily Mail}      & 0.654 & 0.613 & 0.654 & 0.633 & 987 \\
{\it Newsmax}         & 0.609 & 0.511 & 0.609 & 0.556 & 241 \\
{\it LA Times}        & 0.576 & 0.629 & 0.576 & 0.601 & 151 \\
{\it Fox News}        & 0.493 & 0.509 & 0.493 & 0.501 & 577 \\
{\it ABC News}        & 0.489 & 0.314 & 0.489 & 0.381 & 703 \\
{\it CNN}             & 0.470 & 0.417 & 0.470 & 0.441 & 832 \\
{\it CNBC}            & 0.444 & 0.373 & 0.444 & 0.405 & 410 \\
{\it The New York Times}  & 0.440 & 0.380 & 0.440 & 0.408 & 426 \\
{\it Wall Street Journal} & 0.433 & 0.454 & 0.433 & 0.444 & 593 \\
{\it CS Monitor}      & 0.429 & 0.668 & 0.429 & 0.522 & 203 \\
{\it Time}            & 0.416 & 0.418 & 0.416 & 0.417 & 369 \\
{\it BBC}             & 0.403 & 0.636 & 0.403 & 0.492 & 554 \\
{\it NBC News}        & 0.394 & 0.448 & 0.394 & 0.419 & 937 \\
{\it Washington Post} & 0.373 & 0.316 & 0.373 & 0.342 & 567 \\
{\it CBS News}        & 0.297 & 0.463 & 0.297 & 0.360 & 517 \\
{\it Reuters}         & 0.295 & 0.382 & 0.295 & 0.332 & 233 \\
{\it HuffPost}        & 0.285 & 0.316 & 0.285 & 0.300 & 321 \\
{\it US News}         & 0.249 & 0.411 & 0.249 & 0.309 & 109 \\
{\it USA Today}       & 0.234 & 0.303 & 0.234 & 0.264 & 425 \\
{\it Yahoo News}      & 0.196 & 0.202 & 0.196 & 0.199 & 296 \\
\hline\hline
\end{tabular}
\caption{Predictive accuracy results on the test set for different news outlets (sorted by accuracy)}
\label{tab:val_results_sorted_accuracy}
\end{table}

\subsection{Details of PCA and t-SNE for Reducing Image Dimensionality to Two}
\label{appssec:pcatsne}
PCA is used to reduce the dimensionality of the embeddings from $d$ to $d' \ll d$ by projecting them onto a lower-dimensional subspace that retains most of the variance in the data. Mathematically, PCA identifies a set of orthogonal components $\mathbf{w}_1, \mathbf{w}_2, \ldots, \mathbf{w}_{d'} \in \mathbb{R}^d$, where each component maximizes the variance of the projected data. The reduced embedding for an image is given by:
\[
\mathbf{e}_i^{\text{PCA}} = \mathbf{W}^\top \mathbf{e}_i, \quad \mathbf{W} = [\mathbf{w}_1, \mathbf{w}_2, \ldots, \mathbf{w}_{d'}],
\]
where $\mathbf{W}$ is the matrix of the top $d'$ eigenvectors of the covariance matrix of the embeddings, sorted by their corresponding eigenvalues. This transformation ensures that the majority of the information in the original embeddings is preserved in the reduced representation, making subsequent computations more efficient.

After reducing the embeddings to \( d' \) dimensions using PCA, we project them into a two-dimensional space, \( \mathbf{e}_i^{\text{t-SNE}} \in \mathbb{R}^2 \), using t-SNE for visualization. t-SNE, or t-Distributed Stochastic Neighbor Embedding, is a dimensionality reduction technique designed specifically for high-dimensional data visualization \citep{belkina2019automated}. It aims to preserve the local structure of the data by modeling the similarity between points \( i \) and \( j \) in the original high-dimensional space as probabilities \( P = \{p_{ij}\} \), and then finding a 2D embedding where the similarities, represented by \( Q = \{q_{ij}\} \), approximate \( P \). In this 2D space, the similarity between points \( i \) and \( j \) is modeled using a Student-t distribution, where the pairwise similarity is defined as:
\[
q_{ij} = \frac{\left(1 + \|\mathbf{e}_i^{\text{t-SNE}} - \mathbf{e}_j^{\text{t-SNE}}\|^2\right)^{-1}}{\sum_{k \neq l} \left(1 + \|\mathbf{e}_k^{\text{t-SNE}} - \mathbf{e}_l^{\text{t-SNE}}\|^2\right)^{-1}},
\]
where \(\mathbf{e}_i^{\text{t-SNE}}\) and \(\mathbf{e}_j^{\text{t-SNE}}\) represent the 2D coordinates of points \(i\) and \(j\), \(\|\mathbf{e}_i^{\text{t-SNE}} - \mathbf{e}_j^{\text{t-SNE}}\|\) is the Euclidean distance between points \(i\) and \(j\) in the 2D space, and \(k\) and \(l\) iterate over all points in the dataset to compute the denominator of the similarity measure for proper normalization. t-SNE attempts to arrange the 2D points such that their similarities, denoted as \(Q = \{q_{ij}\}\), approximate the similarities in the high-dimensional space, denoted as \(P = \{p_{ij}\}\). Here, \(P = \{p_{ij}\}\) represents the pairwise similarities computed from the PCA-reduced high-dimensional embeddings, and \(Q = \{q_{ij}\}\) represents the pairwise similarities in the 2D space computed using the formula above. The arrangement of points in 2D is achieved by minimizing the Kullback-Leibler (KL) divergence:
\[
\text{KL}(P \| Q) = \sum_{i \neq j} p_{ij} \log \frac{p_{ij}}{q_{ij}},
\]
where \(p_{ij}\) and \(q_{ij}\) denote the similarity between points \(i\) and \(j\) in the high-dimensional and 2D spaces, respectively. By minimizing this divergence, t-SNE ensures that points with high similarity in the high-dimensional space (large \(p_{ij}\)) remain close in the 2D space (large \(q_{ij}\)), while dissimilar points (small \(p_{ij}\)) are positioned farther apart. The resulting 2D embeddings, \(\mathbf{e}_i^{\text{t-SNE}}\), enable visualization of images' space,

To further analyze the patterns in the embedding space for a specific politician \( p \), we perform clustering on the reduced embeddings associated with their images. For each politician \( p \), let the set of embeddings corresponding to their images be denoted as \(\mathcal{E}^p = \{\mathbf{e}_i^{\text{PCA}} \,|\, i \in \mathcal{I}_p\}\), where \(\mathcal{I}_p\) is the set of image indices for politician \( p \). These embeddings are grouped into \( k = 20 \) clusters using K-Means clustering. The clustering objective for politician \( p \) is to minimize the within-cluster sum of squared distances:

\[
\mathcal{C}_i^p = \arg\min_{\mathcal{C}} \sum_{j=1}^{k} \sum_{\mathbf{e}_i^{\text{PCA}} \in \mathcal{C}_j^p} \|\mathbf{e}_i^{\text{PCA}} - \mathbf{\mu}_j^p\|^2,
\]

where \(\mathcal{C}_j^p\) is the set of embeddings assigned to cluster \( j \) for politician \( p \), and \(\mathbf{\mu}_j^p\) is the centroid of cluster \( j \) for \( p \).  Intuitively, each cluster represents a group of visually similar images, which may correspond to specific events or contextual features, such as similar camera angles and settings. For instance, images of politician \( p \) during a particular speech or event are likely to form a distinct cluster, capturing the shared visual context of those images. Now, for a politician $p$ and a given cluster $k$, we can investigate how different news outlets select images.

\subsection{Hypothesis Tests for the Distribution of Visual Polarization}
\label{appssec:hyptest}

We employ two statistical tests to analyze these differences statistically: the Kolmogorov-Smirnov (K-S) test and one-sample t-tests. The K-S test determines if two samples come from the same distribution. The null hypothesis \( H_0 \) states that the two samples are drawn from the same distribution. On the other hand, the one-sample t-test determines if the mean of a sample is different from a known value (zero in this case). The null hypothesis \( H_0 \) states that the sample mean is equal to the known value. A low p-value indicates that we can reject the null hypothesis. The results of these statistical tests are summarized in Table \ref{tab:statistical-results}.

\begin{table}[H]
    \centering
    \small
    \begin{tabular}{lcccc}
        \toprule
        \textbf{Politicians} & \textbf{Test} & \textbf{Statistic} & \textbf{P-Value} & \textbf{n} \\
        \midrule
        \multirow{3}{*}{Democratic Politicians} 
        & K-S Test & 0.526790 & 0.0000 & 31728 \\
        & Democratic Outlets Mean Test & 96.016984 & 0.0000 & 15864 \\
        & Republican Outlets Mean Test & -82.654572 & 0.0000 & 15864 \\
        \midrule
        \multirow{3}{*}{Republican Politicians} 
        & K-S Test & 0.672592 & 0.0000 & 24978 \\
        & Democratic Outlets Mean Test & -73.504981 & 0.0000 & 12489 \\
        & Republican Outlets Mean Test & 114.545837 & 0.0000 & 12489 \\
        \bottomrule
    \end{tabular}
    \caption{Summary of Statistical Test Results}
    \label{tab:statistical-results}
\end{table}

The results presented in Table \ref{tab:statistical-results} provide strong statistical evidence of visual slant in the portrayal of both Democratic and Republican politicians. For Democratic politicians, the statistical tests confirm a significant and positive visual slant for Democratic-leaning outlets, and a significant and negative visual slant for Republican-leaning outlets. These findings reflect a clear partisan divide, as shown by the significant test results.

For Republican politicians, the table demonstrates an opposing trend. The visual slant is significant and positive for Republican-leaning outlets, whereas it is significantly lower than zero for Democratic-leaning outlets. The statistical significance across all tests underscores the robustness of these patterns. Table \ref{tab:statistical-results} highlights how visual elements, such as smiles, are interpreted differently depending on the political context and outlet alignment, illustrating the media's role in amplifying partisan biases.

\subsection{Individual News Outlet Results}\label{appssec:News_Outlet_Level}

\begin{figure}[H]
    \centering
    \includegraphics[width=1\linewidth]{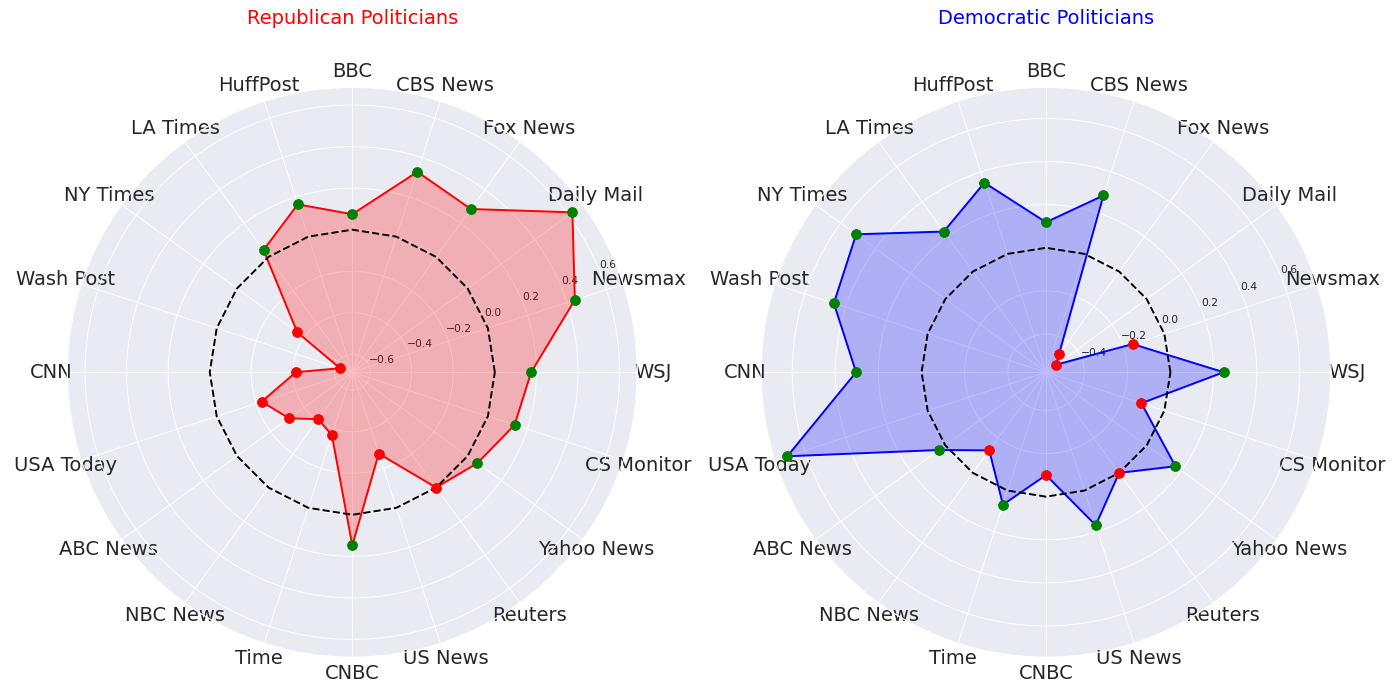}
    \caption{\small Radar plots showing the mean polarization measurement for Republican and Democratic politicians across various news outlets. {\it Reuters} polarization is zero since it serves as the baseline.}
    \label{fig:news_outlets_Radar}
\end{figure}

\begin{figure}[H]
    \centering
    \includegraphics[width=0.95\linewidth]{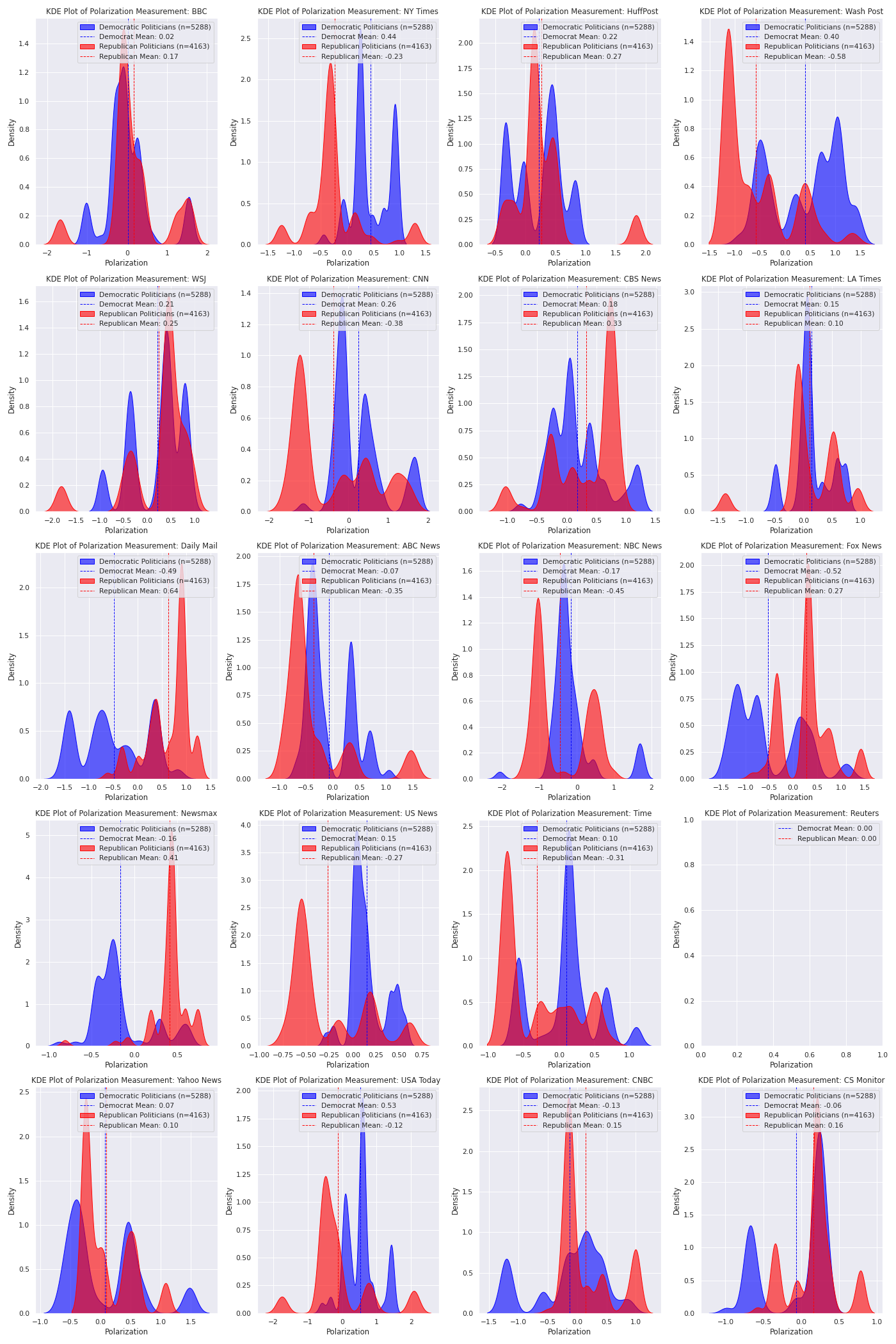}
    \caption{\small Histograms of polarization for Democratic and Republican Politicians in each News Outlet}
    \label{fig:indinidual_Hist}
\end{figure}

\begin{table}[H]
\centering
\begin{tabular}{lcccccccc}
\toprule
News Center & Pol. Side & Mean & Q1 & Q3 & T-Test Stat & T-Test P-Val & Significant? & Support (n) \\
\midrule
BBC & Dem & 0.02 & -0.32 & 0.63 & 2.89 & $3.90 \times 10^{-3}$ & True & 5288 \\
BBC & Rep & 0.17 & -0.11 & 1.53 & 14.87 & $8.47 \times 10^{-49}$ & True & 4163 \\
NY Times & Dem & 0.44 & -0.01 & 0.93 & 90.19 & $0.00$ & True & 5288 \\
NY Times & Rep & -0.23 & -0.71 & 0.44 & -27.63 & $1.97 \times 10^{-154}$ & True & 4163 \\
HuffPost & Dem & 0.22 & -0.34 & 0.77 & 41.33 & $0.00$ & True & 5288 \\
HuffPost & Rep & 0.27 & -0.19 & 0.49 & 36.33 & $2.87 \times 10^{-251}$ & True & 4163 \\
Wash Post & Dem & 0.40 & -0.52 & 1.05 & 42.34 & $0.00$ & True & 5288 \\
Wash Post & Rep & -0.58 & -1.13 & 0.49 & -55.35 & $0.00$ & True & 4163 \\
WSJ & Dem & 0.21 & -0.42 & 0.78 & 28.15 & $1.49 \times 10^{-162}$ & True & 5288 \\
WSJ & Rep & 0.25 & -0.48 & 0.77 & 24.34 & $1.93 \times 10^{-122}$ & True & 4163 \\
CNN & Dem & 0.26 & -0.36 & 1.51 & 29.14 & $2.81 \times 10^{-173}$ & True & 5288 \\
CNN & Rep & -0.38 & -1.22 & 1.15 & -24.19 & $4.70 \times 10^{-121}$ & True & 4163 \\
CBS News & Dem & 0.18 & -0.25 & 1.01 & 28.14 & $1.80 \times 10^{-162}$ & True & 5288 \\
CBS News & Rep & 0.33 & -0.29 & 0.73 & 39.12 & $2.11 \times 10^{-285}$ & True & 4163 \\
LA Times & Dem & 0.15 & -0.48 & 0.63 & 31.82 & $1.77 \times 10^{-203}$ & True & 5288 \\
LA Times & Rep & 0.10 & -0.10 & 0.56 & 13.27 & $2.21 \times 10^{-39}$ & True & 4163 \\
Daily Mail & Dem & -0.49 & -1.41 & 0.36 & -52.10 & $0.00$ & True & 5288 \\
Daily Mail & Rep & 0.64 & 0.00 & 0.90 & 94.11 & $0.00$ & True & 4163 \\
ABC News & Dem & -0.07 & -0.46 & 0.66 & -11.31 & $2.40 \times 10^{-29}$ & True & 5288 \\
ABC News & Rep & -0.35 & -0.81 & 0.34 & -36.04 & $8.04 \times 10^{-248}$ & True & 4163 \\
NBC News & Dem & -0.17 & -0.67 & 0.44 & -19.09 & $1.37 \times 10^{-78}$ & True & 5288 \\
NBC News & Rep & -0.45 & -1.05 & 0.61 & -38.75 & $1.06 \times 10^{-280}$ & True & 4163 \\
Fox News & Dem & -0.52 & -1.26 & 0.37 & -53.96 & $0.00$ & True & 5288 \\
Fox News & Rep & 0.27 & -0.35 & 0.78 & 35.00 & $1.74 \times 10^{-235}$ & True & 4163 \\
Newsmax & Dem & -0.16 & -0.45 & 0.51 & -35.51 & $5.97 \times 10^{-248}$ & True & 5288 \\
Newsmax & Rep & 0.41 & 0.19 & 0.69 & 115.30 & $0.00$ & True & 4163 \\
US News & Dem & 0.15 & 0.02 & 0.48 & 58.18 & $0.00$ & True & 5288 \\
US News & Rep & -0.27 & -0.56 & 0.20 & -42.57 & $0.00$ & True & 4163 \\
Time & Dem & 0.10 & -0.57 & 0.66 & 17.50 & $1.00 \times 10^{-66}$ & True & 5288 \\
Time & Rep & -0.31 & -0.74 & 0.49 & -40.21 & $5.77 \times 10^{-299}$ & True & 4163 \\
Yahoo News & Dem & 0.07 & -0.48 & 0.73 & 8.95 & $4.75 \times 10^{-19}$ & True & 5288 \\
Yahoo News & Rep & 0.10 & -0.24 & 0.58 & 15.71 & $4.78 \times 10^{-54}$ & True & 4163 \\
USA Today & Dem & 0.53 & 0.07 & 1.42 & 82.28 & $0.00$ & True & 5288 \\
USA Today & Rep & -0.12 & -0.51 & 0.93 & -9.03 & $2.64 \times 10^{-19}$ & True & 4163 \\
CNBC & Dem & -0.13 & -1.21 & 0.42 & -15.40 & $2.13 \times 10^{-52}$ & True & 5288 \\
CNBC & Rep & 0.15 & -0.23 & 0.99 & 21.14 & $2.73 \times 10^{-94}$ & True & 4163 \\
CS Monitor & Dem & -0.06 & -0.67 & 0.27 & -10.63 & $4.03 \times 10^{-26}$ & True & 5288 \\
CS Monitor & Rep & 0.16 & -0.34 & 0.73 & 33.06 & $4.96 \times 10^{-213}$ & True & 4163 \\
\bottomrule
\end{tabular}
\caption{Statistical analysis of polarization by news outlet and politician affiliation}
\label{tab:smile_effects}
\end{table}

\subsection{Correlation Analysis for Existing Partisanship Scores }
\label{appssec:MoreResults}
We extend our analysis to all 19 outlets, \( y^k \in \mathcal{Y} \), to evaluate whether polarization patterns observed in $\S$\ref{sssec:pol_dist} hold more broadly. Specifically, we calculate outlet-level {\it CVS} measurement for all outlets following Equation \eqref{equ:CVS}. To validate our measure, we compare {\it CVS} with existing benchmarks, focusing on the {\it conservative share score} by \citet{faris2017partisanship}, \cite{allsides2024}, \cite{flaxman2016filter}. Figures \ref{fig:reg_harvard}, \ref{fig:reg_allsides}, and \ref{fig:reg_flax} visualize these relationships alongside correlation analysis using Pearson and Spearman.
 
\begin{figure}[H]
    \centering
    \includegraphics[width=0.9\linewidth]{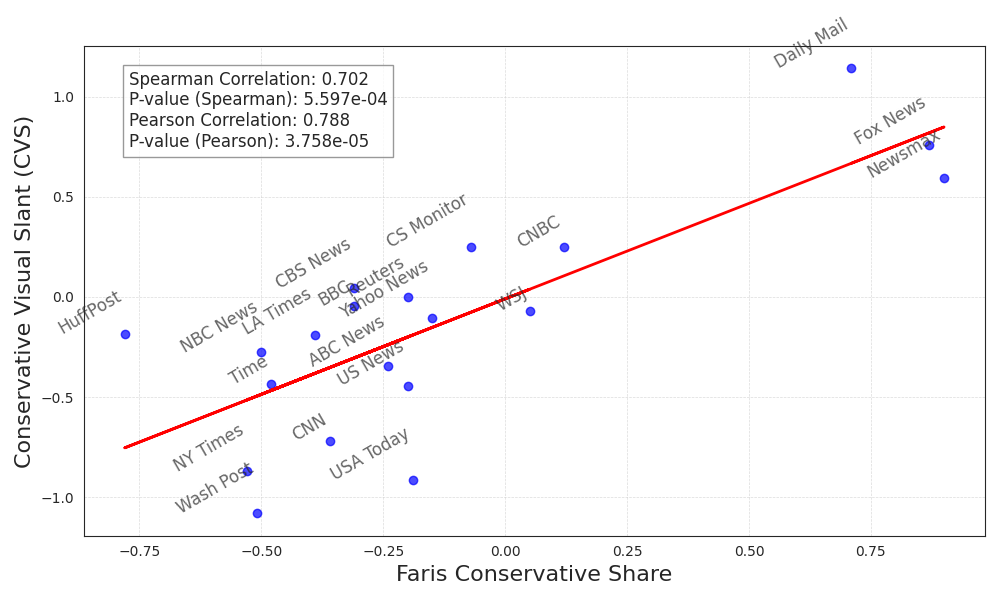}
    \caption{\small Correlation between {\it CVS} and the conservative share score of news outlets (based on \cite{faris2017partisanship}'s Conservative Score).}
    \label{fig:reg_harvard}
\end{figure}

\begin{figure}[H]
    \centering
    \includegraphics[width=0.9\linewidth]{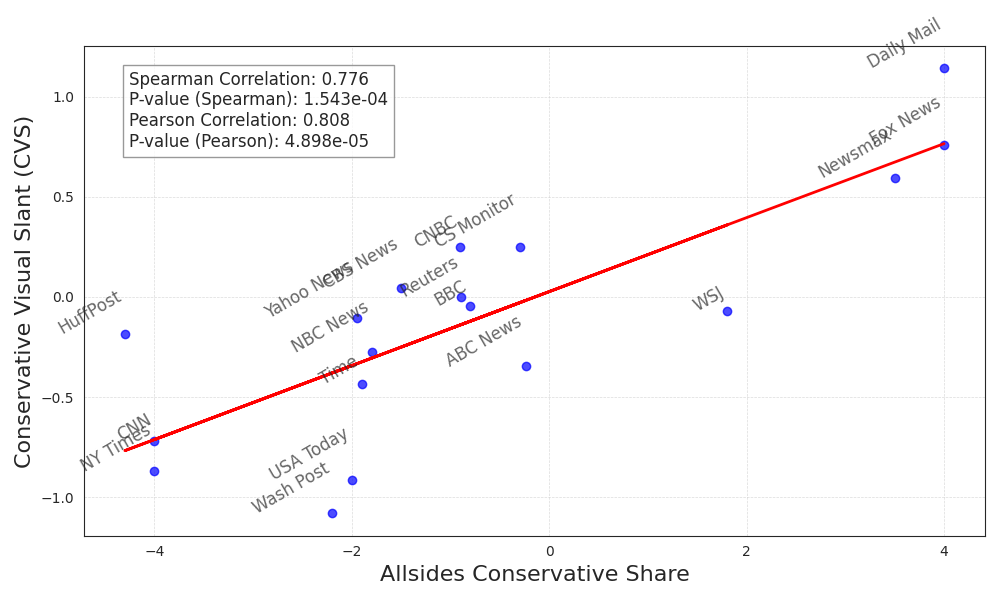}
    \caption{\small Correlation between {\it CVS} and the conservative share score of news outlets (based on \cite{allsides2024} Conservative Score).}
    \label{fig:reg_allsides}
\end{figure}

\begin{figure}[H]
    \centering
    \includegraphics[width=0.9\linewidth]{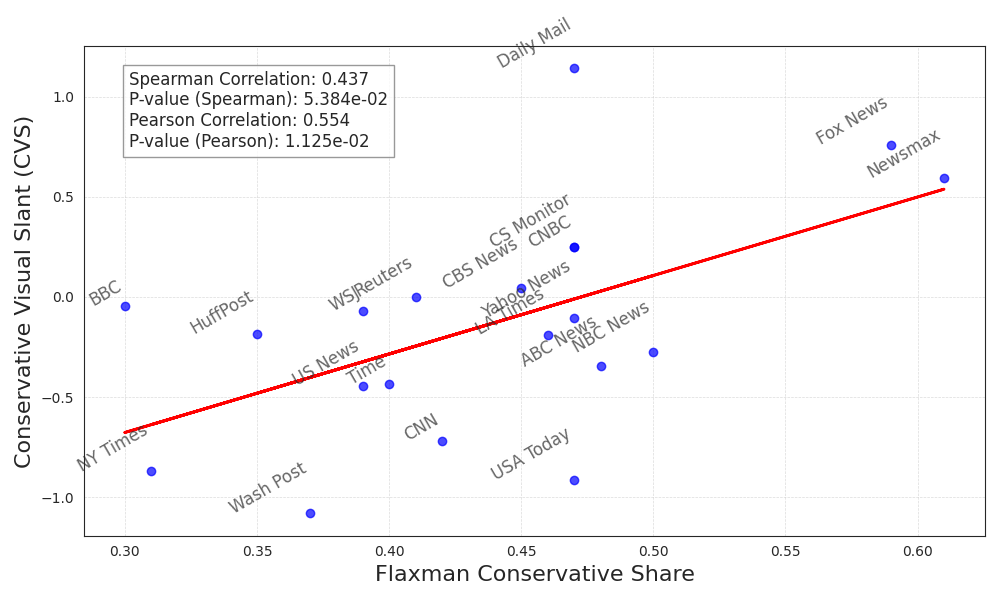}
    \caption{\small Correlation between {\it CVS} and the conservative share score of news outlets. (Based on \cite{flaxman2016filter} Conservative Score)}
    \label{fig:reg_flax}
\end{figure}

Across all three benchmarks, we observe a significant positive correlation between the conservative share scores and our {\it CVS} measure. The strength of correlation varies across sources, with \cite{allsides2024} showing the highest Spearman (\(\rho = 0.776\), \(p < 0.001\)) and Pearson (\(r = 0.808\), \(p < 0.001\)) correlations, followed by \cite{flaxman2016filter} (\(\rho = 0.702\), \(r = 0.788\)), and \cite{faris2017partisanship} showing the weakest but still statistically significant association (\(\rho = 0.437\), \(r = 0.554\)). These results support the validity of {\it CVS} as an independent measure of conservative visual slant, aligning well with established textual and audience-based conservative share scores.

\subsection{Comparison of Slant Visual from PMCIG and Two-stage Approach}
\label{appssec:comparison}

In this section, we present a comparison of our proposed method, PMCIG, with the established two-stage approach for analyzing slant in visual content. For this purpose, we collect the overall visual slant scores from \cite{boxell2021slanted}, which are calculated as the difference between the relative favorability towards Republicans minus Democrats (similar to {\it CVS} we present in Equation \eqref{equ:CVS}, but using the two-stage approach).

Our dataset and \cite{boxell2021slanted} share 11 common news outlets; therefore, the comparison focuses on these shared outlets. To evaluate the performance, we use both Pearson correlation analysis and Spearman rank correlation analysis. Each of these analyses is conducted by comparing the respective methods (PMCIG and Boxell) against three existing conservative share measures, providing an assessment of the alignment and ranking accuracy across approaches. Figures \ref{fig:Comparison_Harvard_vs_Boxell}, \ref{fig:Comparison_Allsides_vs_Boxell}, and \ref{fig:Comparison_OtherPaper_vs_Boxell} summarize our findings, providing statistical insights into the relative effectiveness of each method.

\begin{figure}[H]
    \centering
    \includegraphics[width=0.9\linewidth]{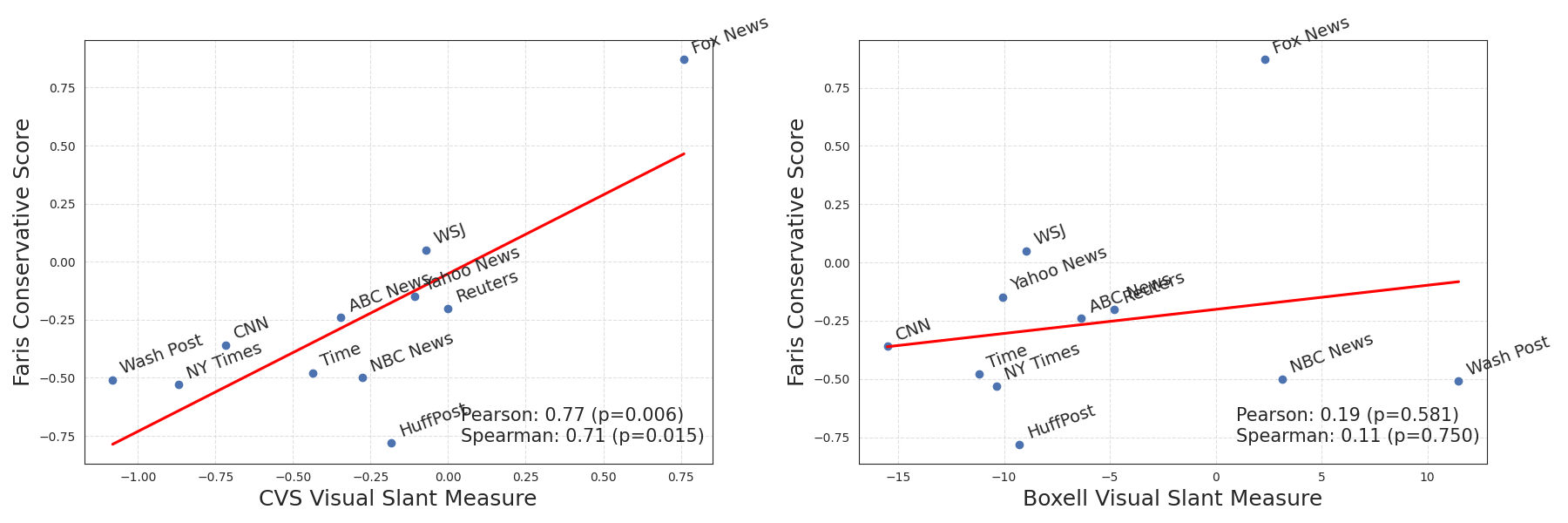}
    \caption{Comparison of visual slant from our PMCIG method (left) and \cite{boxell2021slanted} approach (right) against the conservative score from \cite{faris2017partisanship}.}
    \label{fig:Comparison_Harvard_vs_Boxell}
\end{figure}

\begin{figure}[H]
    \centering
    \includegraphics[width=0.9\linewidth]{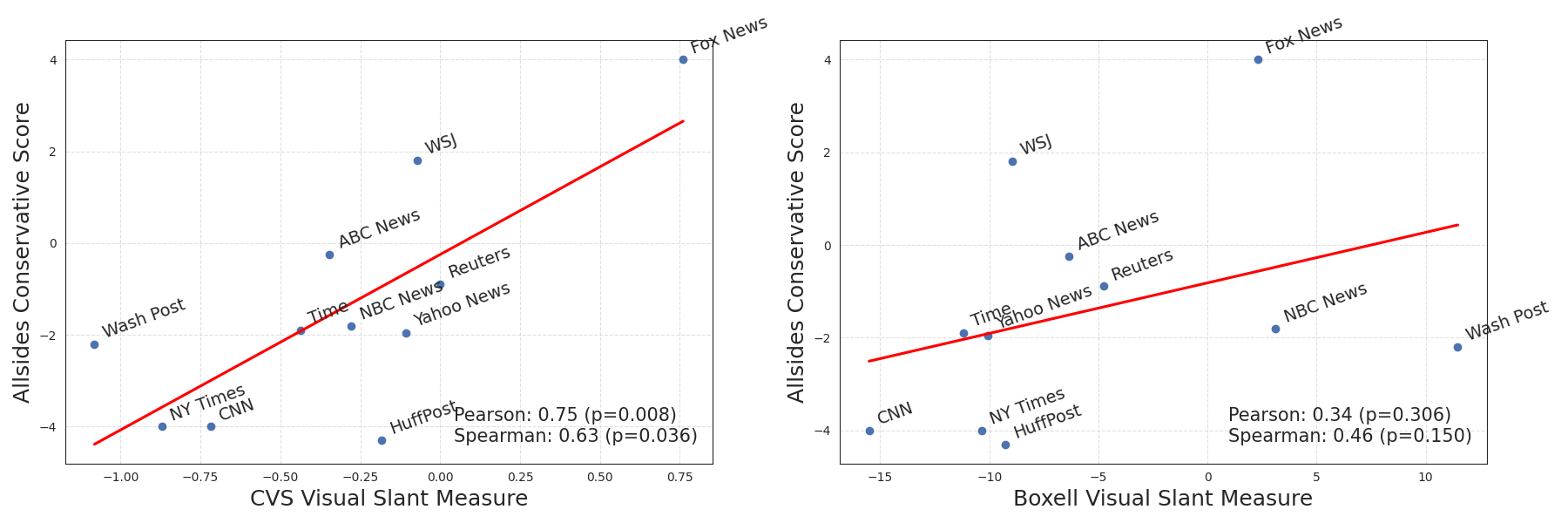}
    \caption{Comparison of visual slant from our PMCIG method (left) and \cite{boxell2021slanted} approach (right) against the conservative score from \cite{allsides2024}.}
    \label{fig:Comparison_Allsides_vs_Boxell}
\end{figure}

\begin{figure}[H]
    \centering
    \includegraphics[width=0.9\linewidth]{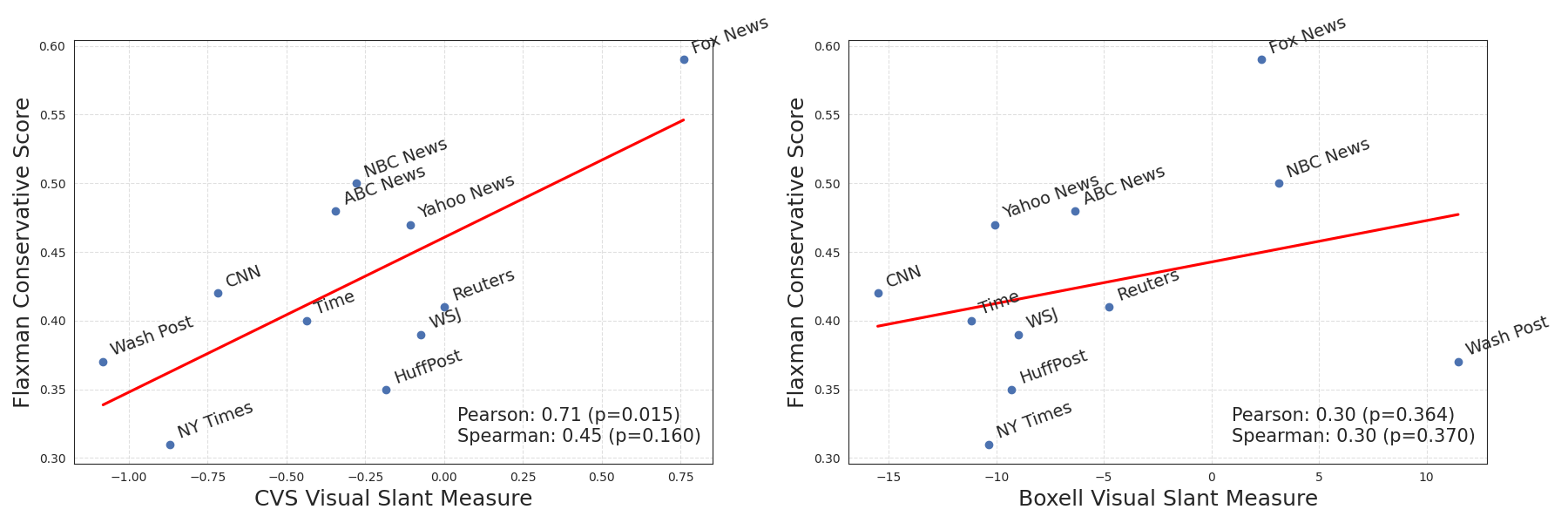}
    \caption{Comparison of visual slant from our PMCIG method (left) and \cite{boxell2021slanted} approach (right) against the conservative score from \cite{flaxman2016filter}.}
    \label{fig:Comparison_OtherPaper_vs_Boxell}
\end{figure}

The results clearly show that PMCIG outperforms \cite{boxell2021slanted} two-stage approach in measuring visual slant. Across all three benchmarks, PMCIG achieves higher Pearson and Spearman correlations, demonstrating stronger alignment with established conservative share measures. Additionally, PMCIG’s correlations are statistically significant, with lower p-values, whereas \cite{boxell2021slanted} approach often produces weaker and less reliable correlations. PMCIG also excels in ranking accuracy, as reflected in its consistently higher Spearman rank correlations, confirming that it orders news outlets' visual slant more accurately.

\subsection{Individual Politicians Results}\label{appssec:Politicians_Level}
\begin{figure}[H]
    \centering
    \includegraphics[width=1\linewidth]{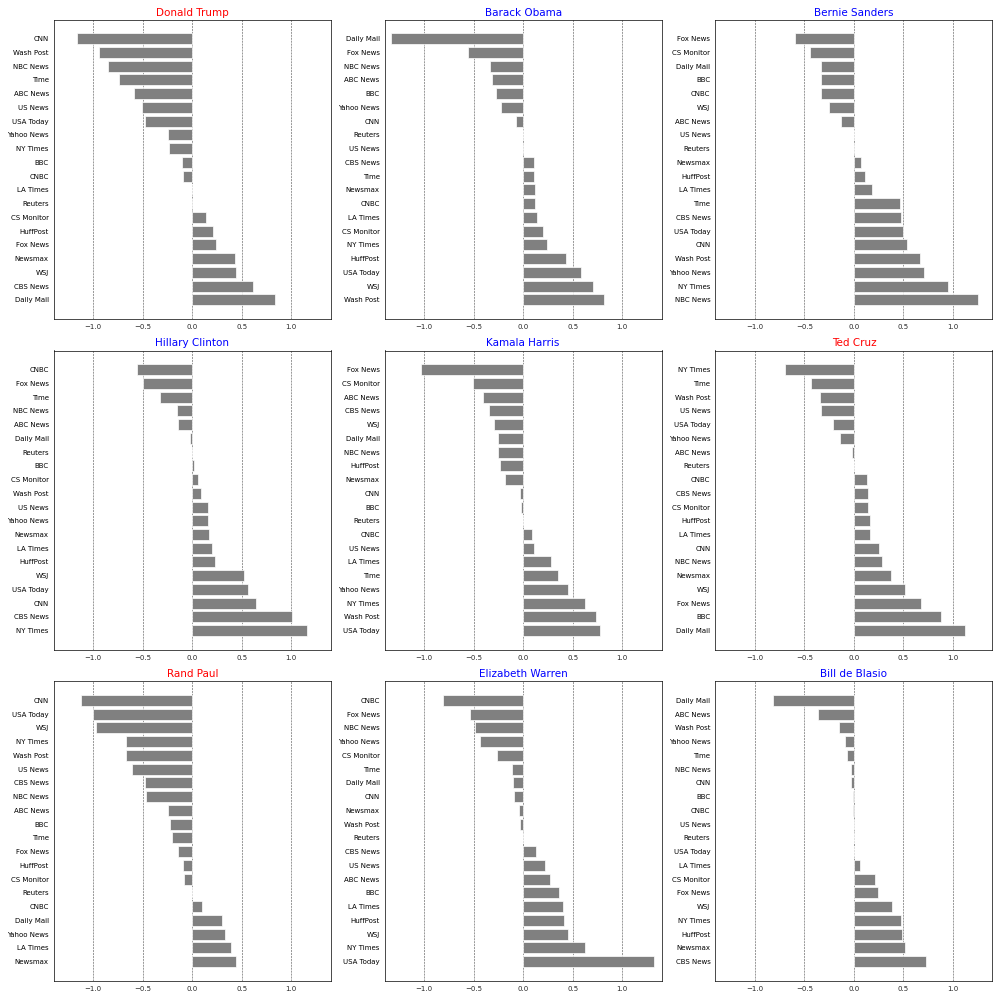}
    \caption{\small Bar plot showcasing the polarization across various Politicians, segmented by different News Outlets, Part I}
    \label{fig:enter-label}
\end{figure}

\begin{figure}[H]
    \centering
    \includegraphics[width=1\linewidth]{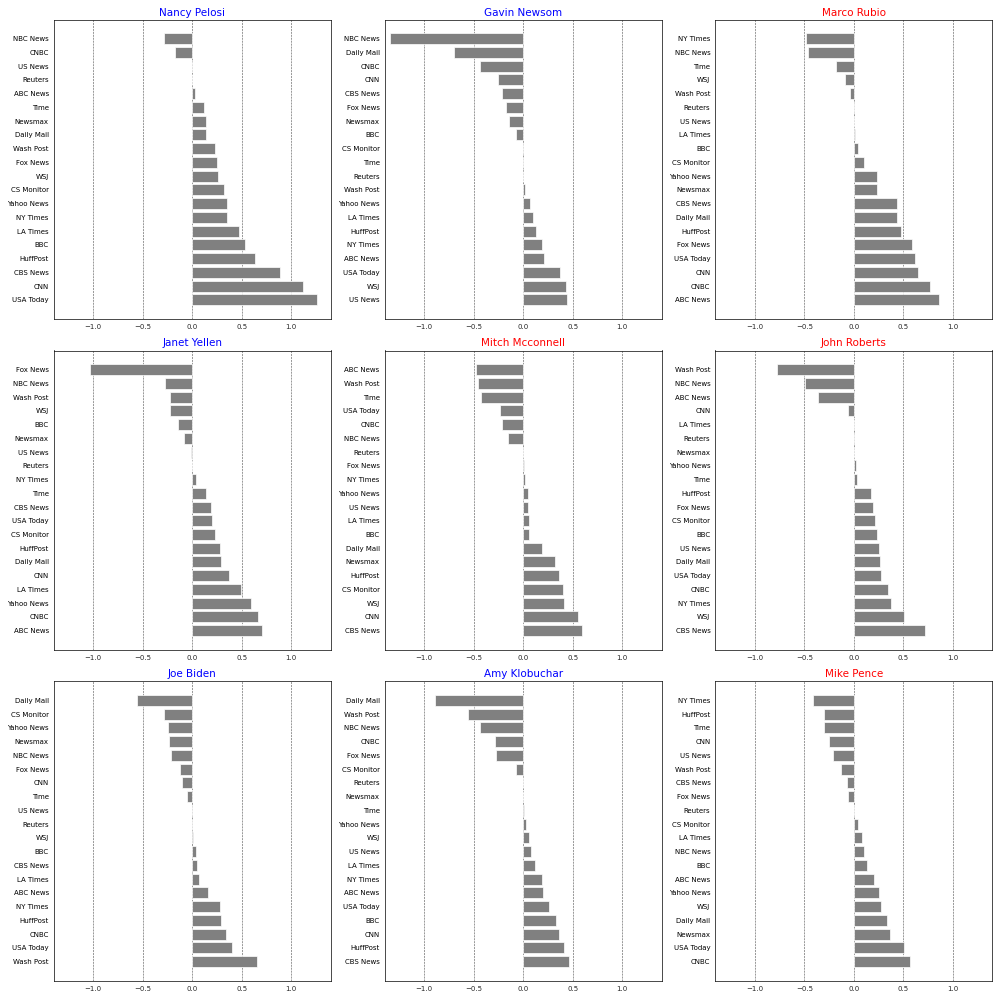}
    \caption{\small Bar plot showcasing the polarization across various Politicians, segmented by different News Outlets, Part II}
    \label{fig:enter-label}
\end{figure}

\begin{figure}[H]
    \centering
    \includegraphics[width=1\linewidth]{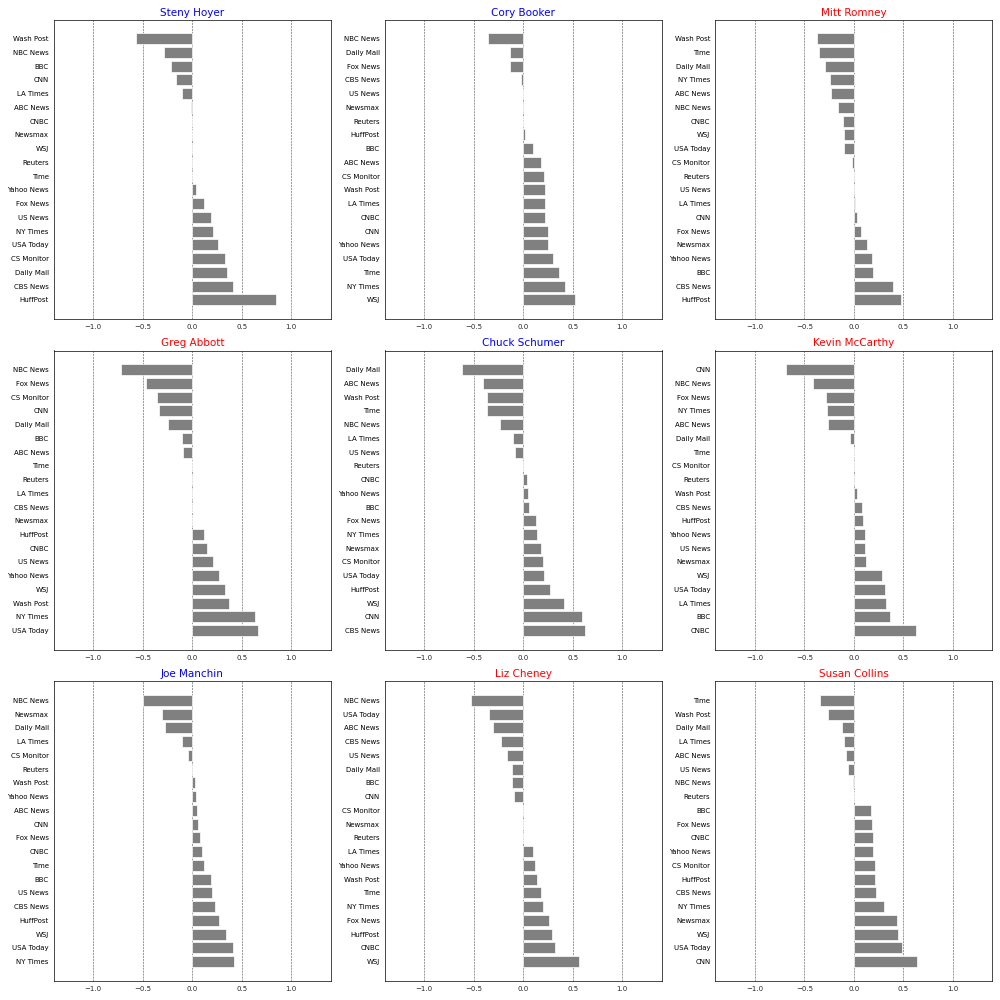}
    \caption{Bar plot showcasing the polarization across various Politicians, segmented by different News Outlets, Part III}
    \label{fig:enter-label}
\end{figure}

\subsection{Ideological Partisanship in States and the Polarization of Politicians}\label{appssec:polarized_politician}

In this section, we examine the relationship between state-level ideological partisanship and the visual polarization of politicians. Our goal is to understand whether politicians from states with strong partisan leanings experience greater or lesser media polarization compared to those from more ideologically mixed states. To measure ideological partisanship at the state level, we rely on the presidential vote margin in the 2016 election—specifically, the difference between the percentage of votes received by the politician’s party and the opposing party. This vote difference serves as a proxy for how politically secure or competitive a politician’s home state is. We then regress each politician’s OVP from Equation \eqref{equ:OVP} on this vote difference to assess whether electoral environment influences how divisive a politician’s visual representation is across media outlets. Figure \ref{fig:OVP_Vote} presents the results of this analysis, revealing a statistically significant positive relationship.

\begin{figure}[H]
    \centering
    \includegraphics[width=1\linewidth]{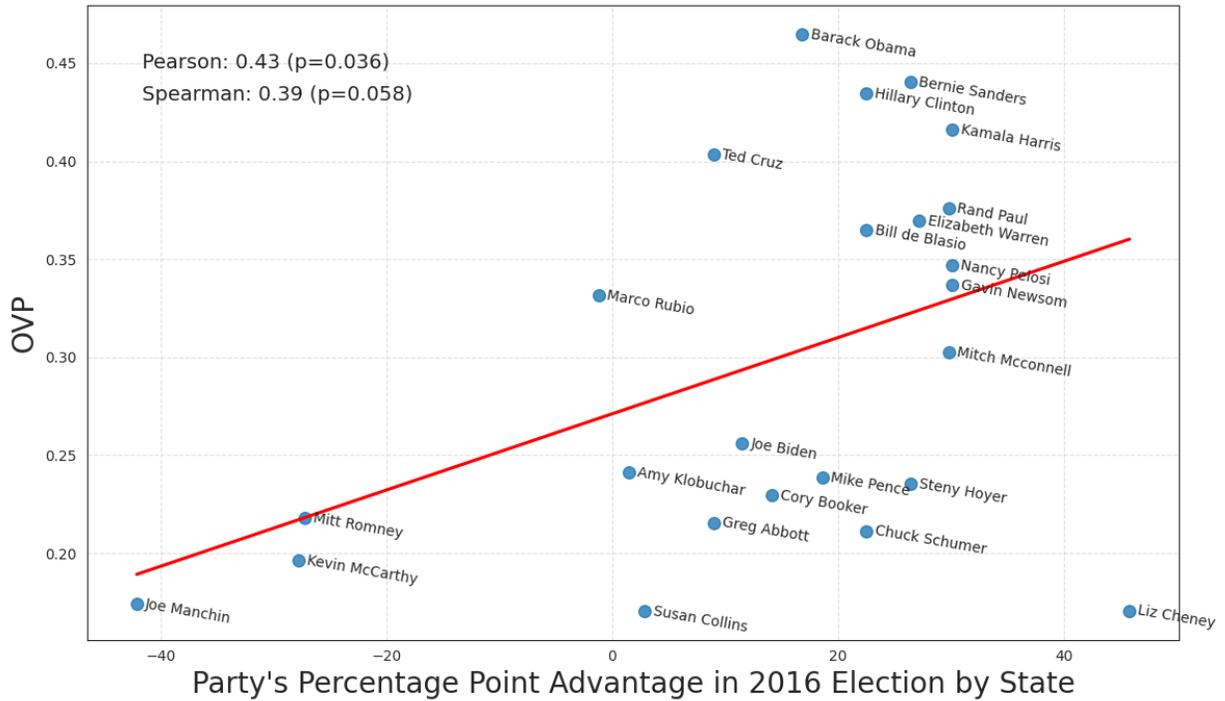}
    \caption{\small Relationship between state partisanship and politician's \textit{Overall Visual Polarization}}
    \label{fig:OVP_Vote1}
\end{figure}

This finding aligns with the idea that politicians from safe states enjoy greater ideological freedom, enabling them to take stronger partisan stances without the need to appeal to a broad electorate. Consequently, they may attract more partisan media portrayals, reinforcing their image as highly divisive figures. For instance, Bernie Sanders (Vermont) and Ted Cruz (Texas), both from states with strong partisan identities, exhibit high OVP values, likely reflecting their strong ideological positions and the way they are framed by different media outlets. In contrast, Joe Manchin (West Virginia) and Susan Collins (Maine), who represent states where their party is in the minority, display lower OVP values, suggesting that politicians from ideologically mixed states receive more moderate media portrayals.

Overall, these results suggest that state-level ideological partisanship influences how politicians are visually portrayed in the media, with those from politically homogeneous states being more polarizing figures.

\end{appendices}

\end{document}